\newcommand{\tikzxmark}{%
\tikz[scale=0.23] {
    \draw[line width=0.7,line cap=round] (0,0) to [bend left=6] (1,1);
    \draw[line width=0.7,line cap=round] (0.2,0.95) to [bend right=3] (0.8,0.05);
}}
\newtheorem{theorem}{Theorem}
\renewcommand{\div}{\mathrm{div}}
\newcommand{\grad}{\mathrm{grad}\,}
\newcommand{\tr}{\text{tr}}
\newcommand{\R}{{\mathbb R}}
\newcommand{\D}{{\mathbb D}}
\newcommand{\mS}{{\mathbb S}}
\newcommand{\e}{\varepsilon} 
\newcommand{\m}{\mathbf{m}}
\newcommand{\calP}{{\mathcal P}}
\newcommand{\calF}{{\mathcal F}}
\newcommand{\rmL}{{\mathrm L}}
\newcommand{\dd}{\mathrm{d}}
\newcommand{\Min}{\min\limits_}
\newcommand{\mc}{\mathcal}
\newcommand{\Let}{\coloneqq}
\newcommand{\be}{\begin{equation}}
\newcommand{\ee}{\end{equation}}
\newcommand{\opt}{^\star}
\newcommand{\mbb}{\mathbb}
\def\longequals{\mathbin{=\kern-2pt=}}
\newtheorem{definition}[theorem]{Definition}
\newtheorem{remark}[theorem]{Remark}
\newtheorem{lemma}[theorem]{Lemma}
\newtheorem{proposition}[theorem]{Proposition}
\numberwithin{equation}{section}
\newcommand{\beq}{\begin{equation}}
\newcommand{\eeq}{\end{equation}}
\setlist{nosep} 
\title{Dynamic Flows on Curved Space Generated by Labeled Data}
\author{
Xinru Hua$^1$
\and
Truyen Nguyen$^2$\and
Tam Le$^{3}$\And
Jose Blanchet$^1$\and
Viet Anh Nguyen$^4$
\affiliations
$^1$ Stanford University\\
$^2$The University of Akron\\
$^3$The Institute of Statistical Mathematics / RIKEN AIP\\
$^4$Chinese University of Hong Kong\\
}
\begin{document}

\maketitle

\begin{abstract}
The scarcity of labeled data is a long-standing challenge for many machine learning tasks. We propose our gradient flow method to leverage the existing dataset (i.e., source) to generate new samples that are close to the dataset of interest (i.e., target). We lift both datasets to the space of probability distributions on the feature-Gaussian manifold, and then develop a gradient flow method that minimizes the maximum mean discrepancy loss. To perform the gradient flow of distributions on the curved feature-Gaussian space, we unravel the Riemannian structure of the space and compute explicitly the  Riemannian gradient of the loss function induced by the optimal transport metric. For practical applications, we also propose a discretized flow, and provide conditional results guaranteeing the global convergence of the flow to the optimum. We illustrate the results of our proposed gradient flow method on several real-world datasets and show our method can improve the accuracy of classification models in transfer learning settings.
\end{abstract}


\section{Introduction}
A major challenge in many data science applications is the scarcity of labeled data. Data augmentation methods have been studied in the literature; see for example, the noise injection methods~\cite{8628917}, generative models~\cite{yi2019generative}, and~\cite{shorten2019survey} for a survey.
We consider a setting where one domain has only a few labeled samples for each class, so we cannot train a well-performing classifier with the available data. To alleviate the data scarcity problem in this setting, we propose to enrich the target dataset by generating additional labeled samples. Using generative models is not possible in our setting because they usually require more than a few samples for each class to learn and generate high-quality new samples~\cite{gao2018low}. In our work, we choose a source dataset with extensive labeled data and then flow the labeled data to the target dataset. Precisely, we introduce a novel data augmentation methodology based on a gradient flow approach that minimizes the maximum mean discrepancy ($\mathrm{MMD}$) distance between the target and the augmented data. Therefore, when minimizing the $\mathrm{MMD}$ distance, we are able to obtain an efficient scheme which generates additional labeled data from the target distribution. Our scheme is model-independent and can be applied to any datasets regardless of the number of classes or dimensionality.

Mathematically, we consider a feature space $\mc X = \R^m$ and a \textit{categorical} label space $\mc Y$. We have a source domain dataset consisting of $N$ samples $(x_i, y_i) \in \mc X \times \mc Y$ for $i = 1, \ldots, N$, and a target domain dataset of $M$ samples $(\bar x_j, \bar y_j) \in \mc X \times \mc Y$ for $j = 1, \ldots, M(M\ll N)$. The ultimate goal of this paper is to generate new samples in the target domain, and we aim to generate new samples whose distribution is as close as possible to the distribution that governs the target domain.

We here introduce a gradient flow method
\cite{arbelKSG19,pmlr-v89-mroueh19a}
to synthesize new, unseen data samples.
Gradient flow is a continuous flow along the path where a considered loss function decreases its value. Because we have extensive source domain samples, it is possible to flow each source sample towards the target data while minimizing the loss function. The terminal product of the flow will be new samples that can sufficiently approximate the distribution of the target domain. Thus, gradient flow is an approach to synthesize new target domain samples, and is a complement to data augmentation methods, like adding random noise.

Unfortunately, formulating a gradient flow algorithm for labeled data with categorical set $\mc Y$ is problematic. Indeed, there is no clear metric structure on $\mc Y$ in order to define the topological neighborhood, this in turn leads to the difficulty of forming the gradients with respect to the categorical component. To overcome this difficulty, we lift each individual label to a richer structure. For example, a label such as ``0" is replaced by a mean vector and a covariance matrix based on the whole distribution of the information associated to this particular label. Then it will be much more natural to apply gradient flow algorithms in the space of the lifted representation. A gradient flow on the dataset space with this idea was recently proposed in~\cite{AlvarezGF20} by leveraging a new notion of distance between datasets in~\cite{NEURIPS2020_f52a7b26,NIPS2017_0070d23b,DBLP:journals/corr/abs-1803-10081}. The main idea behind this approach is to reparametrize the categorical space $\mc Y$ using the conditional distribution of the features, which is assumed to be Gaussian, and then construct a gradient flow on the feature-Gaussian space. Nevertheless, the theoretical analysis in~\cite{AlvarezGF20} focuses solely on the gradients with respect to the feature with no treatment of the flow with respect to the Gaussian component. In fact, the space of Gaussian distributions is not a (flat) vector space, and extracting gradient information depends on the choice of the metric imposed on this Gaussian space. On the other hand, our method computes the full gradient with respect to the Gaussian component (the mean and covariance matrix component that correspond to the label component).


\begin{table*}[!ht]
  \centering 
    {
  \centering 
\resizebox{\textwidth}{!}{%
  \begin{tabular}{llcc}
    \toprule
    Paper     & Dataset & \begin{tabular}{@{}c@{}}On curved Riemannian space\end{tabular} &\begin{tabular}{@{}c@{}}Gradient has mean\\ and covariance component\end{tabular} \\
    \midrule
    \cite{AlvarezGF20} & synthetic, *NIST, and CIFAR10 & \tikzxmark  & \tikzxmark  \\
    \cite{arbelKSG19}  & synthetic  &\tikzxmark  & \tikzxmark  \\
    Ours & synthetic, *NIST, and TinyImageNet       & \checkmark & \checkmark  \\
    \bottomrule
  \end{tabular}
  }
  }
  \caption{\label{table:comparison} To the best of our knowledge, we provide the \emph{first} results on the full gradient of the features and lifted labels on a curved Riemannian space. We also conduct numerical experiments on the highest-dimension real-world datasets.}
\end{table*}

Our gradient flows minimize the $\mathrm{MMD}$ loss function, thus it belongs to the family of $\mathrm{MMD}$ gradient flows that was pioneered in~\cite{pmlr-v89-mroueh19a,arbelKSG19}, and further extended in~\cite{Mroueh21}. The $\mathrm{MMD}$ function compares two distributions via their kernel mean embeddings on a \textit{flat} reproducing kernel Hilbert space (RKHS). In contrast to the Kullback-Leibler divergence flow, the $\mathrm{MMD}$ flow can employ a sample approximation for the target distribution~\cite{NIPS2017_17ed8abe}. Further, the squared $\mathrm{MMD}$ possesses unbiased sample gradients~\cite{binkowski2018demystifying,Bellemare2017}. However, existing literature on $\mathrm{MMD}$ flows focus on distributions on flat Euclidean spaces. The flow developed in our paper here is for distributions on a \textit{curved} Riemannian feature-Gaussian space. Moreover, our approach is distinctive from the flow in~\cite{AlvarezGF20} because we impose a specific metric on the Gaussian component, and we compute explicitly the Riemannian gradient of the $\mathrm{MMD}$ loss function with respect to this metric to formulate our flow. Table~\ref{table:comparison} compares our work with two recent papers on gradient flow in theory and numerical experiments.

Recently, generative models~\cite{rezende2016one,pmlr-v139-wang21l} are successful in generating image samples from given distributions. The most important difference with our method is that generative models learn a prior distribution from massive data that are similar to the target data and generate new target samples conditioning on the prior distribution~\cite{wang2020generalizing,gao2018low}. Comparatively, our algorithm can transfer between two non-similar and non-related distributions, for example, from random Gaussian noise to MNIST in Supplementary~\ref{appendix:random_gauss_mnist}. Another benefit of our method is that we provide conditions for global convergence of our algorithms in Section~\ref{sec:MMD-flow}, whereas generative models or more specific, generative adversarial networks (GANs), currently do not guarantee global convergence~\cite{wiatrak2019stabilizing}.

The application of our gradient flow is few-shot transfer learning, where we want to train classifiers with limited labeled data in the target domain. The numerical experiments in Section~\ref{sec:numerical} demonstrate that our gradient flows can effectively augment the target data, and thus can significantly boost the accuracy in the classification task in the few-shot learning setting. Moreover, we run experiments on Tiny ImageNet datasets to highlight that our algorithm is scalable to higher-dimensional image data, that is higher than recent gradient flow works~\cite{AlvarezGF20,fan2022generating}. We also compare our method with~\cite{AlvarezGF20}, mixup method~\cite{zhang2017mixup}, and traditional data augmentation methods. Results in Supplementary~\ref{sec:comparison}--\ref{sec:comparison_transformation} show that our method improves the accuracy in transfer learning more than these methods.

Some related works study nonparametric gradient flows using the $2$-Wasserstein distance between distributions~\cite{ref:ambrosio2008gradient,JKO,Otto2001,ref:villani2008optimal,Santambrogio15,Santambrogio17,frogner2020approximate}, but only for distributions on Euclidean spaces and a different distance. Nonparametric gradient flows with other metrics include  Sliced-Wasserstein Descent~\cite{pmlr-v97-liutkus19a}, Stein Descent~\cite{NIPS2017_17ed8abe,NIPS2016_b3ba8f1b}, and Sobolev Descent~\cite{pmlr-v89-mroueh19a}. However, they also only consider distributions on Euclidean spaces. In particular, \cite{NIPS2017_17ed8abe} introduce Riemannian structures for the Stein geometry on flat spaces, while ours is for an optimal transport metric on a curved space. Parametric flows for training GANs are studied in~\cite{NEURIPS2018_a1afc58c,Chen2018,Arbel2020Kernelized,Mroueh21}. 

\paragraph{Contributions.} We study a gradient flow approach to synthesize new labeled samples related to the target domain. To construct this flow, we consider the space of probability distributions on the feature-Gaussian manifold, and we are metrizing this space with an optimal transport distance. We summarize the contributions of this paper as follows.
\begin{itemize}[leftmargin=4mm, topsep=2pt,itemsep=1pt, partopsep=1pt, parsep=1pt]
    \item We study in details the Riemannian structure of the feature-Gaussian manifold in Section~\ref{sec:riemannian}, as well as the Riemannian structure of the space of 
    probability 
    measures supported on this manifold in Supplementary~\ref{sec:geometry-PZ}.
    \item We consider a gradient flow that minimizes the squared $\mathrm{MMD}$ loss function to the target distribution. We describe explicitly the (Riemannian) gradient of the squared $\mathrm{MMD}$ in Lemma~\ref{grad-formula}, and we provide a partial differential equation describing the evolution of the gradient flow that follows the (Riemannian) steepest descent direction.
    \item We propose two discretized schemes to approximate the continuous gradient flow equation in Section~\ref{sec:euler} and~\ref{sec:flow}. We provide conditions guaranteeing the global convergence of our gradient flows to the optimum 
    in both 
    schemes. 
    \item In Section~\ref{sec:numerical}, we demonstrate numerical results with our method on real-world image datasets. We show that our method can generate high-fidelity images and improve the classification accuracy in transfer learning settings.
\end{itemize}

\paragraph{Notations.} 
We use $\mS^n$ to denote the set of $n\times n$ real and symmetric matrices, and
 $\mS^n_{++} \subset \mS^n$ consists of all positive definite matrices. 
 For  $A\in \mS^n$, $\tr(A) \Let \sum_{i} A_{ii}$.
 We use $\langle \cdot, \cdot\rangle$ and $\|\cdot\|_2$ to denote the standard inner product and norm on Euclidean spaces.
 Let $\calP(X)$  be the collection of all probability distributions with finite second moment on  metric space $X$. 
 If $\varphi: X\to Y$ is a Borel map and $\nu \in \calP(X)$, then the push-forward $\varphi_{\#}\nu$ is the distribution on $Y$ given by 
 $\varphi_{\#}\nu(E) = \nu(\varphi^{-1}(E))$ for all Borel sets $E\subset Y$. 
 For a function $f$ of the continuous time variable $t$, $f_t$ denotes the value of $f$ at  $t$ while $\partial_t f$ denotes the standard derivative of $f$ w.r.t. $t$.
Also, $\delta_z$ denotes the Dirac delta measure at $z$.

All proofs are provided in the Supplementary material.


\section{Labeled Data Synthesis via Gradient Flows of Lifted Distributions}\label{sec:sampling}
\label{sec:dataset}

In this section, we describe our approach to synthesize target domain samples using gradient flows. A holistic view of our method is presented in Fig.~\ref{fig:workflow}.

\begin{figure*}[h!]
\centering
   \includegraphics[width=16cm]{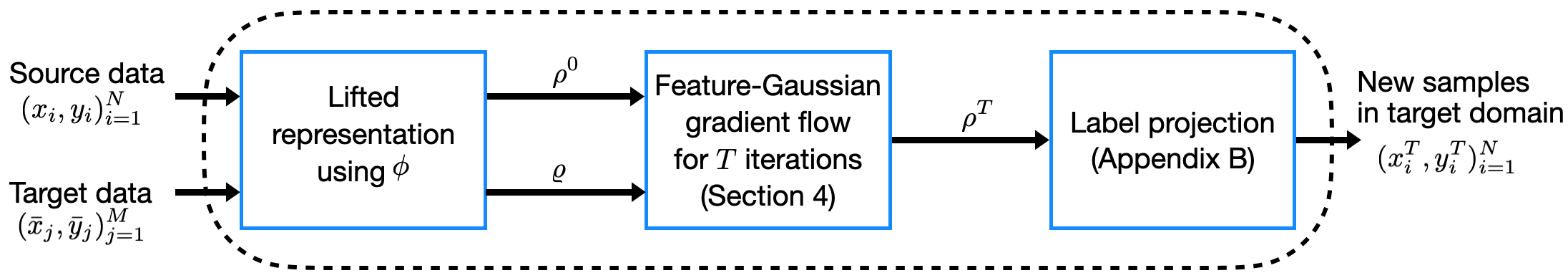}   \caption{Schematic view of our approach: The source and target datasets are first lifted to distributions $\rho^0$ and $\varrho$ on the feature-Gaussian space (left box). We then run a gradient flow for $T$ iterations to get a terminal distribution $\rho^T$ (middle). Atoms of $\rho^T$ are projected to get labeled target samples (right).}
\label{fig:workflow} 
\end{figure*}
In the first step, we would need to lift the feature-label space $\mc X \times \mc Y$ to a higher dimensional space where a metric can be defined. Consider momentarily the source data samples $(x_i, y_i)_{i=1}^N$. Notice that this data can be represented as an empirical distribution $\nu$ on $\mc X \times \mc Y$. More precisely, we have $\nu = N^{-1}\sum_{i=1}^N \delta_{(x_i, y_i)}$. As $\mc Y$ is discrete, the law of conditional probabilities allows us to dis-integrate $\nu$ into the conditional distributions $\nu_y$ of $X | Y = y$ satisfying
$\nu(E\times F) = \int_{F} \nu_y(E) \nu^2(\dd y)$ for every $E\subset \mc X$ and $F\subset \mc Y$, where  $\nu^2\Let N^{-1}\sum_{i=1}^N \delta_{y_i} $ is the second marginal of $\nu$ 
\cite[Theorem~5.3.1]{ref:ambrosio2008gradient}.
The lifting procedure is obtained by employing a pre-determined mapping $\phi: \mc X \to \R^n$, and any categorical value $y \in \mc Y$ can now be represented as an $n$-dimensional distribution $\phi_{\#} \nu_y$. Using this lifting, any source sample $(x_i, y_i) \in \mc X \times \mc Y$ is lifted to a point $(x_i, \phi_{\#} \nu_{y_i}) \in \mc X \times \mc P(\R^n)$ and the source dataset is representable as an empirical distribution of the form $N^{-1}\sum_{i=1}^N \delta_{(x_i, \phi_{\#} \nu_{y_i})}$.

The lifted representation of a categorical value $y \in \mc Y$ as an $n$-dimensional distribution $\phi_{\#} \nu_y \in \mc P(\R^n)$ is advantageous because $\mc P(\R^n)$ is metrizable, for example, using the $2$-Wasserstein distance. The downside
is that $\mc P(\R^n)$ is infinite dimensional, and encoding the datasets in this lifted representation is not efficient. To resolve this issue, we assume that $\phi_{\#} \nu_{y}$ is Gaussian for all $y \in \mc Y$, and thus any distribution $\phi_{\#} \nu_y$ can be characterized by the mean vector $\mu_y \in \R^n$ and covariance matrix $\Sigma_y \in \mS_{++}^n$ defined as
  $  \mu_y = \int_{\mc X} \phi(x) \nu_y(\dd x) 
    \mbox{ and }
    \Sigma_y = \int_{\mc X} \big[ \phi(x)-\mu_y\big] \big[\phi(x) - \mu_y\big]^\top \nu_y(\dd x )$
for all $y \in \mc Y$, where $^\top$ denotes the transposition of a vector. In real-world settings, the conditional moments of $\phi(X)|Y$ are sufficiently different for $y \neq y'$, and thus the representations using $(\mu_y, \Sigma_y)$ will unlikely lead to any loss of label information. With this lifting, the source data thus can be represented as an empirical distribution $\rho^0$ on $\R^m \times \R^n \times \mS_{++}^n$ via
     $   \rho^0 = N^{-1} \sum\nolimits_{i=1}^N \delta_{(x_i, \mu_{y_i}, \Sigma_{y_i})}$.
By an analogous construction to compute $\bar \mu_{y}$ and $\bar \Sigma_y$ using the target data, the target domain data $(\bar x_j, \bar y_j)_{j=1}^M$ can be represented as another empirical distribution
$
    \varrho = M^{-1} \sum\nolimits_{j=1}^M \delta_{(\bar x_j, \bar \mu_{\bar y_j}, \bar \Sigma_{\bar y_j})}$.
Let us denote the shorthand $\mc Z = \R^m \times \R^n \times \mS_{++}^n$, then $\rho^0$ and $\varrho$ are both probability measures on $\mc Z$. We refer to $\rho^0$ and $\varrho$ as the feature-Gaussian representations of the source and target datasets.

We now consider the gradient flow associated with the optimization problem
\begin{equation*}
    \Min{\rho \in \mc P(\mc Z)}~\Big\{ \mc F(\rho) \Let \frac12 \mathrm{MMD}(\rho, \varrho)^2 \Big\}
\end{equation*}
    under the initialization $\rho = \rho^0$. The objective function $\mc F(\rho)$ quantifies how far an incumbent solution $\rho$ is from the target distribution $\varrho$, measured using the $\mathrm{MMD}$ distance. In Sections~\ref{sec:riemannian} and~\ref{sec:MMD-flow}, we will provide the necessary ingredients to construct this flow.
    
    
   Suppose that after $T$ iterations of the discretized gradient flow algorithm, we obtain a distribution $\rho^T \in \mc P(\mc Z)$ that is sufficiently close to $\varrho$, i.e., $\mc F(\rho^T)$ is close to zero. Then we can recover new target labels by projecting the samples of the distribution $\rho^T$ to the locations on $\mc X \times \mc Y$. This projection can be computed efficiently by solving a linear optimization problem, as discussed in Supplementary~\ref{sec:label-projection}. 

\begin{remark}[Reduction of dimensions] If $m = n$ and $\phi$ is the identity map, then our lifting procedure coincides with that proposed in~\cite{NEURIPS2020_f52a7b26}. However, a large $n$ is redundant, especially when the cardinality of $\mc Y$ is low. If $n \ll m$, then $\phi$ offers significant reduction in the number of dimensions, and will speed up the gradient flow algorithms.
\end{remark}

\begin{remark}[Generalization to elliptical distributions]
Our framework can be extended to the symmetric elliptical distributions because the Bures distance for elliptical distributions admits the same closed-form as for the Gaussian  distributions 
\cite{ref:gelbrich1990formula}. In this paper, we use $\phi$ as the t-SNE embedding. According to \cite{vanDerMaaten2008}, t-SNE’s low-dimensional embedded space forms a Student-t distribution, which is an elliptical distribution. 
\end{remark}




\section[Riemannian Geometry of the Spaces]{Riemannian Geometry of $\mc Z$ and $\mc P(\mc Z)$}
\label{sec:riemannian}
If we opt to measure the distance between two Gaussian distributions using the 2-Wasserstein metric, then this choice would induce a natural distance $d$ on the space $\mc Z = \R^m \times \R^n \times \mS_{++}^n$ prescribed as
\begin{align}\label{ground-cost}
&d\big((x_1, \mu_1, \Sigma_1), (x_2, \mu_2, \Sigma_2)\big) \nonumber\\
&\hspace{1em}\Let  \big[\| x_1 - x_2\|_2^2  + \| \mu_1 - \mu_2\|_2^2 + \mathbb{B}(\Sigma_1,\Sigma_2)^2\big]^{\frac12},
\end{align}
where $\mathbb{B}$ is the Bures metric on $\mS^n_{++}$ given by
$\mathbb{B}(\Sigma_1,\Sigma_2) \Let \big[  \tr(\Sigma_1 + \Sigma_2 - 2  [\Sigma_1^{\frac12} \Sigma_2 \Sigma_1^{\frac12}]^{\frac12})\big]^{\frac12}$.

As $\mathbb{B}$ is a metric on $\mS^n_+$ \cite[p.167]{ref:bhatia2019on}, $d$  is hence a product metric on $\mc Z$. In this section, first, we study the non-Euclidean geometry of  $\mc Z$ 
under the ground metric $d$. Second, we investigate the Riemannian structure on $\mc P(\mc Z)$, the space of all distributions supported on $\mc Z$ and with finite second moment, that is induced  by the optimal transport distance.
These Riemannian structures are required to define the Riemannian gradients 
of any loss functionals on  $\mc P(\mc Z)$, and will remain important in our development of the gradient flow for the squared $\mathrm{MMD}$.


The space $\mc Z$ is not a linear vector space. In this section, we reveal the Riemannian structure on $\mc Z$ associated to the ground 
metric $d$. As we shall see, $\mc Z$ is a curved space as its geodesics are
not straight lines and involve solutions to the Lyapunov equation. For any  positive definite matrix $\Sigma \in \mS^n_{++}$ and any symmetric matrix  $V\in \mS^n$,  the Lyapunov equation
\begin{equation}\label{Lyapunov}
H\Sigma + \Sigma H = V
\end{equation}
has a unique solution $H\in \mS^n$ \cite[Theorem~VII.2.1]{Bhatia97}.
Let $\rmL_\Sigma[V]$ denote this unique solution $H$.

The space $\mS^n_{++}$ is a Riemannian manifold 
with the Bures metric $\mathbb{B}$ as the associated distance function,  see \cite[Proposition A]{ref:takatsu2011wasserstein}. Since $\mc Z$ is the product of two Euclidean spaces 
and $\mS^n_{++}$, this gives rise to the following geometric structure for $\mc Z$. 
\begin{proposition}[Geometry of $\mc Z$]\label{prop:Riemmanian}
The space $\mc Z$ is a Riemannian manifold: at each point $z = (x, \mu, \Sigma)\in \mc Z$, the  tangent space is  
$\mathrm T_{z} \mc Z = \R^m \times \R^n \times \mS^n$  and  the Riemannian metric is
\begin{align}\label{R-metric-Z}
&\big\langle (w_1, v_1, V_1), (w_2, v_2, V_2) \big\rangle_{z}\notag\\
&\hspace{5em}\Let \langle w_1, w_2\rangle + \langle v_1, v_2\rangle + \langle V_1, V_2\rangle_\Sigma
\end{align}
for  two tangent vectors $(w_1,v_1,V_1)$ and $(w_2, v_2, V_2)$ in $\R^{m}\times\R^n\times \mS^n$,
where $\langle V_1, V_2\rangle_{\Sigma}   \Let \tr \Big(\rmL_\Sigma[V_1] \, \Sigma \, \rmL_\Sigma[V_2]\Big)$. 
Moreover, the distance function corresponding to this Riemannian metric coincides with the distance $d$ given by 
\eqref{ground-cost}.
\end{proposition}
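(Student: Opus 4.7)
The plan is to exploit the product structure $\mc Z = \R^m \times \R^n \times \mS^n_{++}$ and reduce the proposition to the known Riemannian geometry of the Bures manifold. First, I would verify that the bilinear form $\langle V_1, V_2\rangle_{\Sigma} = \tr(\rmL_\Sigma[V_1]\, \Sigma\, \rmL_\Sigma[V_2])$ is a genuine inner product on $\mS^n$ depending smoothly on $\Sigma$. Bilinearity follows from linearity of $V \mapsto \rmL_\Sigma[V]$, itself a consequence of the uniqueness asserted after~\eqref{Lyapunov}. Symmetry follows from cyclic invariance of the trace together with the symmetry of $\rmL_\Sigma[V_i]$ and $\Sigma$. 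Positive-definiteness is checked by setting $H = \rmL_\Sigma[V]$ and writing $\langle V, V\rangle_\Sigma = \tr(H \Sigma H) = \| \Sigma^{1/2} H \|_F^2$, which vanishes only when $H = 0$ and hence only when $V = H\Sigma + \Sigma H = 0$. Smoothness of $\Sigma \mapsto \langle \cdot,\cdot\rangle_\Sigma$ is inherited from smoothness of the Sylvester inverse operator.

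Second, since the first two factors of $\mc Z$ are flat Euclidean spaces with tangent spaces $\R^m$ and $\R^n$ equipped with the standard inner product, and the third factor $(\mS^n_{++}, \langle \cdot, \cdot\rangle_\Sigma)$ was just shown to be a Riemannian manifold, I would invoke the standard product-manifold construction to conclude that $\mc Z$ is a Riemannian manifold whose tangent space at $z = (x,\mu,\Sigma)$ is $\R^m \times \R^n \times \mS^n$, with orthogonal direct-sum inner product given by~\eqref{R-metric-Z}.

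Finally, to identify the induced distance with~\eqref{ground-cost}, I would apply the classical product-manifold fact that the squared Riemannian speed of a curve $\gamma(t) = (\gamma_1(t), \gamma_2(t), \gamma_3(t))$ equals the sum of the squared component speeds; minimizing arclength over curves joining two fixed points decouples across factors, yielding the $\ell^2$-combination of the component distances. The Euclidean factors contribute $\|x_1 - x_2\|_2$ and $\|\mu_1 - \mu_2\|_2$, and the Bures factor contributes $\mathbb{B}(\Sigma_1, \Sigma_2)$ via \cite[Proposition~A]{ref:takatsu2011wasserstein}. The real content sits in that cited result, which identifies geodesics on $\mS^n_{++}$ (realized by McCann displacement interpolation between centered Gaussians) and integrates the Riemannian speed to recover the Bures distance; this is the step I would not attempt to reprove. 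Once it is accepted, combining the three factor distances yields exactly~\eqref{ground-cost}, completing the proof.
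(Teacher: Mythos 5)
Your proposal is correct and follows essentially the same route as the paper: both reduce the statement to the cited result of Takatsu (\cite[Proposition~A]{ref:takatsu2011wasserstein}) identifying $(\mS^n_{++},\langle\cdot,\cdot\rangle_\Sigma)$ as a Riemannian manifold whose induced distance is $\mathbb{B}$, and then invoke the standard product-manifold construction so that the squared distance on $\mc Z$ is the sum of the squared factor distances, which is exactly \eqref{ground-cost}. The extra verification you supply that $\langle\cdot,\cdot\rangle_\Sigma$ is a genuine inner product is sound (and slightly more careful than the paper, which absorbs this into the citation), but it does not change the substance of the argument.
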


As $\mc Z$ is a product Riemannian manifold, any geodesic in $\mc Z$ is of the form 
$ (\theta,\gamma,\Gamma)$ with $\theta$, $\gamma$ being the Euclidean geodesics (straight lines) and $\Gamma$ being a geodesic in the Riemannian manifold $\mS^n_{++}$. 
More precisely, for each $\Sigma\in \mS^n_{++}$ and  each tangent vector $V\in \mS^n$, the geodesic in the manifold $\mS^n_{++}$ emanating from $\Sigma$ with direction $V$ is given by
\begin{equation}\label{geodesic}
\Gamma(t) =  ( I + t \rmL_\Sigma[V] ) \Sigma  ( I + t \rmL_\Sigma[V] )\quad \mbox{for } t\in J^*,
\end{equation}
where  $ J^*$ is the open interval about the origin given by $
 J^* = \{t\in \R: \,  I + t \rmL_\Sigma[V]  \in  \mS^n_{++} \}$~\cite{ref:malago2018wasserstein}.
As a consequence, for  each point $(x,\mu,\Sigma)\in \mc Z$ and each tangent vector  $(w,v,V)\in \R^{m}\times\R^n\times \mS^n$, the Riemannian exponential map  in $\mc Z$ for  $t\in J^*$ is given by
  \begin{equation}\label{exponential_map}
  \exp_{(x,\mu,\Sigma)}(t (w, v, V)) \Let (\theta(t),\gamma(t),\Gamma(t)).
  \end{equation}
where $\theta(t) \Let x + tw$,  $\gamma(t) \Let \mu + t v$,  and 
$\Gamma(t)$ is defined by \eqref{geodesic}. By definition, $t\mapsto \exp_{(x,\mu,\Sigma)}(t (w, v, V))$ 
 is the geodesic emanating from $(x,\mu, \Sigma)$ with direction $(w,v, V)$.

Given the Riemannian metric \eqref{R-metric-Z}, one can define the corresponding notion of gradient and divergence~\cite{ref:lee2003manifold}.
For a differentiable function $\varphi: \mc Z \to \R$, its gradient $\widetilde\nabla_d \varphi(z)$ w.r.t.~the metric $d$ defined by \eqref{ground-cost} is the unique element in the tangent space $\R^m \times \R^n \times \mS^n$ satisfying 
\begin{equation*}
 \big\langle \widetilde\nabla_d \varphi(z), (w,v,V) \big\rangle_{z} = D \varphi_z(w,v, V)
\end{equation*}
for all $(w,v, V) \in \R^{m}\times\R^n \times \mS^n$ with $ D \varphi_z(w,v,V)$ denoting the standard directional derivative of $\varphi$ at $z$ in the direction $(w,v,V)$.
By exploiting the special form of $\langle \cdot, \cdot\rangle_z$ in \eqref{R-metric-Z}, we can compute $\widetilde\nabla_d \varphi(z)$ explicitly:
\begin{lemma}[Gradients]\label{gradient-formula} For a differentiable function $\varphi: \mc Z \to \R$, we have for $z=(x,\mu,\Sigma)$ that
\begin{equation}\label{metric-gradient-dataset}
  \resizebox{0.99\linewidth}{!}{$
\widetilde\nabla_d \varphi(z)
=\big(\nabla_x\varphi(z),~\nabla_\mu\varphi(z),~2[\nabla_\Sigma \varphi(z)] \Sigma + 2\Sigma [ \nabla_\Sigma \varphi(z)]\big),$}
\end{equation}
where $(\nabla_x, \nabla_\mu, \nabla_\Sigma)$ are the standard (Euclidean) gradients of the respective components.
\end{lemma}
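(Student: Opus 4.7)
The plan is to verify the gradient formula \eqref{metric-gradient-dataset} directly from the defining identity $\langle \widetilde\nabla_d\varphi(z), (w,v,V)\rangle_z = D\varphi_z(w,v,V)$ for every tangent vector $(w,v,V) \in \R^m \times \R^n \times \mS^n$. Since the Riemannian metric \eqref{R-metric-Z} splits as a sum over the three components, and the standard directional derivative splits as $D\varphi_z(w,v,V) = \langle \nabla_x\varphi(z), w\rangle + \langle \nabla_\mu\varphi(z), v\rangle + \langle \nabla_\Sigma\varphi(z), V\rangle$, the identity decouples into three independent conditions. The first two (on the $\R^m$ and $\R^n$ components) are immediate since the metric restricted to those factors is the standard Euclidean inner product; matching terms forces the $x$- and $\mu$-components of $\widetilde\nabla_d\varphi(z)$ to be $\nabla_x\varphi(z)$ and $\nabla_\mu\varphi(z)$ respectively.

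The nontrivial task is the third component: find the unique $C \in \mS^n$ such that $\langle C, V\rangle_\Sigma = \langle \nabla_\Sigma\varphi(z), V\rangle$ for every $V \in \mS^n$. Writing $G \Let \nabla_\Sigma\varphi(z)$ (which is symmetric, since $\varphi$ is defined on the open set $\mS^n_{++} \subset \mS^n$), I would make the ansatz $C = 2G\Sigma + 2\Sigma G$ and verify it using the defining property of the Lyapunov operator $\rmL_\Sigma$. Specifically, the symmetric matrix $H = 2G$ satisfies $H\Sigma + \Sigma H = 2G\Sigma + 2\Sigma G = C$, so by uniqueness of the solution to \eqref{Lyapunov} we obtain $\rmL_\Sigma[C] = 2G$.

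Substituting this into $\langle C, V\rangle_\Sigma = \tr(\rmL_\Sigma[C] \Sigma \rmL_\Sigma[V])$ gives $2\tr(G \Sigma K)$, where $K \Let \rmL_\Sigma[V]$. On the other side, since $V = K\Sigma + \Sigma K$, expanding $\tr(GV) = \tr(GK\Sigma) + \tr(G\Sigma K)$ and applying cyclicity of the trace together with the symmetry of $G$, $\Sigma$, and $K$ yields $\tr(GK\Sigma) = \tr(\Sigma G K) = \tr((\Sigma G K)^\top) = \tr(K G \Sigma) = \tr(G\Sigma K)$, so $\tr(GV) = 2\tr(G\Sigma K)$, matching $\langle C, V\rangle_\Sigma$. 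Uniqueness of $C$ then follows from the non-degeneracy of the metric $\langle \cdot, \cdot\rangle_\Sigma$ on $\mS^n$, which is guaranteed by $\Sigma \succ 0$.

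The main subtlety, rather than an obstacle, is keeping track of symmetry: one must confirm that (i) the standard gradient $\nabla_\Sigma\varphi(z)$ is symmetric so that the formula $2G\Sigma + 2\Sigma G$ indeed lies in $\mS^n$ as required, and (ii) $\rmL_\Sigma$ returns a symmetric matrix when fed a symmetric input, which follows from the unique-solvability statement of \eqref{Lyapunov} cited just above the lemma. With these observations in hand, the equality $\langle \widetilde\nabla_d\varphi(z),(w,v,V)\rangle_z = D\varphi_z(w,v,V)$ holds for arbitrary $(w,v,V)$, and by uniqueness the three components identified above constitute the Riemannian gradient, establishing \eqref{metric-gradient-dataset}.
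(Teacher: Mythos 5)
Your proposal is correct and follows essentially the same route as the paper: both reduce the problem via the product structure of the metric to identifying the third component through the Lyapunov operator, with the identity $\rmL_\Sigma[2G\Sigma+2\Sigma G]=2G$ doing the work. The only cosmetic difference is that the paper invokes the identity $\langle X,Y\rangle_\Sigma=\tfrac12\langle\rmL_\Sigma[X],Y\rangle$ directly (it is recorded in Proposition~\ref{takatsu2011} in the supplementary), whereas you re-derive it by the cyclic-trace computation $\tr(GV)=2\tr(G\Sigma\,\rmL_\Sigma[V])$.
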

The last component in formula \eqref{metric-gradient-dataset} for $\widetilde\nabla_d \varphi$ reflects the curved geometry of $\mc Z$, and  can be interpreted as the Riemannian gradient of 
the function $\Sigma\mapsto \varphi(x,\mu,\Sigma)$ w.r.t.~the Bures distance $\mathbb{B}$.


For a continuous vector field $\Phi: \mc Z \to  \R^{m}\times\R^n\times \mS^n$ and a  distribution  $\rho\in\calP(\mc Z)$, the divergence  $\div_d (\rho \Phi)$ is the signed measure on $\mc Z$ satisfying the integration by parts formula
\begin{equation*}
 \int_{\mc Z }  \varphi(z)  \,\div_d (\rho \Phi)(\dd z) = - \int_{\mc Z } \langle \Phi(z), \widetilde\nabla_d \varphi(z) \rangle_z \,  \, \rho(\dd z)
\end{equation*}
for every differentiable function $\varphi: \mc Z \to \R$ with compact support. In case $\rho$ has a density w.r.t. the Riemannian volume form on  $\mc Z$, then this definition coincides with the standard divergence operator induced by  Riemannian metric \eqref{R-metric-Z}. 
The optimal transport distance and its induced Riemannian metric on the space $\calP(\mc Z)$ are relegated to Supplementary~\ref{sec:geometry-PZ}


\section{Gradient Flow for Maximum Mean Discrepancy}\label{sec:MMD-flow}
As $\calP(\mc Z)$ is an infinite dimensional curved space, many machine learning methods based on finite dimensional or linear structure cannot be directly applied 
to this  manifold. To circumvent this problem, we use a positive definite kernel to map $\calP(\mc Z)$ to a RKHS
and then perform our analysis on it. Let $k$ be a positive definite kernel on $\mc Z$, and let  $\mathcal H$ be the RKHS generated by $k$. The inner product on $\mathcal H$ is denoted by $\langle\cdot,\cdot \rangle_\mathcal H$, and the kernel mean embedding $\rho \in \calP(\mc Z) \longmapsto \m_\rho(\cdot) \in \mathcal H$ is given by 
$\m_\rho(z) \Let \int_{\mc Z} k(z,w) \, \rho(\dd w)$ for $z$ in $\mc Z$.
The $\mathrm{MMD}$ \cite{JMLR:v13:gretton12a} between  $\rho \in \mc P(\mc Z)$ and the target $\varrho$  is defined as the maximum of the mean difference between the two distributions over all test functions in the unit ball of $\mathcal H$ (see Supplementary A.3). Moreover, it can be expressed by 
$\mathrm{MMD}(\rho, \varrho) = \|\m_{\rho} -\m_{\varrho}\|_{\mathcal H}$. When $k$ is characteristic, 
the  kernel mean embedding $\rho\mapsto \m_\rho$ is injective and therefore, $\mathrm{MMD}(\rho, \varrho)=0$ if and only if  $\rho=\varrho$.

Consider the loss function $\calF[\rho] \Let  \frac12 \mathrm{MMD}(\rho, \varrho)^2
=\frac12\|\m_{\rho} -\m_{\varrho}\|_{\mathcal H}^2$.
As explained in the introduction, there are three advantages of  $\mathrm{MMD}$  over  Kullback-Leibler divergence: 
its associated gradient flow can employ a sample approximation for the target distribution,  the input distribution $\rho$ does not have to be absolutely continuous w.r.t.~the target distribution $\varrho$, and 
the squared $\mathrm{MMD}$ possesses unbiased sample gradients.
For each $\rho$, the Riemannian gradient $\grad   \calF[\rho]$ is defined as 
the unique element in $\mathrm T_\rho\calP(\mc Z)$ satisfying 
   $ g_\rho(\grad   \calF[\rho], \zeta) =  \frac{\dd}{\dd t}\Big|_{t=0} \calF[\rho_t]$
for every  differentiable  curve  $t\mapsto \rho_t\in \calP(\mc Z)$ passing through $\rho$ at $t=0$ with tangent vector
$\partial_t \rho_t|_{t=0}=\zeta$.
By using the Riemannian metric tensor~\eqref{rie-metric}, we can compute  explicitly  this  gradient.

 \begin{lemma}[Gradient formula]\label{grad-formula}
The Riemannian gradient of $\mc F$ satisfies
 $
 \grad  \calF[\rho] = -\div_d \left(\rho \widetilde\nabla_d [\m_\rho- \m_\varrho]\right). 
$
\end{lemma}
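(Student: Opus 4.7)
The plan is to compute $\frac{\dd}{\dd t}\big|_{t=0}\calF[\rho_t]$ for an arbitrary differentiable curve $t\mapsto \rho_t$ with $\rho_0=\rho$ and $\partial_t\rho_t|_{t=0}=\zeta$, then rewrite the result in the form $g_\rho(-\div_d(\rho\widetilde\nabla_d[\m_\rho-\m_\varrho]),\zeta)$ and invoke uniqueness of the Riemannian gradient. First I would expand $\calF[\rho_t]=\frac12\|\m_{\rho_t}-\m_\varrho\|_{\mathcal H}^2$ and differentiate in $t$, which yields
\begin{equation*}
\tfrac{\dd}{\dd t}\calF[\rho_t]\big|_{t=0}=\langle \m_\rho-\m_\varrho,\partial_t\m_{\rho_t}|_{t=0}\rangle_{\mathcal H}.
\end{equation*}
Since $\m_{\rho_t}(z)=\int_{\mc Z} k(z,w)\,\rho_t(\dd w)$, interchanging the time derivative with the integral gives $\partial_t\m_{\rho_t}|_{t=0}=\int_{\mc Z} k(\cdot,w)\,\zeta(\dd w)$. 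Applying the reproducing property $\langle \m_\rho-\m_\varrho,k(\cdot,w)\rangle_{\mathcal H}=\m_\rho(w)-\m_\varrho(w)$ inside the integral produces
\begin{equation*}
\tfrac{\dd}{\dd t}\calF[\rho_t]\big|_{t=0}=\int_{\mc Z}[\m_\rho(z)-\m_\varrho(z)]\,\zeta(\dd z).
\end{equation*}

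Next I would represent the tangent vector $\zeta$ by a vector field $\Phi$ via the continuity equation $\zeta+\div_d(\rho\Phi)=0$, which is exactly the identification of $\mathrm T_\rho\calP(\mc Z)$ with vector fields that is furnished by the optimal transport Riemannian structure recalled in Supplementary~\ref{sec:geometry-PZ}. Substituting and applying the integration by parts formula in the definition of $\div_d$ from Section~\ref{sec:riemannian} with test function $\varphi=\m_\rho-\m_\varrho$ yields
\begin{equation*}
\tfrac{\dd}{\dd t}\calF[\rho_t]\big|_{t=0}=-\!\int_{\mc Z}[\m_\rho-\m_\varrho]\div_d(\rho\Phi)(\dd z)=\int_{\mc Z}\big\langle \Phi(z),\widetilde\nabla_d[\m_\rho-\m_\varrho](z)\big\rangle_z\,\rho(\dd z).
\end{equation*}

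Finally, I would recognize the right-hand side as the Riemannian inner product $g_\rho(\xi,\zeta)$ where $\xi$ is the tangent vector represented by the vector field $\widetilde\nabla_d[\m_\rho-\m_\varrho]$, i.e., $\xi=-\div_d(\rho\widetilde\nabla_d[\m_\rho-\m_\varrho])$. Matching with the defining identity $g_\rho(\grad\calF[\rho],\zeta)=\frac{\dd}{\dd t}\big|_{t=0}\calF[\rho_t]$ for every admissible $\zeta$ and using uniqueness of the Riemannian gradient then gives the claimed formula.

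The main technical obstacle is justifying the two interchanges used above: differentiating $\m_{\rho_t}$ under the integral sign (which requires enough regularity of the kernel $k$ so that $\partial_t\m_{\rho_t}$ is a bona fide element of $\mathcal H$ and the pairing with $\m_\rho-\m_\varrho$ is well defined), and applying the integration by parts identity to $\varphi=\m_\rho-\m_\varrho$, which is not compactly supported; this is handled by assuming $k$ and its gradient satisfy the usual integrability/growth conditions so that $\widetilde\nabla_d[\m_\rho-\m_\varrho]$ lies in the Hilbert space of $\rho$-square-integrable vector fields that underlies $\mathrm T_\rho\calP(\mc Z)$, and then approximating by compactly supported test functions. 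Once these technicalities are in place, the computation above is formal and clean.
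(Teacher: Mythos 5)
Your proposal is correct and follows essentially the same route as the paper's proof: differentiate $\calF$ along a curve, represent the tangent vector as $-\div_d(\rho\,\widetilde\nabla_d\varphi)$, integrate by parts via the definition of $\div_d$, and match the result against the metric tensor $g_\rho$. The only (cosmetic) difference is that you collapse the derivative to $\int[\m_\rho-\m_\varrho]\,\zeta(\dd z)$ via the reproducing property in $\mathcal H$, whereas the paper expands $\mathrm{MMD}^2$ as a double integral of $k$ and integrates by parts term by term; both land on the same identity $\int_{\mc Z}\langle\widetilde\nabla_d[\m_\rho-\m_\varrho],\widetilde\nabla_d\varphi\rangle_z\,\rho(\dd z)$.
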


The Riemannian gradient $ \grad  \calF$ on $\calP(\mc Z)$ depends not only on the gradient operator $\widetilde\nabla_d$ but also on the divergence
operator.
Using Lemma~\ref{grad-formula}, we  can rewrite the gradient flow equation $
 \partial_t \rho_t = -\grad  \calF[\rho_t]$ explicitly as
\begin{equation}\label{eq:gradient-flow}
\partial_t  \rho_t = \div_d \big(\rho_t \widetilde\nabla_d [\m_{\rho_t} - \m_\varrho ]\big)\quad\mbox{for} \quad t\geq 0.
\end{equation}

The next result exhibits the rate at which  $\calF$ decreases its value along the  flow.
\begin{proposition}[Rate of decrease] \label{decrease-rate}
Along the gradient flow $t\mapsto \rho_t\in \calP(\mc Z)$ given by \eqref{eq:gradient-flow}, we have 
\[
\frac{\dd}{\dd t} \calF[\rho_t] = -  \int_{\mc Z} \big\| \widetilde\nabla_d [ \m_{\rho_t} - \m_\varrho]\big\|_z^2 \, \rho_t(\dd z)\quad \mbox{for}\quad t\geq 0.
\]
\end{proposition}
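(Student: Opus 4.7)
The plan is to differentiate $\mc F[\rho_t]=\tfrac12\|\m_{\rho_t}-\m_\varrho\|_{\mathcal H}^2$ directly in the RKHS, then substitute the flow equation \eqref{eq:gradient-flow} and invoke the integration-by-parts property that defines $\div_d$. Concretely, I would first use the chain rule in $\mathcal H$ to write
\[
\frac{\dd}{\dd t}\calF[\rho_t] \;=\; \big\langle \m_{\rho_t}-\m_\varrho,\; \partial_t \m_{\rho_t}\big\rangle_{\mathcal H}.
\]
Next, since $\m_{\rho_t}(z)=\int_{\mc Z} k(z,w)\,\rho_t(\dd w)$ depends linearly on $\rho_t$, the time derivative passes inside the integral and yields $\partial_t \m_{\rho_t}(z)=\int k(z,w)\,\partial_t\rho_t(\dd w)$, interpreted against the signed measure $\partial_t\rho_t$. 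The reproducing property, applied to the signed measure $\partial_t\rho_t$, then collapses the $\mathcal H$-inner product to an integral on $\mc Z$:
\[
\big\langle \m_{\rho_t}-\m_\varrho,\; \partial_t \m_{\rho_t}\big\rangle_{\mathcal H} \;=\; \int_{\mc Z} \big(\m_{\rho_t}(z)-\m_\varrho(z)\big)\,\partial_t\rho_t(\dd z).
\]

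Now I would plug in the flow equation $\partial_t\rho_t=\div_d\big(\rho_t\,\widetilde\nabla_d[\m_{\rho_t}-\m_\varrho]\big)$ and use the integration-by-parts formula that defines the divergence on $\mc Z$, namely
\[
\int_{\mc Z}\varphi(z)\,\div_d(\rho\Phi)(\dd z)=-\int_{\mc Z}\langle \Phi(z),\widetilde\nabla_d\varphi(z)\rangle_z\,\rho(\dd z),
\]
with $\varphi=\m_{\rho_t}-\m_\varrho$ and $\Phi=\widetilde\nabla_d[\m_{\rho_t}-\m_\varrho]$. This gives
\[
\frac{\dd}{\dd t}\calF[\rho_t]=-\int_{\mc Z}\big\langle \widetilde\nabla_d[\m_{\rho_t}-\m_\varrho],\,\widetilde\nabla_d[\m_{\rho_t}-\m_\varrho]\big\rangle_z\,\rho_t(\dd z),
\]
which is exactly $-\int_{\mc Z}\|\widetilde\nabla_d[\m_{\rho_t}-\m_\varrho]\|_z^2\,\rho_t(\dd z)$, as desired.

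The main technical obstacle is the use of the integration-by-parts identity, since that identity is stated for differentiable test functions $\varphi$ with compact support, whereas $\m_{\rho_t}-\m_\varrho$ is a function on all of $\mc Z$. I would handle this by imposing (or invoking earlier assumptions on) sufficient regularity and decay of the kernel $k$ so that $\m_\rho\in\mathcal H$ is smooth with controlled behavior at infinity on each factor of $\mc Z=\R^m\times\R^n\times\mS^n_{++}$, and then approximating by cutoff functions: pick smooth cutoffs $\chi_R$ supported on compact subsets of $\mc Z$ with $\chi_R\uparrow 1$, apply the formula to $\chi_R(\m_{\rho_t}-\m_\varrho)$, and pass to the limit using dominated convergence together with the uniform $L^2(\rho_t)$ bound on $\widetilde\nabla_d[\m_{\rho_t}-\m_\varrho]$ inherited from the kernel's smoothness. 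A cleaner alternative, consistent with the Riemannian viewpoint of Section~\ref{sec:MMD-flow}, is to note that by the defining identity of $\grad\,\calF$ and the flow $\partial_t\rho_t=-\grad\,\calF[\rho_t]$ we have $\tfrac{\dd}{\dd t}\calF[\rho_t]=-g_{\rho_t}(\grad\,\calF[\rho_t],\grad\,\calF[\rho_t])$, and then use the explicit form of the metric $g_\rho$ (the Riemannian tensor \eqref{rie-metric} referenced in the supplementary) together with Lemma~\ref{grad-formula} to identify this squared norm with the stated integral; both routes reduce to the same integration-by-parts computation.
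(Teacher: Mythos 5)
Your proposal is correct and follows essentially the same route as the paper's proof: differentiate the squared $\mathrm{MMD}$ in time, substitute the flow equation \eqref{eq:gradient-flow}, and invoke the integration-by-parts identity defining $\div_d$ to produce $-\int_{\mc Z}\|\widetilde\nabla_d[\m_{\rho_t}-\m_\varrho]\|_z^2\,\rho_t(\dd z)$. The only cosmetic difference is that you apply the divergence identity once to $\varphi=\m_{\rho_t}-\m_\varrho$ directly, whereas the paper applies it to the kernel sections $z\mapsto k(z,w)$ and to $\m_\varrho$ separately and then reassembles via $\int\widetilde\nabla_d^1 k(\cdot,w)\,\rho_t(\dd w)=\widetilde\nabla_d\m_{\rho_t}$; the compact-support caveat you flag (and patch with cutoffs) is equally present, and left unaddressed, in the paper's own argument.
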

Proposition~\ref{decrease-rate} implies that  $\frac{\dd}{\dd t} \calF[\rho_t]=0$ if and only if  $\widetilde\nabla_d [ \m_{\rho_t} - \m_\varrho](z) = 0$ for
every $z$ in the support of the distribution $\rho_t$. Thus, the objective function will decrease whenever the gradient $\widetilde\nabla_d [ \m_{\rho_t} - \m_\varrho]$ is not 
identically zero.

\subsection{Riemannian Forward Euler Scheme} \label{sec:euler} 
We propose the Riemannian version of the forward Euler scheme to discretize continuous flow~\eqref{eq:gradient-flow}:
\begin{tcolorbox}[colback=white!5!white,colframe=black!75!black,top=-2pt,bottom=0pt]
\begin{equation}\label{k_iteration}
   \begin{aligned}
    &\rho^{\tau+1} = \exp(s_\tau \Phi^\tau)_{\#}\rho^\tau \qquad\\ &\mbox{with}~~\Phi^\tau \Let - \widetilde\nabla_d [\m_{\rho^\tau} -\m_\varrho],
   \end{aligned}
\end{equation}
\end{tcolorbox}
where $s_\tau>0$ is the step size. Here, for a vector field $\Phi = (\Phi_1, \Phi_2,\Phi_3): \mc Z\to \R^{m}\times\R^n\times\mS^n$ and  for $\e\geq 0$,  $\exp(\e\Phi): \mc Z\to \mc Z$ is the Riemannian exponential map induced by \eqref{exponential_map}, i.e., for $z= (x,\mu, \Sigma)\in \mc Z$:

\begin{align*}
\exp_z(\e\Phi(z)) 
\hspace{-0.3em}
=
\hspace{-0.3em}
\begin{pmatrix}
x + \e \Phi_1(z) \\ \mu +\e \Phi_2(z) \\ ( I + \e \rmL_\Sigma[\Phi_3(z)] ) \Sigma  ( I + \e \rmL_\Sigma[\Phi_3(z)] )
\end{pmatrix}.
\end{align*}

Notice in the above equation that the input $z$ affects simultaneously the bases of the exponential map $\exp_{z}$ as well as the direction $\Phi(z)$. This map is the $\e$-perturbation of the identity map along geodesics with directions $\Phi$. When $\rho^\tau = N^{-1} \sum_{i=1}^N \delta_{z^\tau_i}$ is an empirical distribution,
scheme~\eqref{k_iteration} flows each particle $z^\tau_i$ to the new position $z^{\tau+1}_i = \exp_{z^\tau_i}(s_\tau\Phi(z^\tau_i))$.
The next lemma shows that  $\Phi^\tau$  is the steepest descent direction for $\calF$ w.r.t.~the exponential map among all directions in the space 
$\mathbb{L}^2(\rho^\tau)$, which is the collection of all vector fields $\Phi$ on $\mc Z$ satisfying $\|\Phi\|_{\mathbb{L}^2(\rho^\tau)}^2 \Let 
\int_{\mc Z} \|\Phi(z)\|_z^2 \rho^\tau(\mathrm{d}z)<\infty$. 




\begin{lemma}[Steepest descent direction]\label{descent}
Fix a distribution $\rho^\tau \in \mc P(\mc Z)$. For any vector field $\Phi : \mc Z\to \R^{m}\times\R^n\times\mS^n$, we have 
\begin{equation*}
    \resizebox{\linewidth}{!}{$
\frac{\dd}{\dd\e}\Big|_{\e=0}  \calF[\exp(\e \Phi)_{\#}\rho^\tau] = \int_{\mc Z} \langle \widetilde\nabla_d [\m_{\rho^\tau} -\m_\varrho] (z), \Phi(z)\rangle_z ~ \rho^\tau(\dd z ).$}
\end{equation*}
If $\hat\Phi^\tau$ is the unit vector field (w.r.t.~the $\|\cdot\|_{\mathbb{L}^2(\rho^\tau)}$ norm) in the direction of $\Phi^\tau$ given in \eqref{k_iteration}, then  
\[
\frac{\dd}{\dd\e}\big|_{\e=0}  \calF[\exp(\e \hat\Phi^\tau )_{\#}\rho^\tau]
= - \| \widetilde\nabla_d [\m_{\rho^\tau} -\m_\varrho]\|_{\mathbb{L}^2(\rho^\tau)}
\]
and this is the fastest decay rate among all unit directions $\Phi$ in $\mathbb{L}^2(\rho^\tau)$.
     \end{lemma}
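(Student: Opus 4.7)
The plan is to differentiate $\calF$ along the curve $\e\mapsto\rho_\e\Let(T_\e)_\#\rho^\tau$, with $T_\e(z)\Let\exp_z(\e\Phi(z))$, directly from the RKHS form of $\mathrm{MMD}^2$, and then deduce the extremal statement from Cauchy--Schwarz in $\mathbb{L}^2(\rho^\tau)$. Expanding the squared norm in $\mathcal H$ and pushing each integral forward by $T_\e$ gives
\[
\calF[\rho_\e] = \tfrac12 \int\!\!\int k\bigl(T_\e(z), T_\e(z')\bigr)\,\rho^\tau(\dd z)\rho^\tau(\dd z') - \int \m_\varrho(T_\e(z))\,\rho^\tau(\dd z) + \tfrac12\|\m_\varrho\|_{\mathcal H}^2,
\]
and the last term is constant in $\e$. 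The key analytic input is that for every differentiable $f:\mc Z\to\R$,
\[
\tfrac{\dd}{\dd\e} f(T_\e(z))\big|_{\e=0} = \langle \widetilde\nabla_d f(z), \Phi(z)\rangle_z,
\]
which follows because, by \eqref{exponential_map}, $\e\mapsto T_\e(z)$ is the geodesic through $z$ with initial tangent $\Phi(z)$, and the defining relation of $\widetilde\nabla_d$ from Lemma~\ref{gradient-formula} identifies the standard directional derivative with the Riemannian inner product against $\widetilde\nabla_d f(z)$.

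Applying this identity to $f = k(\cdot, z')$ in each argument of the double integral -- the two contributions coincide by symmetry of $k$ -- and to $f = \m_\varrho$ in the single integral, and then pulling the gradient operator inside the integral defining $\m_{\rho^\tau}$, one arrives at
\[
\tfrac{\dd}{\dd\e}\calF[\rho_\e]\big|_{\e=0} = \int_{\mc Z}\bigl\langle \widetilde\nabla_d[\m_{\rho^\tau}-\m_\varrho](z),\,\Phi(z)\bigr\rangle_z\,\rho^\tau(\dd z),
\]
which is the first assertion. For the second claim, set $G\Let\widetilde\nabla_d[\m_{\rho^\tau}-\m_\varrho]$, so the first identity reads $\tfrac{\dd}{\dd\e}\calF[\rho_\e]|_{\e=0} = \langle G,\Phi\rangle_{\mathbb{L}^2(\rho^\tau)}$. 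Pointwise Cauchy--Schwarz in the metric $\langle\cdot,\cdot\rangle_z$ followed by integration yields $|\langle G,\Phi\rangle_{\mathbb{L}^2(\rho^\tau)}|\leq\|G\|_{\mathbb{L}^2(\rho^\tau)}\|\Phi\|_{\mathbb{L}^2(\rho^\tau)}$, so over the unit sphere $\|\Phi\|_{\mathbb{L}^2(\rho^\tau)}=1$ the infimum $-\|G\|_{\mathbb{L}^2(\rho^\tau)}$ is attained precisely at $\Phi = -G/\|G\|_{\mathbb{L}^2(\rho^\tau)} = \hat\Phi^\tau$ (recalling $\Phi^\tau = -G$), which delivers both the steepest-descent rate and its optimality.

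The main (and essentially only) technical obstacle is the interchange of $\partial_\e$ with the $\rho^\tau$-integrals in the first display, along with pulling $\widetilde\nabla_d$ inside the integral defining $\m_{\rho^\tau}$. Both reduce to standard dominated-convergence estimates: they are automatic when $\rho^\tau$ is an empirical measure (the setting used throughout the paper) and hold more generally under mild regularity and boundedness assumptions on the kernel $k$, its first derivatives, and the vector field $\Phi$.
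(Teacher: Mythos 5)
Your proposal is correct and follows essentially the same route as the paper's own proof: expand $\calF$ via the kernel closed form of $\mathrm{MMD}^2$ under the pushforward by $\exp(\e\Phi)$, differentiate at $\e=0$ using that $\e\mapsto\exp_z(\e\Phi(z))$ is a geodesic with initial tangent $\Phi(z)$ together with the defining relation of $\widetilde\nabla_d$, and conclude the optimality claim by Cauchy--Schwarz in $\mathbb{L}^2(\rho^\tau)$. Your remark on justifying the interchange of derivative and integral (trivial for empirical measures) is a point the paper leaves implicit, but otherwise the two arguments coincide.
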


 It follows from Lemma~\ref{descent} that the discrete scheme \eqref{k_iteration}  satisfies the Riemannian gradient descent property: if  $\widetilde\nabla_d [\m_{\rho^\tau} -\m_\varrho]$ is nonzero and if $s_\tau >0$ is chosen sufficiently small, then   $\calF[\rho^{\tau+1}] < \calF[\rho^\tau]$. 
 In Proposition 14 in the Supplementary, we quantify the amount of decrease of $\mc F$ at each iteration.
 Algorithm~\ref{alg:nonoise} implements the flow~\eqref{k_iteration} iteratively. Each iteration in Algorithm~\ref{alg:nonoise} has complexity $O(N(Nm + n^3))$, where $m$ is the feature’s dimension, $n$ is the reduced dimension ($n \ll m$), $N$ is the number of particles.
 
\begin{algorithm}[!tb]
	\caption{Discretized Gradient Flow Algorithm for Scheme~\eqref{k_iteration}}
	\label{alg:nonoise}
	\begin{algorithmic}[1]
		\STATE {\bfseries Input:} a source distribution $\rho^0 = N^{-1} \sum_{i=1}^N \delta_{z^0_i}$, a target distribution $\varrho = M^{-1} \sum_{j=1}^M \delta_{\bar z_j}$,  a number of iterations $T$,  a sequence of step sizes $s_\tau>0$ with $\tau=0,1,...,T$ and a kernel $k$
		\STATE {\bfseries Initialization:} Compute  $\bar\Psi(z)\hspace{-0.2em}
		=
		\hspace{-0.2em}
		M^{-1}\sum_{j=1}^M \widetilde\nabla_d^1 k(z,\bar z_j)$
		with $\widetilde\nabla_d^1 k(z,\bar z_j)$ is  $\widetilde\nabla_d$ of $z\mapsto k(z,\bar z_j)$\;
        \vspace{0.2em}
        \STATE \textbf{repeat for each $\tau = 0, \ldots, T-1$:}
        \STATE \hspace{5mm} Compute $\Psi^\tau(z)  =  N^{-1} \sum_{i=1}^N  \widetilde\nabla_d^1  k(z,z^{\tau}_i)$\;
            \STATE \hspace{5mm} \textbf{for} {$i = 1, \ldots, N$} \\
            \STATE \hspace{10mm} \textbf{ do }  $z^{\tau+1}_i \leftarrow  \exp_{z^\tau_i}\big(s_\tau (\bar\Psi -\Psi^\tau)(z^\tau_i)\big)$\\ \STATE \hspace{5mm} \textbf{end for}
		\STATE{\bfseries Output:} $\rho^T = N^{-1}\sum_{i=1}^N \delta_{z^T_i}$
	\end{algorithmic}
\end{algorithm}

\paragraph{Convergence.} We now study the (weak) convergence of the solution $\rho_t$ of the continuous gradient flow~\eqref{eq:gradient-flow}, as well as the discretized counterpart $\rho^\tau$ of 
flow~\eqref{k_iteration}, to the target distribution $\varrho$. When the kernel $k$ is characteristic, this convergence is equivalent to $\lim_{t\to \infty}\mathrm{MMD}(\rho_t, \varrho)=0$.
Because the objective function $\calF$ is not displacement convex \cite[Section~3.1]{arbelKSG19}, the convergent theory for gradient flows in \cite{ref:ambrosio2008gradient} does not apply even in the case of Euclidean spaces. In general, there is a possibility that $\mathrm{MMD}(\rho_t, \varrho)$ does not decrease to zero as $t\rightarrow\infty$. In view of  Proposition~\ref{decrease-rate}, this happens if
the solutions $\rho_t$ are trapped inside the set $
\big\{ \rho: \,  \int_{\mc Z} \big\| \widetilde\nabla_d [ \m_{\rho} - \m_{\varrho}]\big\|_z^2 \, \rho(\dd z) = 0\big\}$.
For each distribution $\rho$ on $\mc Z$, we define in Supplementary~\ref{appendix:MMD} a symmetric linear and positive operator $\mathbb{K}_\rho: \mathcal H \to \mathcal H$ with the property that  $\langle \mathbb{K}_\rho [ \m_{\rho} - \m_{\varrho}],   \m_{\rho} - \m_{\varrho}\rangle_{\mathcal H} = \int_{\mc Z} \big\| \widetilde\nabla_d [ \m_{\rho} - \m_{\varrho}]\big\|_z^2 \, \rho(\dd z)$ 
We further show in Proposition~\ref{convergence}  that $\rho_t$  globally converges 
in $\mathrm{MMD}$ if the minimum eigenvalue $\lambda_t$ of the operator $\mathbb{K}_{\rho_t}$ satisfies an integrability condition.





\subsection{Noisy Riemannian Forward Euler Scheme}
\label{sec:flow}

The analysis in Section~\ref{sec:euler} reveals that the gradient flows suffer from convergence issues if the residual $\m_{\rho_t} - \m_{\varrho}$ belongs to the null space of the operator $\mathbb{K}_{\rho_t}$. To resolve this, 
we employ graduated optimization  \cite{arbelKSG19,pmlr-v48-gulcehre16,GulcehreMVB17,pmlr-v48-hazanb16}
used for non-convex optimization in Euclidean spaces. Specifically, we modify algorithm \eqref{k_iteration} by injecting Gaussian noise into the exponential map at each iteration $\tau$ to obtain
\begin{tcolorbox}[colback=white!5!white,colframe=black!75!black, top=-6pt,bottom=-0.4pt]
\begin{align}
\hspace{-0.5em}&\rho^{\tau+1} = \exp(s_\tau \Phi^\tau)_{\#}\rho^{\tau, \beta_\tau} \quad \label{noisy-gradient-algo}\\
\hspace{-1em}&\text{with}\hspace{0.2em}
f^{\beta_\tau} \hspace{-0.35em}: \hspace{-0.3em}(z, u) \hspace{-0.05em}\mapsto \hspace{-0.05em} \exp_z(\beta_\tau u),
\,\,\,\hspace{-0.3em}
\rho^{\tau, \beta_\tau} 
\hspace{-0.3em}
\Let 
\hspace{-0.3em}
{f^{\beta_\tau}}_{\hspace{-0.4em}\#} (\rho^\tau \otimes g). \notag
\end{align}
\end{tcolorbox} 
Here $g$ is a Gaussian measure with distribution $\mc N_{\R^m} (0, 1) \otimes \mc N_{\R^n}(0, 1) \otimes \mc N_{\mS^n}(0, 1)$ on the tangent space and
$\mc N_{\mS^n}(0, 1)$ denotes an $n$-by-$n$ symmetric matrix whose upper triangular elements are i.i.d.~standard Gaussian random variables. 
When $\rho^\tau = N^{-1} \sum_{i=1}^N \delta_{z^\tau_i}$,  
scheme~\eqref{noisy-gradient-algo} flows each particle $z^\tau_i$ first to $z^{\tau,\beta_\tau}_i \Let \exp_{z^\tau_i}(\beta_\tau U)$ with noise $U\sim g$
and then to  $z^{\tau+1}_i = \exp_{z^{\tau,\beta_\tau}_i}(s_\tau\Phi(z^{\tau,\beta_\tau}_i))$.
Our next result 
extends Proposition~8 
in \cite{arbelKSG19} for the standard quadratic cost  on the Euclidean space  to  the  nonstandard cost function $d^2$  on the  \textit{curved} Riemannian manifold $\mc Z_{++}$. It demonstrates that  scheme \eqref{noisy-gradient-algo} achieves the global minimum of $\calF$ provided that $k$ is  a Lipschitz-gradient kernel and both the noise level $\beta_\tau$ and the step size $s_\tau$ are well controlled. The proof of Proposition~\ref{conv-noisy-gradient} is given in Supplementary~\ref{appendix:MMD} and relies on arguments that are different from  that of \cite{arbelKSG19}. 
\begin{proposition}[Objective value decay for noisy scheme] \label{conv-noisy-gradient}
Suppose  that $k$ is  a Lipschitz-gradient kernel\footnote{See Definition A.3 for the technical definition} with constant $L$, and  the noise level $\beta_\tau$ satisfies
\begin{equation}\label{noise-level-cond}
\lambda \beta_\tau^2 \calF[\rho^\tau]\leq \int_{\mc Z} \|\Phi^\tau(z)\|_z^2 \, \rho^{\tau, \beta_\tau}(\dd z)
\end{equation}
for some constant $\lambda >0$. Then for $\rho^{\tau+1}$ obtained from scheme \eqref{noisy-gradient-algo},
we have
\[
 \calF[\rho^{\tau+1}]  \leq  \calF[\rho^0]\,\exp \Big(-\lambda \sum \nolimits_{i=0}^\tau  [s_i\big(1- 2L s_i\big)\beta_i^2]\Big).
 \]
\end{proposition}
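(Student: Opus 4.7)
The plan is to reduce the claim to a one-step geometric contraction of the form
$\calF[\rho^{\tau+1}] \leq \calF[\rho^\tau]\bigl(1 - \lambda s_\tau(1-2Ls_\tau)\beta_\tau^2\bigr)$,
and then iterate, invoking the elementary inequality $1-x \leq e^{-x}$ on each factor of the resulting telescoping product. All of the technical work therefore concentrates on the single-step descent estimate
\[
\calF[\rho^{\tau+1}] \leq \calF[\rho^\tau] - s_\tau(1-2Ls_\tau) \int_\mc Z \|\Phi^\tau(z)\|_z^2 \, \rho^{\tau,\beta_\tau}(\dd z),
\]
after which the noise-level condition \eqref{noise-level-cond} immediately converts the integral into $\lambda\beta_\tau^2 \calF[\rho^\tau]$ and delivers the contraction.

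To obtain this descent estimate, I would split the transition $\rho^\tau \mapsto \rho^{\tau+1}$ into the noise stage $\rho^\tau \mapsto \rho^{\tau,\beta_\tau}$ and the deterministic Riemannian gradient stage $\rho^{\tau,\beta_\tau} \mapsto \rho^{\tau+1} = \exp(s_\tau\Phi^\tau)_\# \rho^{\tau,\beta_\tau}$. For the gradient stage, set $A(t) \Let \calF\bigl[\exp(t s_\tau \Phi^\tau)_\# \rho^{\tau,\beta_\tau}\bigr]$ on $t \in [0,1]$. Lemma~\ref{descent} applied at $\rho^{\tau,\beta_\tau}$, together with $\Phi^\tau = -\widetilde\nabla_d[\m_{\rho^{\tau,\beta_\tau}} - \m_\varrho]$, identifies $A'(0) = -s_\tau \int \|\Phi^\tau(z)\|_z^2 \, \rho^{\tau,\beta_\tau}(\dd z)$. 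The Lipschitz-gradient hypothesis on $k$, propagated through the explicit geodesic formula \eqref{geodesic}--\eqref{exponential_map}, should yield a uniform bound $A''(t) \leq 2L s_\tau^2 \int \|\Phi^\tau(z)\|_z^2 \, \rho^{\tau,\beta_\tau}(\dd z)$, and a second-order Taylor expansion at $t=0$ then gives
$\calF[\rho^{\tau+1}] - \calF[\rho^{\tau,\beta_\tau}] \leq -s_\tau(1-Ls_\tau) \int \|\Phi^\tau(z)\|_z^2 \, \rho^{\tau,\beta_\tau}(\dd z).$
For the noise stage, I would exploit that $g$ is centered and symmetric on the tangent space: a Taylor expansion of $\m_{\rho^{\tau,\beta_\tau}}$ around $\m_{\rho^\tau}$ in $\beta_\tau$ kills the first-order term by symmetry, leaving a quadratic remainder controlled by $L$ which, combined with the noise-level condition \eqref{noise-level-cond}, can be absorbed into the slack in the coefficient of $\int \|\Phi^\tau\|_z^2 \rho^{\tau,\beta_\tau}(\dd z)$. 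This converts the baseline from $\calF[\rho^{\tau,\beta_\tau}]$ to $\calF[\rho^\tau]$ at the expense of upgrading $(1-Ls_\tau)$ to $(1-2Ls_\tau)$.

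The main obstacle is the upper bound on $A''(t)$. Because $\mS^n_{++}$ equipped with the Bures metric is curved, geodesics in $\mc Z$ involve the Lyapunov operator $\rmL_\Sigma$ through \eqref{geodesic}, and the Riemannian Hessian of $\calF$ along such geodesics is not directly governed by the Euclidean Lipschitz constant of $\nabla k$. Rather than invoking the flat-space argument of \cite{arbelKSG19}, one must differentiate the mean embedding $z'\mapsto \m_{\sigma_t}(z')$ twice along the geodesic using the Riemannian inner product \eqref{R-metric-Z} and the gradient formula \eqref{metric-gradient-dataset}, carefully handle the cross terms produced by the Lyapunov operator $\rmL_\Sigma[\Phi^\tau_3]\Sigma$ entering \eqref{metric-gradient-dataset}, and verify that the geodesic remains in $\mS^n_{++}$ throughout $t\in[0,1]$ so that the Lipschitz-gradient constant $L$ applies uniformly. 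This is precisely where the curved-space analysis diverges from the Euclidean case and the Riemannian structure derived in Section~\ref{sec:riemannian} must be exploited most carefully.
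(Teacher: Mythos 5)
Your global skeleton matches the paper's: establish the one-step estimate $\calF[\rho^{\tau+1}]-\calF[\rho^\tau]\leq -s_\tau(1-2Ls_\tau)\int_{\mc Z}\|\Phi^\tau(z)\|_z^2\,\rho^{\tau,\beta_\tau}(\dd z)$, feed in \eqref{noise-level-cond} to get $\calF[\rho^{\tau+1}]\leq(1-a_\tau)\calF[\rho^\tau]$ with $a_\tau=\lambda s_\tau(1-2Ls_\tau)\beta_\tau^2$, iterate, and use $1-x\leq e^{-x}$. But your route to the one-step estimate has a genuine gap at its core: you misidentify the descent direction. In scheme \eqref{noisy-gradient-algo} the field $\Phi^\tau$ is still the one from \eqref{k_iteration}, namely $-\widetilde\nabla_d[\m_{\rho^\tau}-\m_\varrho]$, computed from the \emph{pre-noise} measure and merely evaluated at the noised particle positions (see Algorithm~\ref{alg:noisy}, where $\Psi^\tau$ is built from the $z^\tau_j$ before perturbation). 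You instead take $\Phi^\tau=-\widetilde\nabla_d[\m_{\rho^{\tau,\beta_\tau}}-\m_\varrho]$. This matters: Lemma~\ref{descent} applied at the base measure $\rho^{\tau,\beta_\tau}$ gives for your $A(t)$
\[
A'(0)=s_\tau\int_{\mc Z}\big\langle \widetilde\nabla_d[\m_{\rho^{\tau,\beta_\tau}}-\m_\varrho](z),\,\Phi^\tau(z)\big\rangle_z\,\rho^{\tau,\beta_\tau}(\dd z),
\]
a cross term between the residual gradient at the noised measure and the field built from the un-noised one; it equals $-s_\tau\|\Phi^\tau\|^2_{\mathbb{L}^2(\rho^{\tau,\beta_\tau})}$ only if $\widetilde\nabla_d\m_{\rho^{\tau,\beta_\tau}}=\widetilde\nabla_d\m_{\rho^\tau}$ on the support of $\rho^{\tau,\beta_\tau}$, which fails in general, and it need not even be negative. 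The paper never needs the first variation at $\rho^{\tau,\beta_\tau}$: it expands $\calF[\rho^{\tau+1}]-\calF[\rho^\tau]$ in one shot via the kernel double-integral form of the MMD (Lemma~\ref{closed-form}), subtracts $s_\tau I$ with $I\Let\int\langle\widetilde\nabla_d[\m_{\rho^\tau}-\m_\varrho](z),\Phi^\tau(z)\rangle_z\,\rho^{\tau,\beta_\tau}(\dd z)$, which equals $-\|\Phi^\tau\|^2_{\mathbb{L}^2(\rho^{\tau,\beta_\tau})}$ \emph{by definition} of $\Phi^\tau$, and controls the remainder by the Lipschitz-gradient condition \eqref{Lipschitz-cond} with displacement $s_\tau\Phi^\tau$ at the noised points, mirroring Proposition~\ref{quantified_decrease}.

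Your noise-stage bookkeeping also does not close. You propose that centering of $g$ kills the first-order term in $\calF[\rho^{\tau,\beta_\tau}]-\calF[\rho^\tau]$ and that the remaining quadratic term is ``absorbed into the slack,'' upgrading $(1-Ls_\tau)$ to $(1-2Ls_\tau)$. That is not where the factor $2L$ comes from: in the paper it arises entirely from the gradient step, with $L/2$ acting on the two displacement terms of the $\rho^{\tau,\beta_\tau}\otimes\rho^{\tau,\beta_\tau}$ double integral and $L$ on the single displaced argument of the $\rho^{\tau,\beta_\tau}\otimes\varrho$ integral, summing to $2Ls_\tau^2\|\Phi^\tau\|^2_{\mathbb{L}^2(\rho^{\tau,\beta_\tau})}$. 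More importantly, the additive penalty your decomposition incurs from the noise stage scales like $L\beta_\tau^2$ times the second moment of $g$ on the tangent space (a dimension-dependent constant), while the slack available is of order $Ls_\tau^2\|\Phi^\tau\|^2_{\mathbb{L}^2(\rho^{\tau,\beta_\tau})}$; condition \eqref{noise-level-cond} only lower-bounds the latter by $\lambda\beta_\tau^2\calF[\rho^\tau]$ times $Ls_\tau^2$, so nothing in the hypotheses forces the noise penalty to be dominated, and the extra term would survive into the final bound. By contrast, your worry about computing $A''$ along Bures geodesics is not a real obstacle: Definition~\ref{def:L-kernel} is already stated as a first-order Taylor-remainder bound along exponential maps and can be invoked directly without differentiating twice; the geodesic second-derivative computation is done once in Lemma~\ref{c1-imply-lipschitz} to verify that bounded-Hessian kernels satisfy it.
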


In particular, $\calF[\rho^{\tau}]$ tends to zero if the sequence $\sum_{i=0}^\tau s_i\big(1- 2L s_i\big)\beta_i^2$ goes to positive infinity. For an adaptive step size $s_\tau\leq 1/4L$,  this condition is met if, for example, $\beta_\tau$ is chosen of the form $(\tau s_\tau)^{-\frac12}$ while still satisfying~\eqref{noise-level-cond}. The noise perturbs the direction of descent, whereas the step size determines how far to move along this perturbed direction. The noise level needs to be adjusted so that the gradient is not too blurred, but it does not necessarily  decrease at each iteration. When the incumbent distribution $\rho^\tau$ is close to a local optimum, it is helpful to increase the noise level  to escape the local optimum. We demonstrate in Lemma~\ref{c1-imply-lipschitz} in the Supplementary that any positive definite kernel $k$ with bounded Hessian w.r.t.~distance $d$ is a Lipschitz-gradient kernel. Algorithm~\ref{alg:noisy} in the Supplementary describes \eqref{noisy-gradient-algo} in details.

\section{Numerical Experiments}
\label{sec:numerical}
We evaluate the proposed gradient flow on real-world datasets and then illustrate its applications in transfer learning. We augment samples for the target dataset, where only a few samples in the dataset are available. We consider three datasets: the MNIST (M)~\cite{lecun-mnisthandwrittendigit-2010}, Fashion-MNIST (F)~\cite{xiao2017fashion}, Kuzushiji-MNIST (K)~\cite{clanuwat2018deep}. To satisfy the Gaussianity assumption of the conditional distributions, we cluster all the images from each class of the datasets and keep the largest cluster for each class. To demonstrate the scalability of our algorithm to higher-dimensional images, we run experiments on Tiny ImageNet (TIN)~\cite{russakovsky2015imagenet} and upscaled SVHN~\cite{netzer2011reading}  datasets, where images are of $3\times64\times64$ size.


Our mapping $\phi$ is from $\mathbb{R}^m$ to $\mathbb{R}^2$ in the lifting procedure. To compute the $\mathrm{MMD}$ distance using kernel embeddings, we use a tensor kernel $k$ on $\mc Z$ composed from three standard Gaussian kernels corresponding for each component of the feature space $\R^m$, the mean space $\R^2$ and the covariance matrix space $\mS^2_{++}$. As a consequence,  $k$ is a characteristic kernel by~\cite[Theorem 4]{JMLR:v18:17-492}.

\begin{figure*}[!ht]
    \centering
    \includegraphics[width=0.8\textwidth]{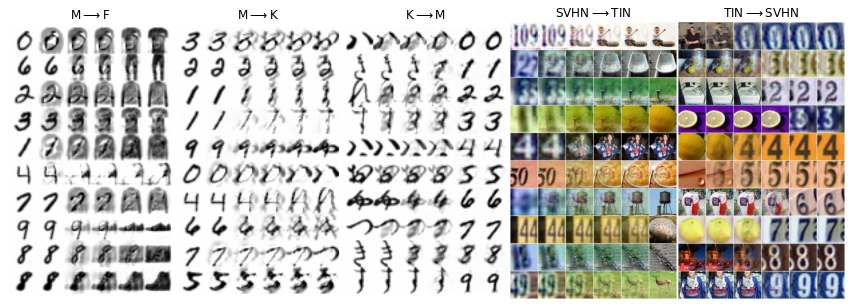}
    \caption{Sample path visualizations for five pairs of source-target domain. The original image and additional results are in the supplementary.}
    \label{fig:NIST_flow}
\end{figure*}
\paragraph{Experiment: Gradient Flow between Datasets.} We visualize the path travelled by each sample from the source domain to the target domain, as depicted in Fig.~\ref{fig:NIST_flow}. We draw randomly $N = 200$ images equally for 10 classes of the source domain, and $M = 50$ images equally for 10 classes of the target domain ($M=10$ for the TIN and SVHN datasets). In each subfigure, each column represents a snapshot of a certain time-step and the samples flow from the source (left) to the target (right) as the number of steps increases. The first column in Fig.~\ref{fig:NIST_flow} are the images from the source domain, where the gradient flows start. Empirically, the algorithm converges after step 140 for *NIST datasets and step 6000 for TIN and SVHN. The experiments are run on a C5.4xlarge AWS instance (a CPU instance) and all finish within one hour.

\begin{figure*}[!ht]
\centering
\begin{subfigure}
\centering
\includegraphics[width=0.45\textwidth]{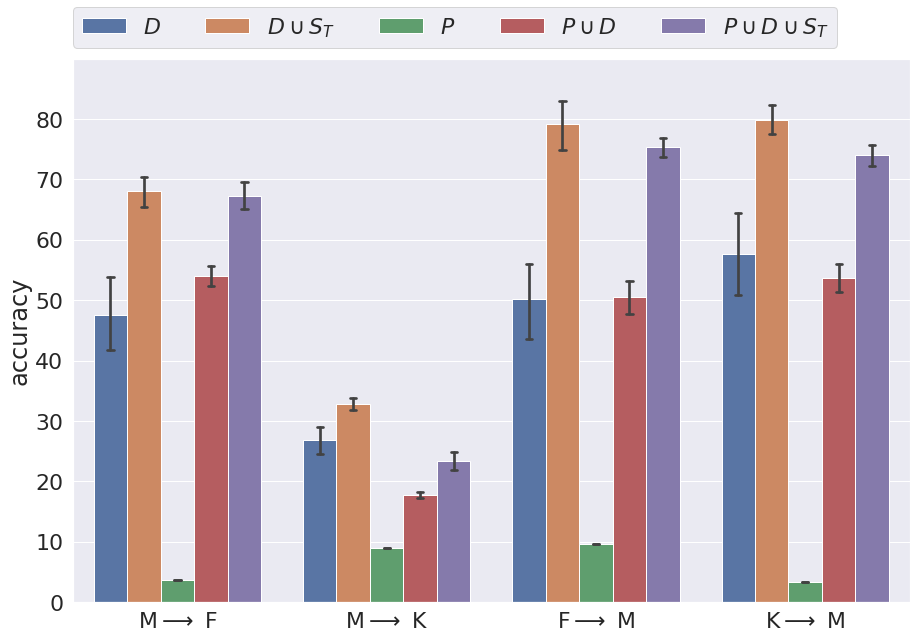}
\end{subfigure}
\begin{subfigure}
\centering
\includegraphics[width=0.45\textwidth]{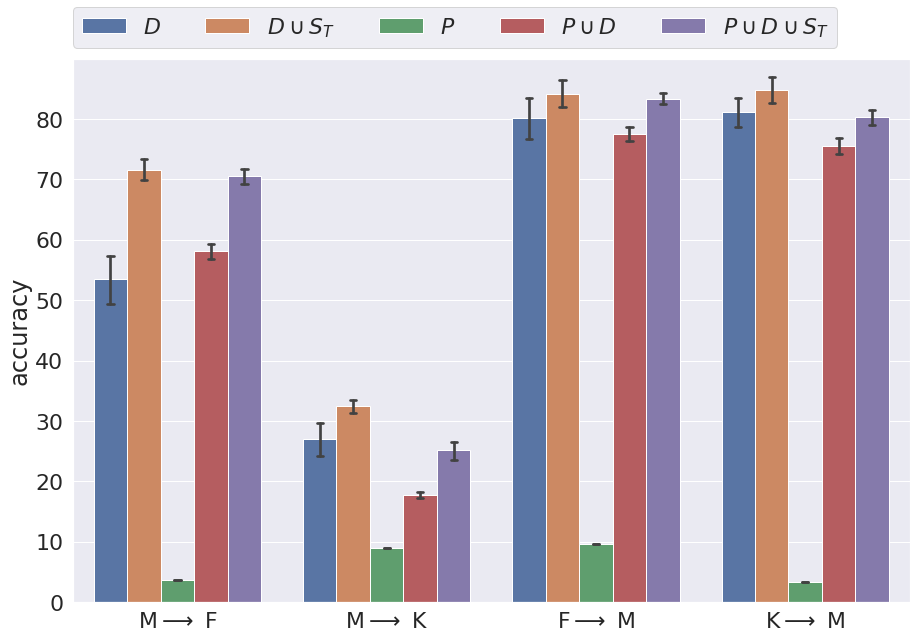}
\end{subfigure}
\caption{Average target domain accuracy on the test split for transfer learning with one-shot (left) and five-shot (right). Results are taken over 10 independent replications, and the range of accuracy is displayed by the error bars.}
\label{fig:few_shot}
\end{figure*}

\subsection{Application in Transfer Learning} Our gradient flow can alleviate the problem of insufficient labeled data by synthesizing new samples to augment the target dataset. In this section, we demonstrate that the generated target domain samples can improve the accuracy in one-shot and five-shot transfer learning tasks. 

First, we fix a source domain and pretrain a classifier $P$ on this domain. We draw randomly $N$ samples from the source domain to form the source dataset $(x_i, y_i)_{i=1}^{N}$. Next, we pick a target domain and draw randomly a few samples from this target domain: for example, in 1-shot learning, only 1 image per class from the target domain is selected to form the target dataset $D = (\bar x_j, \bar y_j)_{j=1}^M$. We then perform a noisy gradient flow scheme~\eqref{noisy-gradient-algo} from the source dataset to the target dataset to get N new samples $S_T = (x_i^T, y_i^T)_{i=1}^{N}$. With the target dataset $D$ and new samples $S_T$, we can retrain the classifier $P$. Similarly, we can also train new classifiers from scratch using datasets $D$ and $D\cup S_T$. Finally, we test the classifiers on the test set of the target domain. 

Fig.~\ref{fig:few_shot} presents the accuracy of five transfer learning strategies on four pairs of source and target domain. For the labels above the plot, labels without $P$ mean training a new classifier from scratch, whereas labels with $P$ mean transferring the pretrained classifier. $D$ and $S_T$ represent the samples in the target domain and our flowed samples. We observe a common trend that the addition of the flowed samples $S_T$ always improves the accuracy of the classifiers, as we compare $D\cup S_T$ with $D$ and compare $P\cup D\cup S_T$ with $P\cup D$. Moreover, the data augmentation with $S_T$ leads to a higher increase of accuracy for the 1-shot learning, where the data scarcity problem is more severe. The transfer learning results for SVHN and TIN datasets are provided in the Supplementary~\ref{sec:SVHN_TIN_supp}. Although Few-shot learning is more challenging due to the high complexity of the datasets, the addition of $S_T$ still improves the accuracy. We also compare with~\cite{AlvarezGF20}\footnote{The only gradient flow work that has experiments on *NIST datasets, but it does not run experiments on TIN and SVHN.}, mixup method~\cite{zhang2017mixup}, and image augmentation methods, see results in Supplementary~\ref{sec:comparison}--\ref{sec:comparison_transformation}.

\paragraph{Conclusions.} This paper focuses on a gradient flow approach to generate new labeled data samples in the target domain. To overcome the discrete nature of the labels, we represent datasets as distributions on the feature-Gaussian space, and the flow is formulated to minimize the $\mathrm{MMD}$ loss function under an optimal transport metric. Contrary to existing gradient flows on linear structure, our flows are developed on the \textit{curved} Riemannian manifold of Gaussian distributions. We provide explicit formula for the Riemannian gradient of the $\mathrm{MMD}$ loss, and analyze in details the flow equations and the convergence properties of both continuous and discretized forms. The numerical experiments demonstrate that our method can efficiently generate high-fidelity labeled training data for real-world datasets, and improve the classification accuracy in few-shot learning. The main limitation exists in the assumption that the data of one label forms an elliptical distribution.




\section*{Ethical Statement}
Our work has positive societal impacts, because it can help reduce repetitive data collection and labeling work. It does not have possible negative societal impacts in the current stage.

\bibliographystyle{named}
\bibliography{bibliography}

\newpage
\appendix
\onecolumn
The Supplementary Material is organized into two parts. In Section~\ref{sec:proofs}, we provide the proofs and further discussions of the results in the main paper. Section~\ref{appendix:implementation} includes implementation details as well as additional numerical results. All the models and data used to create the results in the paper are in the supplementary file.

\section{Proofs}\label{sec:proofs}

\subsection[Optimal Transport and  Riemannian Structure]{Optimal Transport and  Riemannian Structure on $\calP(\mc Z)$} \label{sec:geometry-PZ}

To define a gradient flow for probability distributions on $\mc Z$, it is essential to have a concept of gradients for functionals defined on $\mc P(\mc Z)$. This requires a meaningful Riemannian structure on $\calP(\mc Z)$, and here, we adopt a Riemannian structure generated by the optimal transport on $\calP(\mc Z)$ with ground cost $d^2$. The optimal transport metric $\mathbb{W}(\rho_0, \rho_1)$ between any two  distributions $\rho_0$, $\rho_1\in \calP(\mc Z)$ is defined by formula \eqref{OT-distance} of Appendix~\ref{appendix:geometry}. 
 As $(\mc Z, d)$ is a Riemannian manifold by Proposition~\ref{prop:Riemmanian}, it follows from the celebrated Benamou-Brenier formula \cite{BenamouB00} 
that~$\mathbb{W}$ can be expressed in terms of  a dynamic 
formulation. Precisely, 
\begin{align}\label{DF}
\hspace{-0.8em} \mathbb{W}(\rho_0, \rho_1)^2 \hspace{-0.2em} = \hspace{-1.1em} \inf_{(\rho,\phi)\in \mathcal A (\rho_0,\, \rho_1)} \hspace{-0.2em}  \int_0^1 \hspace{-0.5em} \int_{\mc Z } \hspace{-0.3em} \| \widetilde\nabla_d \phi_t(z)\|_z^2  \rho_t(\dd z)  \dd t,
\end{align}
where $\mathcal A (\rho_0,\, \rho_1)$ is the collection of all pairs $(\rho,\phi)$ of curve $\rho: [0,1]\to \calP( \mc Z)$ with endpoints $\rho_0$ and $\rho_1$, and function $\phi: [0,1]\times \mc Z \to  \R$ that satisfies the continuity equation 
\begin{equation}\label{continuity-eq}
\partial_t \rho + \div_d(\rho_t \widetilde\nabla_d \phi_t) = 0 
\end{equation}
in the sense of distributions
in  $(0,1)\times \mc Z$. For brevity, hereafter $\rho_t$
and $\phi_t$ denote functions in the $z$ variable defined by $\rho_t(z) = \rho(t,z)$
and $\phi_t(z) = \phi(t,z)$.

\textbf{Riemannian metric on $\calP(\mc Z)$.} The 
formulation~\eqref{DF} gives rise to the following Riemannian  structure on $\calP(\mc Z)$ induced by distance $\mathbb{W}$. First, the continuity equation enables us to identify a tangent vector $\partial_t \rho$ with the divergence  $-\div_d(\rho_t \widetilde\nabla_d \phi_t)$. Thus  the tangent space of $\calP(\mc Z)$ at a distribution  $\rho$ can be defined as
$\mathrm T_\rho\calP(\mc Z) \Let \big\{-\div_d (\rho \widetilde \nabla_d \varphi):\,  \varphi \, \mbox{is a differentiable function with compact support on}\, \mc Z\big\}$.
Second, we let $g_\rho:\mathrm T_\rho\calP(\mc Z) \times \mathrm T_\rho\calP(\mc Z) \longrightarrow \R $ be the Riemannian metric tensor given by 
\begin{equation}\label{rie-metric}
 g_\rho(\zeta_1, \zeta_2) 
 \Let \int_{\mc Z} \langle \widetilde\nabla_d \varphi_1(z),  \widetilde \nabla_d \varphi_2(z)\rangle_z \,\, \rho(\dd z)
\end{equation}
for  $\zeta_1=-\div_d(\rho \widetilde \nabla_d \varphi_1 )$ and $\zeta_2=-\div_d(\rho  \widetilde\nabla_d \varphi_2)$. With this definition and due to \eqref{continuity-eq}, formula \eqref{DF} can be rewritten using the metric tensor as  
\[
\mathbb{W}(\rho_0, \rho_1)^2 = \inf_{(\rho,\phi)\in \mathcal A (\rho_0,\, \rho_1)}  \int_0^1 g_{\rho_t}(\partial_t \rho, \partial_t \rho) \, \dd t.
\]
The 
metric tensor \eqref{rie-metric} allows us to define a notion of Riemannian gradients for functionals  on $\calP(\mc Z)$. In the next section we shall compute this gradient explicitly 
for the squared $\mathrm{MMD}$ gradient flow.

\subsection{Proofs and Results related to Section~\ref{sec:riemannian}}
\label{appendix:geometry}

\paragraph{For Proposition~\ref{prop:Riemmanian}.}

Recall that the Bures distance defined on $\mS_{++}^n$ is
\begin{equation}\label{metric}
\mathbb{B}(\Sigma_1,\Sigma_2) \Let \big[  \tr(\Sigma_1 + \Sigma_2 - 2  [\Sigma_1^{\frac12} \Sigma_2 \Sigma_1^{\frac12}]^{\frac12})\big]^{\frac12},
\end{equation}
and $\widetilde\nabla_d  \varphi(z)$ is the unique element in the tangent space $\R^m \times \R^n \times \mS^n$ satisfying 
\begin{equation}\label{grad-dataset}
 \big\langle \widetilde\nabla_d \varphi(z), (w,v,V) \big\rangle_{z} = D \varphi_z(w,v, V) \quad \mbox{for all } (w,v, V) \in \R^{m}\times\R^n \times \mS^n.
\end{equation}
The proof of  Proposition~\ref{prop:Riemmanian} relies on the following result 
from \cite[Proposition A]{ref:takatsu2011wasserstein} (see also~\cite[Theorem~5]{ref:bhatia2019on} and~\cite[Proposition~6]{ref:malago2018wasserstein}).
\begin{proposition}\label{takatsu2011}
The space $\mS^n_{++}$ is a Riemannian manifold with the following structures: at each point $\Sigma\in \mS^n_{++}$, the  tangent space is  
$
\mathrm T_\Sigma \mS^n_{++} = \mS^n$  and  the Riemannian metric is given by 
\[
\langle X, Y\rangle_{\Sigma}   \Let \tr \Big(\rmL_\Sigma[X] \, \Sigma \, \rmL_\Sigma[Y]\Big)
= \frac12\langle \rmL_\Sigma[X], Y \rangle
\quad \mbox{for}\quad X, Y\in \mS^n.
\]
Moreover, the distance function corresponding to this  Riemannian metric coincides with the Bures distance $\mathbb{B}$ given by 
\eqref{metric}.
\end{proposition}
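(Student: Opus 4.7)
The proposition has two parts: (i) the bilinear form $\langle \cdot,\cdot \rangle_\Sigma$ on $\mS^n$ assembles into a smooth Riemannian metric on $\mS^n_{++}$, and (ii) the associated geodesic distance coincides with the Bures distance $\mathbb{B}$. The plan is to verify (i) by direct calculation using the Lyapunov equation \eqref{Lyapunov}, and then to obtain (ii) by transporting the Riemannian structure on $\mS^n_{++}$ to the family of centered Gaussians and invoking the Benamou-Brenier dynamic formulation of optimal transport.

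For (i), the Lyapunov statement \eqref{Lyapunov} already gives, for each $\Sigma\in\mS^n_{++}$, a well-defined linear isomorphism $\rmL_\Sigma: \mS^n\to\mS^n$. Bilinearity of $(X,Y)\mapsto\tr(\rmL_\Sigma[X]\Sigma\rmL_\Sigma[Y])$ is immediate; symmetry follows from cyclicity of trace together with symmetry of each factor; and positive-definiteness follows by writing $\tr(\rmL_\Sigma[X]\Sigma\rmL_\Sigma[X])=\|\Sigma^{1/2}\rmL_\Sigma[X]\Sigma^{1/2}\|_F^2$, which vanishes only when $\rmL_\Sigma[X]=0$, hence $X=0$. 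To obtain the equivalent form $\frac12\langle\rmL_\Sigma[X],Y\rangle$, I would split $\rmL_\Sigma[X]\Sigma\rmL_\Sigma[Y]$ symmetrically in its first two factors, apply the identity $\rmL_\Sigma[X]\Sigma+\Sigma\rmL_\Sigma[X]=X$, and note that the resulting commutator term has vanishing trace after using the symmetry of all matrices involved. Smoothness of $\Sigma\mapsto\rmL_\Sigma$ follows from Cramer's rule applied to the linear system defining $\rmL_\Sigma[X]$, giving smooth dependence of the metric on $\Sigma$.

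For (ii), the cornerstone is the classical identity $\mathbb{B}(\Sigma_1,\Sigma_2)=\mathbb{W}_2(\mc N(0,\Sigma_1),\mc N(0,\Sigma_2))$ for the $2$-Wasserstein distance between centered Gaussians on $\R^n$. By Benamou-Brenier, this $\mathbb{W}_2$ distance equals the infimum of $\int_0^1\int_{\R^n}\|v_t(x)\|_2^2\,\rho_t(\mathrm{d}x)\,\mathrm{d}t$ over curves $(\rho_t,v_t)$ solving $\partial_t\rho_t+\mathrm{div}(\rho_t v_t)=0$ with prescribed endpoints. For a smooth curve $t\mapsto\Sigma_t$ in $\mS^n_{++}$, the associated Gaussian curve $\rho_t=\mc N(0,\Sigma_t)$ is driven by a linear field $v_t(x)=A_t x$ with $A_t\in\mS^n$; the continuity equation then reduces to $\dot\Sigma_t=A_t\Sigma_t+\Sigma_t A_t$, forcing $A_t=\rmL_{\Sigma_t}[\dot\Sigma_t]$. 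A direct Gaussian integral gives $\int_{\R^n}\|A_t x\|_2^2\,\rho_t(\mathrm{d}x)=\tr(A_t\Sigma_t A_t)=\langle\dot\Sigma_t,\dot\Sigma_t\rangle_{\Sigma_t}$, so the Benamou-Brenier length of the Gaussian curve coincides exactly with its Riemannian length under $\langle\cdot,\cdot\rangle_{\Sigma}$. Minimizing over curves in $\mS^n_{++}$ therefore yields an upper bound of $\mathbb{B}$ on the Riemannian geodesic distance.

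The main obstacle is the matching lower bound: showing that the $\mathbb{W}_2$ infimum is \emph{attained} within the centered-Gaussian family, so that no non-Gaussian curve can shorten the distance. I would handle this by invoking McCann's displacement interpolation with the optimal transport map between centered Gaussians, which is the explicit linear map $T=\Sigma_1^{-1/2}(\Sigma_1^{1/2}\Sigma_2\Sigma_1^{1/2})^{1/2}\Sigma_1^{-1/2}$; the displacement interpolant stays centered Gaussian at every time and realizes the $\mathbb{W}_2$ geodesic. This yields an admissible curve in $\mS^n_{++}$ whose Riemannian length equals $\mathbb{B}(\Sigma_1,\Sigma_2)$, closing the two-sided inequality and identifying the Riemannian geodesic distance on $(\mS^n_{++},\langle\cdot,\cdot\rangle_{\Sigma})$ with $\mathbb{B}$.
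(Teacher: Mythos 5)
Your proposal is essentially sound, but note that the paper does not prove this proposition at all: it is imported verbatim from \cite[Proposition A]{ref:takatsu2011wasserstein} (with pointers to \cite{ref:bhatia2019on} and \cite{ref:malago2018wasserstein}) and then used as a black box in the proof of Proposition~\ref{prop:Riemmanian}. What you have written is therefore a self-contained reconstruction of a cited result rather than an alternative to an in-paper argument. Your route --- first checking that $\langle\cdot,\cdot\rangle_\Sigma$ is a smooth metric via the Lyapunov operator $\rmL_\Sigma$, then identifying the geodesic distance with $\mathbb{B}$ by passing to centered Gaussians, using Benamou--Brenier to get $\mathbb{B}\le d_{\mathrm{Riem}}$ (every curve $t\mapsto\Sigma_t$ driven by the linear symmetric field $x\mapsto\rmL_{\Sigma_t}[\dot\Sigma_t]x$ is an admissible competitor whose kinetic energy equals the Riemannian energy $\tr(\rmL_{\Sigma_t}[\dot\Sigma_t]\,\Sigma_t\,\rmL_{\Sigma_t}[\dot\Sigma_t])$) and McCann's displacement interpolation to get $d_{\mathrm{Riem}}\le\mathbb{B}$ (the interpolant between centered Gaussians is itself a curve of centered Gaussians with linear symmetric velocity field and Riemannian energy $\mathbb{B}^2$) --- is in substance the same mechanism by which the cited sources establish the result, so nothing is lost by rederiving it this way; it simply makes the paper's appeal to the literature self-contained. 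Two cosmetic corrections: the identity you invoke for positive-definiteness should read $\tr\big(\rmL_\Sigma[X]\,\Sigma\,\rmL_\Sigma[X]\big)=\tr\big(A^\top\Sigma A\big)$ with $A=\rmL_\Sigma[X]$, i.e.\ the squared Frobenius norm of $\Sigma^{1/2}\rmL_\Sigma[X]$, not of $\Sigma^{1/2}\rmL_\Sigma[X]\Sigma^{1/2}$ (the latter equals $\tr(A\Sigma A\Sigma)$); and your labelling of which inequality is the ``upper'' and which the ``lower'' bound is inverted relative to the logic, though both one-sided inequalities are genuinely established, so the two-sided identification of $d_{\mathrm{Riem}}$ with $\mathbb{B}$ goes through.
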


We are now ready to prove Proposition~\ref{prop:Riemmanian}.
\begin{proof}[Proof of Proposition~\ref{prop:Riemmanian}]
As a consequence of  Proposition~\ref{takatsu2011}, $\mc Z$ is  the product Riemannian manifold 
 with tangent space  $\mathrm T_{(x,\mu,\Sigma)}\mc Z = \mathrm T_x \R^{ m} \times \mathrm T_{\mu} \R^{ n}\times \mathrm T_\Sigma \mS^n_{++}$ and 
 with the standard product Riemmanian metric \eqref{R-metric-Z}. The result then follows. We note that if we let 
 $\D((x_1,\mu_1,\Sigma_1), (x_2,\mu_2,\Sigma_2))$ denote 
 the  distance function 
 corresponding to the Riemannian metric $\langle \cdot,\cdot\rangle_{z}$ on $\mc Z$, then its square 
 $\D((x_1,\mu_1,\Sigma_1), (x_2,\mu_2,\Sigma_2))^2$ is the sum of the square of the distance function 
 w.r.t.~standard metric $\langle \cdot,\cdot\rangle$ on $\R^m$, the square of the distance function 
 w.r.t.~standard metric $\langle \cdot,\cdot\rangle$ on $\R^n$, and the square of the distance function 
 w.r.t.~metric $\langle \cdot, \cdot\rangle_{\Sigma} $ on $\mS^n_{++}$. 
 As a result and by Proposition~\ref{takatsu2011}, we have $\D((x_1,\mu_1,\Sigma_1), (x_2,\mu_2,\Sigma_2))^2 = \|x_1- x_2\|_2^2 +
 \|\mu_1 -\mu_2\|_2^2 + \mathbb{B}(\Sigma_1,\Sigma_2)^2$. So, $\D$ is the same~as~$d$.
\end{proof}

\paragraph{For Lemma~\ref{gradient-formula}}
\begin{proof}[Proof of Lemma~\ref{gradient-formula}]
Let us express $\widetilde\nabla_d \varphi(z) = (\Phi_1(z), \Phi_2(z), \Phi_3(z))$ with $\Phi_1(z)\in \R^{m}, \, \Phi_2(z)\in\R^n$ and $\Phi_3(z)\in \mS^n$. Then by using 
the definition of Riemannian metric $\langle \cdot,\cdot \rangle_z$ in \eqref{R-metric-Z}, we can rewrite equation \eqref{grad-dataset} as
\[
\langle\Phi_1(z), v\rangle +\langle\Phi_2(z), w\rangle  + \langle \frac12\rmL_\Sigma[\Phi_3(z)], V\rangle  = \langle \nabla\varphi(z), (v,w,V) \rangle.
\]
This is equivalent to 
\begin{equation}\label{identify_gradient}
\langle\Phi_1(z), v\rangle +\langle\Phi_2(z), w\rangle   + \langle \frac12\rmL_\Sigma[\Phi_3(z)], V\rangle  = \langle \nabla_x\varphi(z), v \rangle + \langle  \nabla_\mu\varphi(z), w \rangle 
+  \langle \nabla_\Sigma\varphi(z), V \rangle, 
\end{equation}
where $\nabla\varphi(z) = \Big(\nabla_x\varphi(z), \nabla_\mu\varphi(z), \nabla_\Sigma\varphi(z)\Big)$ denotes the standard Euclidean gradient. 
Equation \eqref{identify_gradient} is obviously satisfied if $\Phi_1(z) = \nabla_x\varphi(z)$, $\Phi_2(z) =  \nabla_\mu\varphi(z)$,  and $\rmL_\Sigma[\Phi_3(z)] = 2\nabla_\Sigma\varphi(z)$. By the definition of operator $\rmL_\Sigma$ right after~\eqref{Lyapunov}, the third identity is the same as
$\Phi_3(z) = 2[\nabla_\Sigma\varphi(z)] \Sigma + 2\Sigma [\nabla_\Sigma\varphi(z)]$.
Due to  uniqueness of the gradient, we therefore infer that $\widetilde\nabla_d \varphi(z)$ is given by  the  formula:
\begin{equation*}
\widetilde\nabla_d \varphi(z) = \Big(\nabla_x\varphi(z), \nabla_\mu\varphi(z),  2[\nabla_\Sigma\varphi(z)] \Sigma + 2\Sigma [\nabla_\Sigma\varphi(z)]\Big).
\end{equation*}
This completes the proof.
\end{proof}

In this paper, the optimal transport metric  between any two  distributions $\rho_0$, $\rho_1\in \calP(\mc Z)$ is defined by 
\begin{equation}\label{OT-distance}
\mathbb{W}(\rho_0,\rho_1)^2 \Let \inf_{\pi \in \Pi(\rho_0,\rho_1)} \iint_{\mc Z \times \mc Z} d(z_0, z_1)^2 \,\, \pi(\dd z_0, \dd z_1),
\end{equation}
where  $\Pi(\rho_0,\rho_1)$ is  the set of all   probability distributions on $\mc Z \times \mc Z$ whose marginals are $\rho_0$ and $\rho_1$, respectively.

\subsection{Proofs and Results related to Section~\ref{sec:MMD-flow}}\label{appendix:MMD}
The maximum mean discrepancy (MMD) between a distribution $\rho \in \mc P(\mc Z)$ and the target distribution $\varrho$ is defined as
\[
\mathrm{MMD}(\rho, \varrho) \Let \sup_{f\in \mathcal H: \|f\|_{\mathcal H}\leq 1 }{\Big\{\int_{\mc Z} f(z) \, \rho(\dd z) -\int_{\mc Z} f(z) \, \varrho(\dd z)\Big\}}.   
\]
It is well-known that the $\mathrm{MMD}$ admits the following   closed-form formula~\cite[Lemmas~4 and~6]{JMLR:v13:gretton12a}.

\begin{lemma}\label{closed-form} We have $\mathrm{MMD}(\rho, \varrho) = \|\m_\rho -\m_\varrho\|_{\mathcal H}$. As a consequence, 
\begin{align*}\label{squared-MMD}
\mathrm{MMD}(\rho, \varrho)^2
= \int_{\mc Z} \int_{\mc Z} k(z,w) \rho(\dd z) \rho(\dd w) 
-2 \int_{\mc Z}  \m_\varrho(z) \rho(\dd z)   + \| \m_{\varrho}\|_{\mathcal H}^2.
\end{align*}
\end{lemma}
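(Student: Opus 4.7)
The plan is to prove both assertions using only the reproducing property of $\mathcal H$ together with the definition of the kernel mean embedding $\m_\rho(z) = \int_{\mc Z} k(z,w)\rho(\dd w)$, which by construction lies in $\mathcal H$ and satisfies $\langle f, \m_\rho\rangle_{\mathcal H} = \int_{\mc Z} f(z)\,\rho(\dd z)$ for every $f\in\mathcal H$. The chief conceptual step is to justify this last identity: I would use the reproducing property $f(z)=\langle f, k(z,\cdot)\rangle_{\mathcal H}$ pointwise, then exchange the $\mathcal H$-inner product with the $\rho$-integral. This exchange is the only genuinely technical point, and it is justified via Bochner integrability of $z\mapsto k(z,\cdot)$ under the mild second-moment condition implicit in $\rho\in\calP(\mc Z)$ (or, alternatively, via Fubini once one recognizes the inner product as an integral against $f$).

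To obtain the first identity $\mathrm{MMD}(\rho, \varrho) = \|\m_\rho -\m_\varrho\|_{\mathcal H}$, I would first rewrite
\[
\int_{\mc Z} f(z)\,\rho(\dd z) - \int_{\mc Z} f(z)\,\varrho(\dd z) \;=\; \langle f,\, \m_\rho - \m_\varrho\rangle_{\mathcal H}
\]
using the embedding identity above applied to both $\rho$ and $\varrho$. Taking the supremum over $\{f\in\mathcal H: \|f\|_{\mathcal H}\leq 1\}$ and invoking Cauchy--Schwarz gives the upper bound $\|\m_\rho-\m_\varrho\|_{\mathcal H}$, while the lower bound is attained (or approached, if the residual is zero) by the normalized choice $f^\star = (\m_\rho-\m_\varrho)/\|\m_\rho-\m_\varrho\|_{\mathcal H}$. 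This yields equality and hence the first claim.

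For the squared formula, I would expand $\|\m_\rho-\m_\varrho\|_{\mathcal H}^2$ by bilinearity of $\langle\cdot,\cdot\rangle_{\mathcal H}$ into $\|\m_\rho\|_{\mathcal H}^2 - 2\langle \m_\rho,\m_\varrho\rangle_{\mathcal H} + \|\m_\varrho\|_{\mathcal H}^2$. The cross term is handled by applying the embedding identity to $f = \m_\varrho$, which gives $\langle \m_\rho,\m_\varrho\rangle_{\mathcal H} = \int_{\mc Z}\m_\varrho(z)\,\rho(\dd z)$. For the self term, I would write $\|\m_\rho\|_{\mathcal H}^2 = \langle \m_\rho,\m_\varrho\rangle_{\mathcal H}\big|_{\varrho=\rho}$ and then unfold $\m_\rho(z) = \int_{\mc Z} k(z,w)\rho(\dd w)$ so that
\[
\|\m_\rho\|_{\mathcal H}^2 \;=\; \int_{\mc Z}\int_{\mc Z} k(z,w)\,\rho(\dd z)\,\rho(\dd w).
\]
Substituting these two expressions into the expansion yields exactly the stated closed form, completing the proof. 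No single step is a serious obstacle; the only care required is in the interchange of inner product and integral, which the paper's standing assumptions on $\calP(\mc Z)$ and on the kernel $k$ make routine.
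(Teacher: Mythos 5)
Your proposal is correct and follows essentially the same route as the paper's proof: establish $\int f\,\dd\rho = \langle f, \m_\rho\rangle_{\mathcal H}$ from the reproducing property, identify the supremum as $\|\m_\rho-\m_\varrho\|_{\mathcal H}$, and expand the squared norm by bilinearity. Your added remarks on Bochner integrability and on attainment of the supremum are sound refinements of steps the paper treats as routine.
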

\begin{proof}[Proof of Lemma~\ref{closed-form}]
For any $f\in \mathcal H$, we have $f(z) = \langle f, k(\cdot, z)\rangle_{ \mathcal H}$. Therefore,
\begin{equation}\label{a_representation}
\int_{\mc Z} f(z) \, \rho(\dd z)  = \Big\langle f, \int_{\mc Z} k(\cdot, z)\, \rho(\dd z)  \Big\rangle_{ \mathcal H}  = \langle f, \m_\rho \rangle_{ \mathcal H}\quad \mbox{for all}\quad f\in  \mathcal H.
\end{equation}
It follows that  $\mathrm{MMD}(\rho, \varrho) =\sup_{f\in  \mathcal H: \|f\|_{ \mathcal H}\leq 1 }  \langle f, \m_\rho -\m_\varrho\rangle_{ \mathcal H} 
= \|\m_{\rho} -\m_{\varrho}\|_{ \mathcal H}$. 
Using this closed-form formula and  identity \eqref{a_representation}, we also obtain
\begin{align*}
\mathrm{MMD}(\rho, \varrho)^2
&=\|\m_{\rho} -\m_{\varrho}\|_{ \mathcal H}^2 = \langle \m_\rho ,  \m_\rho \rangle_{ \mathcal H}  - 2 \langle \m_\rho , \m_\varrho\rangle_{ \mathcal H} + \langle \m_\varrho,  \m_\varrho\rangle_{ \mathcal H}\\
&=  \int  \m_\rho(z) \rho( \dd z)
-2 \int \m_\varrho(z) \rho(\dd z)   + \| \m_\varrho\|_{ \mathcal H}^2\\
&=  \iint  k(z,w) \rho( \dd z) \rho(\dd w) 
-2 \int \m_\varrho(z) \rho(\dd z)   + \| \m_\varrho\|_{ \mathcal H}^2.
\end{align*}
This completes the proof.
\end{proof}

\paragraph{For Lemma~\ref{grad-formula}}

\begin{proof}[Proof of Lemma~\ref{grad-formula}]
We recall that  $\grad   \calF[\rho]$ is defined as 
the unique element in $\mathrm T_\rho\calP(\mc Z)$ satisfying 
\begin{equation*}
    g_\rho\Big(\grad   \calF[\rho], \partial_t \rho_t|_{t=0}\Big) =  \frac{\dd}{\dd t}\Big|_{t=0} \calF[\rho_t]
\end{equation*}
for every  differentiable  curve  $t\mapsto \rho_t\in \calP(\mc Z)$ passing through $\rho$ at $t=0$.
Let  $t\mapsto \rho_t\in \calP(\mc Z)$ be such a curve. Then since $ \partial_t \rho_t|_{t=0} \in \mathrm T_\rho\calP(\mc Z)$, we can write $\partial_t \rho_t|_{t=0} =
 -\div_d (\rho  \widetilde\nabla_d \varphi)$
for some differentiable function $\varphi$ on $\mc Z$. 
Then by using Lemma~\ref{closed-form} and $k(z,w) = k(w,z)$ we have 
\begin{align*}
 \frac{\dd}{\dd t}\Big|_{t=0} \calF[\rho_t]
 &= \frac12\frac{\dd}{\dd t}\Big|_{t=0} \left[\iint k(z,w) \rho_t(\dd z) \rho_t(\dd w) 
-2 \int \m_\varrho(z) \rho_t(\dd z) \right]\nonumber\\
&= \frac12\iint k(z,w) \partial_t \rho_t|_{t=0}(\dd z)  \rho(\dd w) 
+\frac12\iint k(z,w)  \partial_t \rho_t|_{t=0}(\dd w)  \rho(\dd z)\nonumber\\
&\quad -  \int \m_\varrho(z) \, \partial_t \rho_t|_{t=0}(\dd z)\nonumber\\
&= -   \int_{\mc Z}  \int_{\mc Z}  k(z,w) \div_d (\rho  \widetilde\nabla_d \varphi)(\dd z)\, \rho(\dd w)
+  \int_{\mc Z} \m_\varrho(z) \, \div_d (\rho  \widetilde\nabla_d \varphi)(\dd z).
\end{align*}
Let $\widetilde\nabla_d^1 k(z,w)$ denote the gradient $\widetilde\nabla_d$ of the function $z\mapsto k(z,w)$. 
It then follows from the definition of the divergence operator $\div_d(\rho \widetilde\nabla_d \varphi)$ at the end of Section~3.1 that
\begin{align*}
 \frac{\dd}{\dd t}\Big|_{t=0} \calF[\rho_t]
 &=   \int_{\mc Z}  \int_{\mc Z} \langle \widetilde\nabla_d^1 k(z,w), \widetilde\nabla_d \varphi(z)\rangle_z \,\rho(\dd z) \rho(\dd w)
 - \int_{\mc Z} \langle \widetilde\nabla_d \m_\varrho(z),    \widetilde\nabla_d \varphi(z)\rangle_z \rho(\dd z) \\
 &=   \int\left[   \Big\langle \int \widetilde\nabla_d^1 k(z,w)\,  \rho(\dd w), \widetilde\nabla_d \varphi(z)\Big\rangle_z  \right]\rho(\dd z)
  - \int_{\mc Z} \langle \widetilde\nabla_d \m_\varrho(z),    \widetilde\nabla_d \varphi(z)\rangle_z \rho(\dd z) \\
  &=   \int\left[   \Big\langle \widetilde\nabla_d \int  k(z,w)\,  \rho(\dd w), \widetilde\nabla_d \varphi(z)\Big\rangle_z  \right]\rho(\dd z)
 - \int_{\mc Z} \langle \widetilde\nabla_d \m_\varrho(z),    \widetilde\nabla_d \varphi(z)\rangle_z \rho(\dd z)  \\
 &= \int_{\mc Z} \langle \widetilde\nabla_d[ \m_\rho - \m_\varrho](z),    \widetilde\nabla_d \varphi(z)\rangle_z \rho(\dd z).
\end{align*}
By the definition   of the Riemannian metric tensor $g_\rho$ given in  \eqref{rie-metric} and due to $\partial_t \rho_t|_{t=0} = -\div_d (\rho  \widetilde\nabla_d \varphi)$, we thus obtain  
\begin{align*}
  \frac{\dd}{\dd t}\Big|_{t=0} \calF[\rho_t]
 &=g_\rho\Big(-\div_d  \big(\rho \widetilde\nabla_d [\m_\rho - \m_\varrho]\big), -\div_d(\rho \widetilde\nabla_d \varphi)\Big)\\
 &= g_\rho\Big(-\div_d  \big(\rho\widetilde \nabla_d [\m_\rho - \m_\varrho]\big), \partial_t \rho_t|_{t=0}\Big).
\end{align*}
Therefore, we infer that $\grad   \calF[\rho]= -\div_d  \big(\rho \widetilde\nabla_d [\m_\rho - \m_\varrho]\big)$ as desired.
\end{proof}

\paragraph{For Proposition~\ref{decrease-rate}}

\begin{proof}[Proof of Proposition~\ref{decrease-rate}]
 The proof is similar to that of Lemma~\ref{grad-formula} and with the same notation for $\widetilde\nabla_d^1 k(z,w)$. Indeed, by the same computation at the beginning of the proof of 
 Lemma~\ref{grad-formula} we have 
 \begin{align*}
 \frac{\dd}{\dd t} \calF[\rho_t]
&= \frac12\iint k(z,w) \partial_t \rho_t(\dd z)  \rho_t(\dd w) 
+\frac12\iint k(z,w)  \partial_t \rho_t(\dd w)  \rho_t(\dd z) -  \int \m_\varrho(z) \, \partial_t \rho_t(\dd z)\nonumber\\
&=\iint k(z,w) \partial_t \rho_t(\dd z)  \rho_t(\dd w) -  \int \m_\varrho(z) \, \partial_t \rho_t(\dd z).
\end{align*}
This together with the gradient flow equation \eqref{eq:gradient-flow} gives
 \begin{align*}
 \frac{\dd}{\dd t} \calF[\rho_t] 
 &=  \int_{\mc Z}  \int_{\mc Z}  k(z,w) \div_d \big(\rho_t \widetilde\nabla_d [\m_{\rho_t} - \m_\varrho ]\big)(\dd z)  \rho_t(\dd w)\\
 &\quad -  \int_{\mc Z} \m_\varrho(z) \, \div_d \big(\rho_t \widetilde\nabla_d [\m_{\rho_t} - \m_\varrho ]\big)(\dd z).
\end{align*}
Using the definition of the divergence operator $\div_d$ at the end of Section~3.1, we further obtain
\begin{align*}
 \frac{\dd}{\dd t} \calF[\rho_t] 
 &=   - \int_{\mc Z}  \int_{\mc Z} \langle \widetilde\nabla_d^1 k(z,w), \widetilde\nabla_d [\m_{\rho_t} - \m_\varrho ](z)\rangle_z \,\rho_t(\dd z) \rho_t(\dd w)\\
&\quad +\int_{\mc Z} \langle \widetilde\nabla_d \m_\varrho(z),    \widetilde\nabla_d [\m_{\rho_t} - \m_\varrho ](z)\rangle_z \rho_t(\dd z) \\
 &=   -\int\left[   \Big\langle \int \widetilde\nabla_d^1 k(z,w)\,  \rho_t(\dd w), \widetilde\nabla_d [\m_{\rho_t} - \m_\varrho ](z)\Big\rangle_z  \right]\rho_t(\dd z)\\
  &\quad + \int_{\mc Z} \langle \widetilde\nabla_d \m_\varrho(z),    \widetilde\nabla_d [\m_{\rho_t} - \m_\varrho ](z)\rangle_z \rho_t(\dd z) \\
  &=  - \int_{\mc Z} \langle \widetilde\nabla_d \m_{\rho_t}(z),    \widetilde\nabla_d [\m_{\rho_t} - \m_\varrho ](z)\rangle_z \rho_t(\dd z)\\
&\quad + \int_{\mc Z} \langle \widetilde\nabla_d \m_\varrho(z),    \widetilde\nabla_d [\m_{\rho_t} - \m_\varrho ](z)\rangle_z \rho_t(\dd z)  \\
 &= - \int_{\mc Z}\| \widetilde\nabla_d[ \m_{\rho_t} - \m_\varrho](z)\|_z^2 \rho_t(\dd z).
\end{align*}
This yields the desired result.
\end{proof} 

\paragraph{For Lemma~\ref{descent}}

\begin{proof}[Proof of Lemma~\ref{descent}]
From the formula for $\exp_z(\e\Phi(z))$ given at the beginning of Section~4.1, 
we  observe  that
\begin{align}\label{deri-exp}
 \frac{\dd}{\dd\e}\Big|_{\e=0}  \exp_z(\e\Phi(z)) 
 &= \Big(\Phi_1(z), \Phi_2(z), \rmL_\Sigma[\Phi_3(z)]\,  \Sigma + \Sigma \, \rmL_\Sigma[\Phi_3(z)]\Big)\nonumber\\
 &= \Big(\Phi_1(z), \Phi_2(z), \Phi_3(z)\Big) = \Phi(z),
\end{align}
where the second  equality is due to the definition of $\rmL_\Sigma[V]$ given at the beginning of Section~3.1.

We  obtain from Lemma~\ref{closed-form} that 
 \begin{align*}
&\mathrm{MMD}(\exp(\e \Phi)_{\#}\rho^\tau, \varrho)^2\\
&=  \iint  k(z,w) \exp(\e \Phi)_{\#}\rho^\tau(\dd z) \exp(\e \Phi)_{\#}\rho^\tau(\dd w) 
-2 \int \m_\varrho(z) \exp(\e \Phi)_{\#}\rho^\tau(\dd z )   + \| \m_\varrho\|_{ \mathcal H}^2\\
&=  \iint  k\Big(\exp_z(\e\Phi(z)),\exp_w(\e\Phi(w))\Big) \rho^\tau(\dd z) \rho^\tau(\dd w) 
-2 \int \m_\varrho\Big(\exp_z(\e\Phi(z))\Big) \rho^\tau(\dd z )   + \| \m_\varrho\|_{ \mathcal H}^2.
\end{align*}
Moreover, we have
\begin{align*}
  & \iint  \frac{\dd}{\dd \e}\Big|_{\e=0}   \Big[k(\exp_z(\e\Phi(z)),\exp_w(\e\Phi(w)))\Big]  \rho^\tau(\dd z) \rho^\tau(\dd w) \\
&=  \iint  \left\{ \frac{\dd}{\dd \e}\Big|_{\e=0}   \Big[k(\exp_z(\e\Phi(z)),w )\Big]  
+  \frac{\dd}{\dd \e}\Big|_{\e=0}   \Big[k(z,\exp_w(\e\Phi(w)))\Big]  \right\}  \rho^\tau(\dd z) \rho^\tau(\dd w)\\
&=  \int \frac{\dd}{\dd \e}\Big|_{\e=0} \Big[ \int  k(\exp_z(\e\Phi(z)),w ) \rho^\tau(\dd w)\Big]
\rho^\tau(\dd z) 
+  \int   \frac{\dd}{\dd \e}\Big|_{\e=0}  \Big[\int  k(z,\exp_w(\e\Phi(w)) ) \rho^\tau(\dd z)\Big] \rho^\tau(\dd w)\\
&=  \int \frac{\dd}{\dd \e}\Big|_{\e=0}  \Big[\m_{\rho^\tau}(\exp_z(\e\Phi(z))) \Big]\rho^\tau(\dd z) 
+  \int   \frac{\dd}{\dd \e}\Big|_{\e=0}  \Big[\m_{\rho^\tau}(\exp_w(\e\Phi(w)) ) \Big] \rho^\tau(\dd w)\\
&= 2 \int \frac{\dd}{\dd \e}\Big|_{\e=0}  \Big[\m_{\rho^\tau}(\exp_z(\e\Phi(z))) \Big]\rho^\tau(\dd z). 
\end{align*}
Thus, it  follows that 
\begin{align*}
 &\frac{\dd}{\dd \e}\Big|_{\e=0}   
\mathrm{MMD}(\exp(\e \Phi)_{\#}\rho^\tau, \varrho)^2\\
&= 2 \int \frac{\dd}{\dd \e}\Big|_{\e=0}  \Big[\m_{\rho^\tau}(\exp_z(\e\Phi(z))) \Big]\rho^\tau(\dd z) 
-2 \int \frac{\dd}{\dd \e}\Big|_{\e=0}  \Big[\m_\varrho(\exp_z(\e\Phi(z))) \Big]\rho^\tau(\dd z )  \\
&= 2 \int D [\m_{\rho^\tau} - \m_\varrho]_z\Big( \frac{\dd}{\dd \e}\Big|_{\e=0} \exp_z(\e\Phi(z)) \Big) \rho^\tau(\dd z)
\end{align*}
with  $ D \varphi_z(w,,v,V)$ denoting the standard directional derivative of $\varphi$ at $z$ in the direction $(w,v,V)$.
Using the definition of $\calF$ together with  \eqref{deri-exp} and the definition of gradient $\widetilde\nabla_d$ in \eqref{grad-dataset}, we obtain 
\begin{align}\label{first-conclusion}
\frac{\dd}{\dd \e}\Big|_{\e=0}   
\calF[\exp(\e \Phi)_{\#}\rho^\tau]
&=  \int D[\m_{\rho^\tau} -\m_\varrho]_z(\Phi(z)) \, \rho^\tau(\dd z ) \nonumber \\
&=   \int \langle \widetilde\nabla_d [\m_{\rho^\tau} -\m_\varrho] (z), \Phi(z)\rangle_z \rho^\tau(\dd z ). 
\end{align}
This yields the first conclusion of the lemma.

Now let $\hat\Phi^\tau \Let \frac{\Phi^\tau}{\|\Phi^\tau\|_{\mathbb{L}^2(\rho^\tau)}}$ be the unit vector field  
in the direction of $\Phi^\tau \Let - \widetilde\nabla_d [\m_{\rho^\tau} -\m_\varrho]$. Then by \eqref{first-conclusion}, we have
\[
\frac{\dd}{\dd \e}\Big|_{\e=0} \calF[\exp(\e \hat\Phi^\tau)_{\#}\rho^\tau]
=  -\|\Phi^\tau\|_{\mathbb{L}^2(\rho^\tau)}^{-1}  \int \|\Phi^\tau(z)\|_z^2 \rho^\tau(\dd z )= -\|\Phi^\tau\|_{\mathbb{L}^2(\rho^\tau)}\leq 0.
\]
On the other hand, for any unit direction $\Phi$ in $\mathbb{L}^2(\rho^\tau)$ we obtain from \eqref{first-conclusion} and H\"older inequality that
\begin{align*}
\Big| \frac{\dd}{\dd \e}\Big|_{\e=0}   
&\calF[\exp(\e \Phi)_{\#}\rho^\tau]\Big|
 \leq  \int \|\Phi^\tau(z)\|_z \|\Phi(z)\|_z \rho^\tau(\dd z )\\
&\leq  \Big(\int \|\Phi^\tau(z)\|_z^2 \rho^\tau(\dd z ) \Big)^\frac12
\Big(\int \|\Phi(z)\|_z^2 \rho^\tau(\dd z ) \Big)^\frac12
= \|\Phi^\tau\|_{\mathbb{L}^2(\rho^\tau)}.
\end{align*}
Therefore, we conclude further that 
\[
\frac{\dd}{\dd \e}\Big|_{\e=0} \calF[\exp(\e \hat\Phi^\tau)_{\#}\rho^\tau] \leq \frac{\dd}{\dd \e}\Big|_{\e=0}   
\calF[\exp(\e \Phi)_{\#}\rho^\tau]
\]
for any unit direction $\Phi$ in $\mathbb{L}^2(\rho^\tau)$. These give the last conclusion of the lemma.
\end{proof}

\begin{definition}[Lipschitz-gradient kernel] \label{def:L-kernel}
 Let $L>0$. A differentiable kernel $k$ on $\mc Z$ is called a Lipschitz-gradient kernel with constant $L$  if
 there exists a number $\e_0\in (0,1)$  such that  
\begin{align}\label{Lipschitz-cond}
    \Big| k(\exp_z(\e\Phi(z)), \exp_w(\delta\Phi(w)))
    &- k(z,w) -\big[  \langle \widetilde\nabla_d^1 k(z, w), \e\Phi(z)\rangle_z + \langle \widetilde\nabla_d^2 k(z, w), \delta\Phi(w)\rangle_w \big]\Big|\nonumber\\
    &\leq  L \Big[\| \e \Phi(z)\|_z^2 +\| \delta \Phi(w)\|_w^2 \Big]
\end{align}
for every $\e,\,\delta \in [0,\e_0]$ and every bounded vector field $\Phi: \mc Z\to \R^{m}\times \R^n\times \mS^n$. Hereafter, $\widetilde\nabla_d^1 k(z, w)$ and $\widetilde\nabla_d^2 k(z, w)$ 
denote respectively the gradient $\widetilde\nabla_d$ of the function $z\mapsto k(z,w)$ and the function $w\mapsto k(z,w)$.
\end{definition}
\begin{remark}
The right hand side of  condition \eqref{Lipschitz-cond} can be expressed in terms of the $d$ distance  as
\[
d\big(\exp_z(\e\Phi(z)), z\big)^2 +d\big(\exp_w(\delta\Phi(w)), w\big)^2.
\]
Thus condition \eqref{Lipschitz-cond} can be interpreted as  the gradient $\widetilde\nabla_d k$
is Lipschitz  w.r.t.~the 
 distance $d$.
\end{remark}

Condition~\eqref{Lipschitz-cond} is motivated by the following observation in the Euclidean space. Assume that $G: \R^d\times \R^d \to \R$ is a differentiable function 
such that its
Euclidean gradient $\nabla G(z,w)
\Let (\nabla^1 G(z,w), \nabla^2 G(z,w) )$ satisfies the standard Lipschitz condition 
\[
\| \nabla G(z_1,w_1) - \nabla G(z_2,w_2)\|_2 \leq L \| (z_1,w_1) - (z_2,w_2)\|_2\quad \forall (z_1,w_1), \, (z_2,w_2)\in \R^d\times \R^d.
\]
Then for any point $(z,w)\in \R^d\times \R^d$ and any tangent vector $(u,v)\in \R^d\times \R^d$, we have  by using the mean value theorem that
$G(z+u, w+v) - G(z,w) = \langle \nabla G(z_0,w_0), (u,v)  \rangle$ for some point $(z_0,w_0)$ in the line segment  in $\R^d\times \R^d$ connecting the points
$(z,w)$ and $(z+u, w+v)$. As a consequence, we obtain 
\begin{align*}
  &\Big| G(z+u, w+v) - G(z,w) - \big[\langle \nabla^1 G(z,w), u\rangle + \langle \nabla^2 G(z,w), v\rangle \big] \Big| \\
  &=  \Big| \langle \nabla G(z_0,w_0), (u,v)  \rangle - \langle \nabla G(z,w), (u, v)\rangle \Big|\\
  &=  \Big| \big\langle \nabla G(z_0,w_0) - \nabla G(z,w), (u,v)  \big\rangle \Big|\leq \| \nabla G(z_0,w_0) - \nabla G(z,w)\|_2 \| (u,v)\|_2. 
\end{align*}
Then  we can use the  Lipschitz condition for $\nabla G$ to imply   further that
\begin{align*}
  \Big| G(z+u, w+v) 
  &- G(z,w) - \big[\langle \nabla^1 G(z,w), u\rangle + \langle \nabla^2 G(z,w), v\rangle \big] \Big|\\ 
 &\leq L \| (z_0,w_0) - (z,w)\|_2 \|  (u,v)\|_2
\leq L \|  (u,v)\|_2^2,
\end{align*}
which is the same as 
\begin{align*}
   \Big| G(z+u, w+v) 
  - G(z,w) - \big[\langle \nabla^1 G(z,w), u\rangle + \langle \nabla^2 G(z,w), v\rangle \big] \Big|
 \leq   L \big[\| u\|_2^2 +\|v\|_2^2 \big]. 
\end{align*}
Condition~\eqref{Lipschitz-cond} is the Riemannian version of this last inequality for the Euclidean space, which is a consequence of the standard Lipschitz condition for the gradient.

\paragraph{Bounded Hessian kernels are  Lipschitz-gradient.}

The following lemma gives a sufficient condition  for a kernel to be  Lipschitz-gradient. 
\begin{lemma}\label{c1-imply-lipschitz} Let $k$ be a positive definite kernel such that its Hessian w.r.t.~distance $d$ is   bounded. Then $k$ is a Lipschitz-gradient kernel.
\end{lemma}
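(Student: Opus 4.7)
The plan is to recognize the left-hand side of \eqref{Lipschitz-cond} as exactly the second-order Taylor remainder of the scalar function
\[
f(\epsilon,\delta) \Let k\bigl(\exp_z(\epsilon\Phi(z)),\,\exp_w(\delta\Phi(w))\bigr)
\]
expanded around $(\epsilon,\delta)=(0,0)$, and then to bound that remainder using the assumed bound on the Riemannian Hessian of $k$. The curves $\epsilon\mapsto\gamma_1(\epsilon)\Let\exp_z(\epsilon\Phi(z))$ and $\delta\mapsto\gamma_2(\delta)\Let\exp_w(\delta\Phi(w))$ are geodesics in $\mc Z$, which is the structural fact that makes the argument clean.

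First I will verify the first-order coefficients. The chain rule combined with identity \eqref{deri-exp} already used in the proof of Lemma~\ref{descent}, together with the defining property of $\widetilde\nabla_d$ in Lemma~\ref{gradient-formula}, gives
\[
\partial_\epsilon f(0,0) = \langle \widetilde\nabla_d^1 k(z,w),\Phi(z)\rangle_z,\qquad \partial_\delta f(0,0) = \langle \widetilde\nabla_d^2 k(z,w),\Phi(w)\rangle_w.
\]
Hence the left-hand side of \eqref{Lipschitz-cond} is exactly $|f(\epsilon,\delta)-f(0,0)-\epsilon\partial_\epsilon f(0,0)-\delta\partial_\delta f(0,0)|$.

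Next I will bound the three second derivatives using the bounded-Hessian hypothesis. Because $\gamma_1$ is a geodesic in $\mc Z$, its covariant acceleration vanishes, so $\partial_\epsilon^2 f(\epsilon,\delta)$ equals the Riemannian Hessian of $k$ in its first argument at the point $(\gamma_1(\epsilon),\gamma_2(\delta))$ evaluated on $(\dot\gamma_1(\epsilon),\dot\gamma_1(\epsilon))$; an analogous identity holds for $\partial_\delta^2 f$, and $\partial_\epsilon\partial_\delta f$ equals the mixed Hessian block of $k$ viewed as a function on the product manifold $\mc Z\times\mc Z$ applied to $(\dot\gamma_1(\epsilon),\dot\gamma_2(\delta))$. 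Because geodesics preserve Riemannian speed, $\|\dot\gamma_1(\epsilon)\|_{\gamma_1(\epsilon)}=\|\Phi(z)\|_z$ and likewise for $\gamma_2$, so the Hessian bound (with constant, say, $C$) yields
\[
|\partial_\epsilon^2 f|\le C\|\Phi(z)\|_z^2,\quad |\partial_\delta^2 f|\le C\|\Phi(w)\|_w^2,\quad |\partial_\epsilon\partial_\delta f|\le C\|\Phi(z)\|_z\|\Phi(w)\|_w.
\]
The Lagrange form of the two-variable Taylor remainder then bounds the quantity of interest by $\tfrac12\big[\epsilon^2|\partial_\epsilon^2 f|+2\epsilon\delta|\partial_\epsilon\partial_\delta f|+\delta^2|\partial_\delta^2 f|\big]$ at some intermediate point; absorbing the cross term by AM--GM gives $\le L\big(\|\epsilon\Phi(z)\|_z^2+\|\delta\Phi(w)\|_w^2\big)$ with $L=C$, which is precisely \eqref{Lipschitz-cond}.

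The main obstacle is the Riemannian identification of $\partial_\epsilon^2 f$ and $\partial_\epsilon\partial_\delta f$ with Hessian blocks of $k$ on the product manifold $\mc Z\times\mc Z$; this uses that the Levi--Civita connection on a product manifold splits, so that $(\gamma_1,\gamma_2)$ is a product geodesic and the formula ``second derivative along a geodesic equals the Hessian on the tangent vector'' applies block by block. A secondary technicality is choosing $\epsilon_0\in(0,1)$ small enough (uniformly over the ranges of $z,w$ for which $\Phi$ is bounded) so that both exponential maps are well-defined on $[0,\epsilon_0]\Phi(z)$ and $[0,\epsilon_0]\Phi(w)$; this is possible because the interval $J^\ast$ in \eqref{geodesic} is open around the origin and $\Phi$ is bounded by hypothesis.
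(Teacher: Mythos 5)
Your proof is correct, and it follows the same underlying strategy as the paper's: expand $k$ to second order along the geodesics $\epsilon\mapsto\exp_z(\epsilon\Phi(z))$ and $\delta\mapsto\exp_w(\delta\Phi(w))$, kill the acceleration terms because covariant acceleration vanishes along geodesics, use constant geodesic speed to evaluate $\|\dot\gamma\|$ at the base point, and bound the remaining Hessian terms by the assumed sup norm. The organizational difference is worth noting. The paper reparametrizes both curves by a single parameter $t\in[0,\epsilon]$ (running the second curve at speed $\tfrac{\delta}{\epsilon}\Phi(w)$) and writes the remainder as an iterated integral $\int_0^\epsilon\int_0^t\frac{\dd}{\dd s}[\cdots]\,\dd s\,\dd t$, which produces the constant $M/2$; you instead use the genuine two-variable Taylor expansion with Lagrange remainder and absorb the cross term $2\epsilon\delta\,\partial_\epsilon\partial_\delta f$ by AM--GM, ending with constant $C$. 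Your version is in fact the more complete of the two: differentiating $\langle\widetilde\nabla_d^1 k(\gamma_z(s),\theta_w(s)),\dot\gamma_z(s)\rangle_{\gamma_z(s)}$ in $s$ necessarily produces a mixed Hessian block $\langle H^{12}_d k\,\dot\theta_w(s),\dot\gamma_z(s)\rangle$ because both arguments of $k$ move with $s$, and the paper's displayed computation records only the pure blocks $H^1_d$ and $H^2_d$; your expansion makes this cross block explicit and controls it, at the mild cost of reading the hypothesis as boundedness of the full Hessian of $k$ on the product manifold $\mc Z\times\mc Z$ (including the off-diagonal block), which is the natural reading of the lemma's assumption. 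Your closing remarks on the product Levi--Civita connection splitting and on choosing $\epsilon_0$ so that $[0,\epsilon_0]$ lies in the interval $J^*$ of \eqref{geodesic} address exactly the points the paper leaves implicit.
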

\begin{proof}[Proof of Lemma~\ref{c1-imply-lipschitz}]
Let $H^1_d k(z,w)$ and $H^2_d k(z,w)$ denote  respectively the Hessian  w.r.t.~distance $d$ of the function  $z\mapsto k(z,w)$ and the function $w\mapsto k(z,w)$. Let $\e, \, \delta>0$, and $\Phi: \mc Z\to \R^{m}\times \R^n\times \mS^n$ be a bounded vector field. 
Define  $\gamma_z(t) \Let \exp_z(t\Phi(z)) $  and  $\theta_w(t) \Let \exp_w(\frac{\delta}{\e}t\Phi(w)) $ for $t\in [0,\e]$. Then we have 
\begin{align*}
   &k(\exp_z(\e\Phi(z)), \exp_w(\delta\Phi(w)))
    - k(z,w)
    = \int_0^\e \frac{\dd}{\dd t}[k(\gamma_z(t), \theta_w(t))] \, \dd t\\
     &= \int_0^\e \Big[ \langle \widetilde\nabla^1_d k(\gamma_z(t), \theta_w(t)), \dot\gamma_z(t) \rangle_{\gamma_z(t)} + \langle \widetilde\nabla^2_d k(\gamma_z(t), \theta_w(t)), \dot\theta_w(t) \rangle_{\theta_w(t)}] \Big] \dd t.
\end{align*}
This together with the facts that 
$\dot\gamma_z(0) = \Phi(z) $ and $\dot\theta_w(0) = \frac{\delta}{\e}\Phi(z)$ yields
\begin{align*}
    A
    &\Let k(\exp_z(\e\Phi(z)), \exp_w(\delta\Phi(w)))
    - k(z,w) -\big[  \langle \widetilde\nabla_d^1 k(z, w), \e\Phi(z)\rangle_z + \langle \widetilde\nabla_d^2 k(z, w), \delta\Phi(w)\rangle_w \big]\\
    &= \int_0^\e \Big[ \langle \widetilde\nabla^1_d k(\gamma_z(t), \theta_w(t)), \dot\gamma_z(t) \rangle_{\gamma_z(t)} - \langle \widetilde\nabla^1_d k(\gamma_z(0), \theta_w(0)), \dot\gamma_z(0) \rangle_{\gamma_z(0)}  \Big] dt\\
    &\quad + \int_0^\e \Big[  \langle \widetilde\nabla^2_d k(\gamma_z(t), \theta_w(t)), \dot\theta_w(t) \rangle_{\theta_w(t)}
    -\langle \widetilde\nabla^2_d k(\gamma_z(0), \theta_w(0)), \dot\theta_w(0) \rangle_{\theta_w(0)}] \Big] \dd t\\
    &= \int_0^\e \int_0^t  \frac{\dd}{\dd s} \Big[ \langle \widetilde\nabla^1_d k(\gamma_z(s),  \theta_w(s)), \dot\gamma_z(s) \rangle_{\gamma_z(s)}  \Big] \dd s \dd t\\
    &\quad + \int_0^\e \int_0^t  \frac{\dd}{\dd s} \Big[   \langle \widetilde\nabla^2_d k(\gamma_z(s), \theta_w(s)), \dot\theta_w(s) \rangle_{\theta_w(s)}  \Big]\dd s \dd t\\
    &= \int_0^\e \int_0^t  \Big[  \langle H^1_d k(\gamma_z(s),  \theta_w(s)) \dot\gamma_z(s), \dot\gamma_z(s) \rangle_{\gamma_z(s)}  + \langle \widetilde\nabla^1_d k(\gamma_z(s),  \theta_w(s)), \ddot\gamma_z(s) \rangle_{\gamma_z(s)}  \Big]\dd s \dd t\\
    &\quad + \int_0^\e \int_0^t   \Big[ \langle H^2_d k(\gamma_z(s), \theta_w(s)) \dot\theta_w(s), \dot\theta_w(s) \rangle_{\theta_w(s)} + \langle \widetilde\nabla^2_d k(\gamma_z(s), \theta_w(s)), \ddot\theta_w(s) \rangle_{\theta_w(s)}\Big] \dd s \dd t.
\end{align*}
Since the curve $s \mapsto \gamma_z(s)$ is a geodesic, its acceleration $\ddot\gamma_z(s)$ is orthogonal to $\mc Z$ (that is, $\ddot\gamma_z(s)$ is orthogonal to every tangent vector in $\mathrm T_{z} \mc Z$). This implies that $\langle \widetilde\nabla^1_d k(\gamma_z(s),  \theta_w(s)), \ddot\gamma_z(s) \rangle_{\gamma_z(s)}   = 0$. Likewise, we also have $\langle \widetilde\nabla^2_d k(\gamma_z(s), \theta_w(s)), \ddot\theta_w(s) \rangle_{\theta_w(s)} = 0$. Thanks to these, we deduce from the above identity  that 
\begin{align*}
    A
    &= \int_0^\e \int_0^t    \langle H^1_d k(\gamma_z(s),  \theta_w(s)) \dot\gamma_z(s), \dot\gamma_z(s) \rangle_{\gamma_z(s)}  \dd s \dd t\\
    &\quad + \int_0^\e \int_0^t   \langle H^2_d k(\gamma_z(s), \theta_w(s)) \dot\theta_w(s), \dot\theta_w(s) \rangle_{\theta_w(s)}  \dd s \dd t.
\end{align*}
By using the assumption that the Hessians $H^1_d$ and $H^2_d$ are bounded, we then obtain \begin{align*}
    |A|
    &\leq M  \int_0^\e \int_0^t    \Big[ \|\dot\gamma_z(s)\|_{\gamma_z(s)}^2 + \|\dot\theta_w(s)\|_{\theta_w(s)}^2\Big] \dd s \dd t,
\end{align*}
where $M$ is the sup norm of the Hessian of $k$.
But as $\gamma_z(s)$ and   $\theta_w(s)$ are geodesic, they have constant speeds. Therefore, $\|\dot\gamma_z(s)\|_{\gamma_z(s)}= \|\dot\gamma_z(0)\|_{\gamma_z(0)} = \|\Phi(z)\|_z$ and $\|\dot\theta_w(s)\|_{\theta_w(s)}= \|\dot\theta_w(0)\|_{\theta_w(0)} = \|\frac{\delta}{\e}\Phi(z)\|_z$. Using these, we infer further that  
\begin{align*}
    |A|
    &\leq M  \int_0^\e \int_0^t    \Big[ \|\Phi(z)\|_z^2 + (\frac{\delta}{\e})^2\|\Phi(z)\|_z^2\Big] \dd s \dd t
    = \frac{M}{2} 
    \Big[\| \e \Phi(z)\|_z^2 +\| \delta \Phi(w)\|_w^2 \Big]. 
\end{align*}
According to Definition~\ref{def:L-kernel}, we thus conclude that $k$ is a Lipschitz-gradient kernel with constant $M/2$. 
\end{proof}

\paragraph{Quantified estimate of decrease for 
the Riemannian forward Euler scheme \eqref{k_iteration}.}

The next result  quantifies the amount that the value of $\calF$ decreases after each iteration. 

\begin{proposition}[Quantified estimate of decrease]\label{quantified_decrease}
Suppose  that  $k$ is a Lipschitz-gradient kernel with constant $L$. Then for $\rho^{\tau+1}$ given by \eqref{k_iteration} with $s_\tau\in (0, \e_0]$, we have
\[
 \calF[\rho^{\tau+1}] - \calF[\rho^\tau]  \leq -s_\tau\big(1- 2L s_\tau\big)\int_{\mc Z} \|\widetilde\nabla_d [\m_{\rho^\tau} -\m_\varrho](z)\|_z^2 \, \rho^\tau(\dd z).
 \]
\end{proposition}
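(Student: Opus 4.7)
The plan is to expand $\calF[\rho^{\tau+1}]-\calF[\rho^\tau]$ using the closed-form expression for the squared MMD from Lemma~\ref{closed-form}, and then apply the Lipschitz-gradient condition \eqref{Lipschitz-cond} to the two resulting terms. Writing $\rho^{\tau+1}=\exp(s_\tau\Phi^\tau)_{\#}\rho^\tau$ and using the change-of-variables formula,
\[
2\,\calF[\rho^{\tau+1}]-2\,\calF[\rho^\tau]=\iint\Big[k\big(\exp_z(s_\tau\Phi^\tau(z)),\exp_w(s_\tau\Phi^\tau(w))\big)-k(z,w)\Big]\rho^\tau(\dd z)\rho^\tau(\dd w)-2\int\Big[\m_\varrho\big(\exp_z(s_\tau\Phi^\tau(z))\big)-\m_\varrho(z)\Big]\rho^\tau(\dd z).
\]
The first term I would bound directly by applying \eqref{Lipschitz-cond} with $\e=\delta=s_\tau$ and $\Phi=\Phi^\tau$ (valid since $s_\tau\leq\e_0$). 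For the second term, I would apply \eqref{Lipschitz-cond} with $\delta=0$ (taking $\Phi(w)=0$ in the second slot) and then integrate against $\varrho(\dd w)$, which, using $\m_\varrho(z)=\int k(z,w)\varrho(\dd w)$ and commuting integration with the $\widetilde\nabla_d$ operator, yields
\[
\big|\m_\varrho(\exp_z(s_\tau\Phi^\tau(z)))-\m_\varrho(z)-s_\tau\langle\widetilde\nabla_d\m_\varrho(z),\Phi^\tau(z)\rangle_z\big|\leq L\,s_\tau^2\|\Phi^\tau(z)\|_z^2.
\]

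Next I would collect the linear-in-$s_\tau$ terms. By the symmetry $k(z,w)=k(w,z)$, the two linear contributions $\iint\langle\widetilde\nabla_d^1 k(z,w),s_\tau\Phi^\tau(z)\rangle_z$ and $\iint\langle\widetilde\nabla_d^2 k(z,w),s_\tau\Phi^\tau(w)\rangle_w$ against $\rho^\tau\otimes\rho^\tau$ are equal, and their sum equals $2s_\tau\int\langle\widetilde\nabla_d\m_{\rho^\tau}(z),\Phi^\tau(z)\rangle_z\rho^\tau(\dd z)$, by swapping $\widetilde\nabla_d$ with the inner $w$-integration as in the proof of Lemma~\ref{grad-formula}. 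The quadratic remainders, after integration, contribute at most $2L s_\tau^2\int\|\Phi^\tau(z)\|_z^2\rho^\tau(\dd z)$ from the first term and $2L s_\tau^2\int\|\Phi^\tau(z)\|_z^2\rho^\tau(\dd z)$ from the second. Combining,
\[
\calF[\rho^{\tau+1}]-\calF[\rho^\tau]\leq s_\tau\int_{\mc Z}\langle\widetilde\nabla_d[\m_{\rho^\tau}-\m_\varrho](z),\Phi^\tau(z)\rangle_z\,\rho^\tau(\dd z)+2L s_\tau^2\int_{\mc Z}\|\Phi^\tau(z)\|_z^2\,\rho^\tau(\dd z).
\]
Finally, plugging in $\Phi^\tau=-\widetilde\nabla_d[\m_{\rho^\tau}-\m_\varrho]$ turns the inner product into $-\|\Phi^\tau(z)\|_z^2$, yielding the stated bound.

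The main obstacle I anticipate is the careful bookkeeping in two places: first, ensuring that the Lipschitz-gradient condition, stated for pairs $(z,w)$ with both arguments perturbed, can be specialized (via $\delta=0$ and Fubini) to a one-sided bound for $\m_\varrho(\exp_z(\cdot))$; and second, correctly identifying the cross-term $\iint\langle\widetilde\nabla_d^1 k(z,w),\Phi^\tau(z)\rangle_z\rho^\tau(\dd w)\rho^\tau(\dd z)$ as $\int\langle\widetilde\nabla_d\m_{\rho^\tau}(z),\Phi^\tau(z)\rangle_z\rho^\tau(\dd z)$, which requires the same interchange of $\widetilde\nabla_d$ with integration used in proving Lemma~\ref{grad-formula}. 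Once these two points are settled, the remainder of the argument is bookkeeping that mirrors the standard Euclidean descent lemma for $L$-smooth functions.
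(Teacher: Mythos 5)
Your proposal is correct and follows essentially the same route as the paper's proof: expand the difference via Lemma~\ref{closed-form} and the push-forward change of variables, subtract the first-order terms (identified with $s_\tau\int\langle\widetilde\nabla_d[\m_{\rho^\tau}-\m_\varrho],\Phi^\tau\rangle_z\,\rho^\tau(\dd z)$ using the symmetry of $k$ and the interchange of $\widetilde\nabla_d$ with integration), and bound the remainders by the Lipschitz-gradient condition — the paper applies \eqref{Lipschitz-cond} to the cross term $\iint\{k(\exp_z(s_\tau\Phi^\tau(z)),w)-k(z,w)-\cdots\}\rho^\tau(\dd z)\varrho(\dd w)$ directly, which is exactly your $\delta=0$ specialization before integrating out $w$. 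Your constant bookkeeping ($2Ls_\tau^2$ total) matches the paper's.
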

\begin{proof}[Proof of Proposition~\ref{quantified_decrease}]
Let $\Phi^\tau \Let - \widetilde\nabla_d [\m_{\rho^\tau} -\m_\varrho]$. Then
 from the computation at the beginning of the proof of Lemma~\ref{descent}
and by using  Lemma~\ref{closed-form}, we obtain 
 \begin{align*}
 \calF[\rho^{\tau+1}] - \calF[\rho^\tau]
&= \frac12 \left[\mathrm{MMD}(\exp(s_\tau \Phi^\tau)_{\#}\rho^\tau, \varrho)^2 -  \mathrm{MMD}(\rho^\tau, \varrho)^2\right]\\
&=  \frac12\iint  \Big\{k\Big(\exp_z(s_\tau\Phi^\tau(z)),\exp_w(s_\tau\Phi^\tau(w))\Big) - k(z,w)\Big\} \rho^\tau(\dd z) \rho^\tau(\dd w) \\
&\quad - \iint \Big\{ k\Big(\exp_z(s_\tau\Phi^\tau(z)), w\Big) - k(z,w)\Big\} \rho^\tau(\dd z) \varrho(\dd w).
\end{align*}
Moreover, we have
\begin{align*}
 &\int \langle \widetilde\nabla_d [\m_{\rho^\tau} -\m_\varrho] (z), \Phi^\tau(z)\rangle_z \rho^\tau(\dd z )\\
 &= \iint \langle \widetilde\nabla_d^1 k(z, w), \Phi^\tau(z)\rangle_z \rho^\tau(\dd z ) \rho^\tau(\dd w )
 - \iint \langle \widetilde\nabla_d^1 k(z, w), \Phi^\tau(z)\rangle_z \rho^\tau(\dd z ) \varrho(\dd w )\\
 &= \frac12\left[\iint \langle \widetilde\nabla_d^1 k(z, w), \Phi^\tau(z)\rangle_z \rho^\tau(\dd z ) \rho^\tau(\dd w )
 +\iint \langle \widetilde\nabla_d^2 k(z, w), \Phi^\tau(w)\rangle_w \rho^\tau(\dd z ) \rho^\tau(\dd w )\right]\\
 &\quad - \iint \langle \widetilde\nabla_d^1 k(z, w), \Phi^\tau(z)\rangle_z\rho^\tau(\dd z ) \varrho(\dd w ),
 \end{align*}
 where the last equality is due to the symmetry of $k$ and relation \eqref{metric-gradient-dataset}. Here  $\widetilde\nabla_d^1 k(z, w)$ 
and $\widetilde\nabla_d^2 k(z, w)$ respectively 
denote the gradient $\widetilde\nabla_d$ of the function $z\mapsto k(z,w)$ and  $w\mapsto k(z,w)$. Therefore, it follows that  
 \begin{align*}
 \calF[\rho^{\tau+1}] 
 &- \calF[\rho^\tau]
   -s_\tau\int \langle \widetilde\nabla_d [\m_{\rho^\tau} -\m_\varrho] (z), \Phi^\tau(z)\rangle_z \rho^\tau(\dd z )\\
 &=  \frac12\iint  \left\{k\Big(\exp_z(s_\tau\Phi^\tau(z)),\exp_w(s_\tau\Phi^\tau(w))\Big) - k(z,w)\right.\\
 &\qquad\qquad \left. -\Big[\langle \widetilde\nabla_d^1 k(z, w), s_\tau\Phi^\tau(z)\rangle_z + \langle \widetilde\nabla_d^2 k(z, w), s_\tau\Phi^\tau(w)\rangle_w
 \Big] \right\} \rho^\tau(\dd z) \rho^\tau(\dd w) \\
&\quad - \iint \Big\{ k\Big(\exp_z(s_\tau\Phi^\tau(z)), w\Big) - k(z,w)- \langle \widetilde\nabla_d^1 k(z, w), s_\tau\Phi^\tau(z)\rangle_z\Big\} \rho^\tau(\dd z) \varrho(\dd w).
 \end{align*}
 As $s_\tau \in (0,\e_0]$, we can now use the assumption that $k$ is a Lipschitz-gradient kernel   with constant $L$ to obtain 
 \begin{align*}
 &\calF[\rho^{\tau+1}] - \calF[\rho^\tau]
   +s_\tau \int_{\mc Z} \|\Phi^\tau(z)\|_z^2 \rho^\tau(\dd z)\\
 &\leq   \frac{L}{2} \iint \Big[ \|s_\tau \Phi^\tau(z)\|_z^2 +\|s_\tau \Phi^\tau(w)\|_w^2 \Big]\rho^\tau(\dd z) \rho^\tau(\dd w) + 
L \iint \|s_\tau \Phi^\tau(z)\|_z^2  \rho^\tau(\dd z) \varrho(\dd w)\\
&= 2 L s_\tau^2 \int \|\Phi^\tau(z)\|_z^2 \rho^\tau(\dd z). 
 \end{align*}
 This gives 
 \[
 \calF[\rho^{\tau+1}] - \calF[\rho^\tau]
 \leq  \big( -s_\tau  + 2 L s_\tau^2\big) \int_{\mc Z} \|\Phi^\tau(z)\|_z^2 \rho^\tau(\dd z),
 \]
 and the conclusion of the proposition follows.
\end{proof}

\paragraph{Convergence guarantees.}

For each distribution $\rho$ on $\mc Z$, let $\mathbb{K}_\rho: \mathcal H \to \mathcal H$ be the linear operator defined by
$\mathbb{K}_\rho f(w_1) \Let \langle \tilde{\mathbb{K}}_\rho(w_1, \cdot) , f(\cdot)\rangle_{\mathcal H}$ 
with $\tilde{\mathbb{K}}_\rho: \mc Z\times \mc Z \to \R$ being given  by
\[
\tilde{\mathbb{K}}_\rho(w_1, w_2) = \int_{\mc Z} \langle \widetilde\nabla_d^1 k(z, w_1), \widetilde\nabla_d^1 k(z, w_2)\rangle_z \, \rho(\dd z)
\quad\mbox{for}\quad w_1,\, w_2\in \mc Z.
\]
The next result gives some basic properties of  the operator $\mathbb{K}_\rho$. 
\begin{lemma}\label{operator}
For a differentiable kernel $k$ and for $\rho\in \calP(\mc Z)$, we have  
\begin{enumerate}
\item[i)] $\mathbb{K}_\rho f(w) = \int_{\mc Z} \langle \widetilde\nabla_d^1 k(z, w), \widetilde\nabla_d f(z)\rangle_z \, \rho(\dd z)$ for $f\in\mathcal H$. 
\item[ii)] $\langle \mathbb{K}_\rho f,  g\rangle_{\mathcal H} = \int_{\mc Z} \langle \widetilde\nabla_d f, \widetilde\nabla_d g \rangle_z  \, \rho(\dd z)$ for every $f, \, g\in \mathcal H$. Consequently,
the operator $\mathbb{K}_\rho$ is symmetric and positive, and hence  its spectrum is contained in $[0,+\infty)$.
\end{enumerate}
\end{lemma}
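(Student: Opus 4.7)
The plan is to reduce both claims to a single reproducing-differentiation identity in $\mathcal H$ and then manipulate integrals via Fubini.

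The key step would be to establish the following identity: for every $f\in\mathcal H$, $z\in\mc Z$ and tangent vector $v\in \mathrm T_z\mc Z$,
\begin{equation*}
\langle v,\widetilde\nabla_d f(z)\rangle_z \;=\; \Big\langle f,\; \langle v,\widetilde\nabla_d^1 k(z,\cdot)\rangle_z \Big\rangle_{\mathcal H},
\end{equation*}
where the second argument of the outer $\mathcal H$-inner product denotes the scalar function of the ``$\cdot$'' slot obtained by contracting $v$ against $\widetilde\nabla_d^1 k(z,\cdot)$ pointwise. To prove this, I would pick a smooth curve $\gamma$ in $\mc Z$ with $\gamma(0)=z$ and $\dot\gamma(0)=v$. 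The reproducing property yields $f(\gamma(t))=\langle f,k(\cdot,\gamma(t))\rangle_{\mathcal H}$; differentiating at $t=0$, pulling $\tfrac{\dd}{\dd t}$ inside the $\mathcal H$-inner product, and using symmetry of $k$ to identify $\tfrac{\dd}{\dd t}|_{t=0}k(\cdot,\gamma(t))$ with the function $w\mapsto \langle v,\widetilde\nabla_d^1 k(z,w)\rangle_z$ gives the claim.

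For part (i), I would substitute the definition of $\tilde{\mathbb K}_\rho$ into $\mathbb K_\rho f(w)=\langle \tilde{\mathbb K}_\rho(w,\cdot),f\rangle_{\mathcal H}$, interchange the $\rho$-integral with the $\mathcal H$-inner product via Fubini, and apply the key identity with $v=\widetilde\nabla_d^1 k(z,w)$ to arrive at the stated formula. For part (ii), I would start from the formula in (i), pair it with $g\in\mathcal H$, swap the $\rho$-integral and the $\mathcal H$-inner product a second time, and apply the key identity again, now with $v=\widetilde\nabla_d f(z)$ and $g$ in the role of the test function. This gives the symmetric expression $\langle \mathbb K_\rho f,g\rangle_{\mathcal H}=\int_{\mc Z}\langle \widetilde\nabla_d f,\widetilde\nabla_d g\rangle_z\,\rho(\dd z)$.

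Symmetry of the pointwise Riemannian inner product immediately yields $\langle \mathbb K_\rho f,g\rangle_{\mathcal H}=\langle f,\mathbb K_\rho g\rangle_{\mathcal H}$, and taking $g=f$ yields $\langle \mathbb K_\rho f,f\rangle_{\mathcal H}=\int_{\mc Z}\|\widetilde\nabla_d f(z)\|_z^2\,\rho(\dd z)\ge 0$, from which the spectrum inclusion in $[0,+\infty)$ follows by the standard spectral characterization for symmetric positive operators on a Hilbert space. The main obstacle is the rigorous justification of the two interchanges, namely pulling $\tfrac{\dd}{\dd t}$ inside the $\mathcal H$-inner product and swapping the $\rho$-integral with the $\mathcal H$-inner product; both rest on standard domination arguments that use the assumed differentiability of $k$ together with the fact that, in a local orthonormal frame at $z$, each component of $\widetilde\nabla_d^1 k(z,\cdot)$ lies in $\mathcal H$ with $\mathcal H$-norm controlled pointwise by a derivative of $k$.
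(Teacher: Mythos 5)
Your proposal is correct and follows essentially the same route as the paper's proof: the key identity you isolate (obtained by differentiating the reproducing property along a curve) is exactly the combination of the paper's equation relating the two inner products with the fact that $\widetilde\nabla_d f(z) = \langle \widetilde\nabla_d^1 k(z,\cdot), f(\cdot)\rangle_{\mathcal H}$, and both parts (i) and (ii) then follow by the same two applications of that identity after swapping the $\rho$-integral with the $\mathcal H$-inner product. The only difference is cosmetic: you state the identity for an arbitrary tangent vector $v$ and flag the domination arguments needed to justify the interchanges, which the paper leaves implicit.
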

\begin{proof}[Proof of Lemma~\ref{operator}]
 By using the definition of the  Riemannian metric $\langle \cdot,\cdot \rangle_z$ given in \eqref{R-metric-Z}, it  can be verified for $f\in \mathcal H$ that 
\begin{equation*}
\Big\langle  \langle \widetilde\nabla_d^1 k(z,w), \widetilde\nabla_d^1 k(z,  \cdot)\rangle_z, f(\cdot) \Big\rangle_{ \mathcal H} = \Big\langle  \widetilde\nabla_d^1 k(z, w), \langle \widetilde\nabla_d^1 k(z, \cdot), f(\cdot)\rangle_{ \mathcal H} \Big\rangle_z.
\end{equation*}
As $f(z) = \langle k(z,\cdot), f(\cdot) \rangle_{ \mathcal H}$, we moreover have $ \widetilde\nabla_d f(z) = \langle  \widetilde\nabla_d^1 k(z, \cdot), f(\cdot) \rangle_{ \mathcal H}$.
Therefore,
\begin{equation}\label{two-products}
\Big\langle  \langle \widetilde\nabla_d^1 k(z, w), \widetilde\nabla_d^1 k(z,  \cdot)\rangle_z, f(\cdot) \Big\rangle_{ \mathcal H} 
= \big\langle  \widetilde\nabla_d^1 k(z, w),  \widetilde\nabla_d  f(z) \big\rangle_z.
\end{equation}
Using the definition of $\mathbb{K}_\rho$ and \eqref{two-products}, we obtain
\begin{align*}
\mathbb{K}_\rho f(w) 
&= \Big\langle \int \langle \widetilde\nabla_d^1 k(z, w), \widetilde\nabla_d^1 k(z, \cdot)\rangle_z \, \rho(\dd z),
f(\cdot)\Big\rangle_{ \mathcal H} \\
&= \int \Big\langle  \langle \widetilde\nabla_d^1 k(z, w), \widetilde\nabla_d^1 k(z, \cdot)\rangle_z,
f(\cdot)\Big\rangle_{ \mathcal H} \, \rho(\dd z)\\
&= \int \big\langle  \widetilde\nabla_d^1 k(z,w),  \widetilde\nabla_d  f(z) \big\rangle_z \, \rho(\dd z),
\end{align*}
which gives i).  Now  for $f,\, g\in  \mathcal H$, we can use part i) and similar arguments leading to \eqref{two-products}
to obtain 
 \begin{align*}
\langle \mathbb{K}_\rho f, g\rangle_{ \mathcal H}
&= \Big\langle \int \big\langle  \widetilde\nabla_d^1 k(z, \cdot),  \widetilde\nabla_d  f(z) \big\rangle_z \, \rho(\dd z),
g(\cdot)\Big\rangle_{ \mathcal H} \\
&= \int \Big\langle  \langle \widetilde\nabla_d^1 k(z, \cdot), \widetilde\nabla_d f(z)\rangle_z,
g(\cdot)\Big\rangle_{ \mathcal H} \, \rho(\dd z)\\
&= \int \Big\langle  \langle \widetilde\nabla_d^1 k(z, \cdot), g(\cdot)\rangle_{ \mathcal H},
\widetilde\nabla_d f(z)\Big\rangle_z \, \rho(\dd z)
= \int \big\langle  \widetilde\nabla_d  g(z),  \widetilde\nabla_d  f(z) \big\rangle_z \, \rho(\dd z).
\end{align*}
This implies in particular that  the operator $\mathbb{K}_\rho$ is  symmetric (i.e. 
$\langle \mathbb{K}_\rho f, g\rangle_{ \mathcal H} = \langle \mathbb{K}_\rho g, f\rangle_{ \mathcal H}$ for $f,\, g\in \mathcal H$) and 
 positive (i.e. $\langle \mathbb{K}_\rho f, f\rangle_{ \mathcal H}\geq 0$ for $f\in \mathcal H$). Since any symmetric, positive, and linear operator 
 must have nonnegative eigenvalues, we have completed the proof.
\end{proof}
Our next result gives a quantified decay rate  for the objective function.
\begin{proposition}[Objective value decay]\label{convergence}
There hold:
\begin{enumerate}[leftmargin = 5mm]
\item[i)] Let $\rho_t$ be given by  \eqref{eq:gradient-flow}, and let $\lambda_t \geq 0$ 
be any constant satisfying
 \begin{equation}\label{ass_continuous}
      \langle  \mathbb{K}_{\rho_t}f_t, f_t \rangle_{\mathcal H} \geq \lambda_t \|f_t\|_{\mathcal H}^2\quad\mbox{with}\quad f_t \Let \m_{\rho_t} -\m_{\varrho}.
   \end{equation}
Then 
$\calF[\rho_t]   \leq \calF[\rho_0]   \exp{\big(-2\int_0^t \lambda_s \dd s\big)}$ for any $t\geq 0$. In particular,   $\lim_{t\to\infty}\mathrm{MMD}(\rho_t, \varrho)= 0$ if $\int_0^\infty \lambda_t\, \dd t =+\infty$.
\item[ii)]  Let $\rho^\tau$ be given by scheme \eqref{k_iteration}, and $\lambda_\tau \geq 0$
be any constant satisfying
 \begin{equation*}
      \langle  \mathbb{K}_{\rho_t}f^\tau, f^\tau \rangle_{\mathcal H} \geq \lambda_\tau \|f^\tau|_{\mathcal H}^2\quad\mbox{with}\quad f^\tau \Let \m_{\rho^\tau} -\m_{\varrho}.
   \end{equation*}
Assume that 
$k$ is a Lipschitz-gradient kernel
and  step size $s_\tau$ satisfies  $s_\tau \lambda_\tau < 1$,
then we have
$\calF[\rho^{\tau+1}] \leq \calF[\rho^0] \exp\big(-{\sum_{i=0}^\tau} s_i \lambda_i\big)$ for any $\tau\geq 0$. 
In particular, $\lim_{\tau\to \infty}\mathrm{MMD}(\rho^\tau, \varrho) =0$ if  $ \sum_{\tau=0}^\infty s_\tau \lambda_\tau = +\infty$.
\end{enumerate}
\end{proposition}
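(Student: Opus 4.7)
The plan is to combine the per-step (or infinitesimal) decrease of $\calF$ established earlier with the spectral lower bound on $\mathbb{K}_\rho$ supplied by the hypothesis, converting each into a Grönwall-type comparison that yields the exponential decay. The glue in both parts is Lemma~\ref{operator}(ii), which lets us rewrite the integral of $\|\widetilde\nabla_d[\m_\rho-\m_\varrho]\|_z^2$ against $\rho$ as $\langle \mathbb{K}_\rho f, f\rangle_{\mathcal H}$ with $f = \m_\rho - \m_\varrho$, so that the spectral assumption can be applied directly.

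For part (i), I would start from Proposition~\ref{decrease-rate}, which gives $\frac{\dd}{\dd t}\calF[\rho_t] = -\int_{\mc Z}\|\widetilde\nabla_d[\m_{\rho_t}-\m_\varrho](z)\|_z^2 \, \rho_t(\dd z)$. Applying Lemma~\ref{operator}(ii) to $f = g = f_t \Let \m_{\rho_t}-\m_\varrho$ rewrites the right-hand side as $-\langle \mathbb{K}_{\rho_t} f_t, f_t\rangle_{\mathcal H}$. The hypothesis \eqref{ass_continuous} combined with the identity $\calF[\rho_t] = \tfrac12\|f_t\|_{\mathcal H}^2$ (from Lemma~\ref{closed-form}) then yields the differential inequality $\frac{\dd}{\dd t}\calF[\rho_t] \leq -2\lambda_t\,\calF[\rho_t]$. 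Grönwall's lemma integrates this to $\calF[\rho_t]\leq \calF[\rho_0]\exp(-2\int_0^t \lambda_s\,\dd s)$. Since $k$ is characteristic, $\mathrm{MMD}(\rho_t,\varrho)=0$ iff $\rho_t=\varrho$, so divergence of $\int_0^\infty \lambda_t\,\dd t$ forces $\mathrm{MMD}(\rho_t,\varrho)\to 0$.

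For part (ii), I would mirror the argument at the discrete level by replacing Proposition~\ref{decrease-rate} with Proposition~\ref{quantified_decrease}. Choosing the step size so that $s_\tau(1-2Ls_\tau) \geq s_\tau/2$ (for instance $s_\tau\leq 1/(4L)$) and applying Lemma~\ref{operator}(ii) as above produces $\calF[\rho^{\tau+1}] - \calF[\rho^\tau] \leq -\tfrac{s_\tau}{2}\langle\mathbb{K}_{\rho^\tau}f^\tau,f^\tau\rangle_{\mathcal H}$ with $f^\tau \Let \m_{\rho^\tau}-\m_\varrho$. The spectral hypothesis then upgrades this to the one-step contraction $\calF[\rho^{\tau+1}] \leq (1 - s_\tau \lambda_\tau)\calF[\rho^\tau]$, where the condition $s_\tau\lambda_\tau < 1$ guarantees the multiplicative factor is nonnegative. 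Applying the elementary bound $1-x\leq e^{-x}$ and telescoping over $i=0,\dots,\tau$ gives $\calF[\rho^{\tau+1}]\leq \calF[\rho^0]\exp\bigl(-\sum_{i=0}^\tau s_i\lambda_i\bigr)$, and the characteristic-kernel assumption again converts the decay of $\calF$ into $\mathrm{MMD}(\rho^\tau,\varrho)\to 0$.

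The main obstacle is bookkeeping rather than a deep new idea: one must be careful about how the $(1-2Ls_\tau)$ correction from Proposition~\ref{quantified_decrease} is absorbed into the admissible step-size regime and reconciled with the stated condition $s_\tau\lambda_\tau<1$. Conceptually, the reason no displacement convexity is needed---despite the obstruction recalled at the end of Section~\ref{sec:euler}---is that the spectral lower bound \eqref{ass_continuous} on $\mathbb{K}_{\rho_t}$ plays the role of a Polyak--\L{}ojasiewicz-type inequality for the squared $\mathrm{MMD}$: it directly links the integrated squared Riemannian gradient appearing in Proposition~\ref{decrease-rate} to $\calF$ itself, which is exactly what drives the exponential decay in both the continuous and discrete regimes.
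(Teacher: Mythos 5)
Your proposal is correct and follows essentially the same route as the paper: part (i) combines Proposition~\ref{decrease-rate} with Lemma~\ref{operator}(ii) and the spectral bound to get $\partial_t\|f_t\|_{\mathcal H}^2\leq -2\lambda_t\|f_t\|_{\mathcal H}^2$ and integrates, while part (ii) uses Proposition~\ref{quantified_decrease} with the implicit step-size restriction $s_\tau\leq 1/(4L)$ to obtain the one-step contraction $\calF[\rho^{\tau+1}]\leq(1-s_\tau\lambda_\tau)\calF[\rho^\tau]$ and then applies $1-x\leq e^{-x}$. You even correctly identify the same bookkeeping point the paper's proof relies on, namely that the $(1-2Ls_\tau)$ factor must be absorbed via $s_\tau\leq 1/(4L)$, an assumption the paper invokes in the proof though it is not stated in the proposition itself.
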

Condition $\int_0^\infty \lambda_t\, \dd t =+\infty$ guaranteeing the  convergence in 
$\mathrm{MMD}$ holds true for example if $\lambda_t \geq  c \, t^{-1}$ for some constant $c>0$ and for large $t$. We note also that
Condition~\eqref{ass_continuous} is satisfied if $\lambda_t$ is chosen to be the minimum eigenvalue of  operator  $\mathbb{K}_{\rho_t}$. 
Thus  Proposition~\ref{convergence} implies in particular  that $\rho_t$  globally converges 
in $\mathrm{MMD}$ 
if the  minimum eigenvalue $\lambda_t$ of operator $\mathbb{K}_{\rho_t}$ satisfies the integrability condition $\int_0^\infty \lambda_t\, \dd t =+\infty$. 
The proof of Proposition~\ref{convergence}
relies on the following proposition, which shows  that the dynamic of the mean embedding is governed by  the  equation $\partial_t (\m_{\rho_t} -\m_{\varrho}) = - \mathbb{K}_{\rho_t} (\m_{\rho_t} -\m_{\varrho})$. 
\begin{proposition}[Dynamic of the mean embedding]
\label{prop:dynamic_eq}
Let $t\in [0,\infty)\longmapsto \rho_t$ be the gradient flow given  by equation \eqref{eq:gradient-flow}. For each $t\geq 0$, take $f_t \Let \m_{\rho_t} -\m_{\varrho}$. Then $f_t$ is a solution of the  linear partial differential equation 
\begin{equation}\label{PDE}
\partial_t f_t = - \mathbb{K}_{\rho_t} f_t \quad \mbox{in}\quad [0,\infty)\times \mc Z.
\end{equation}
\end{proposition}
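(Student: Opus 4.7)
The plan is to compute $\partial_t f_t$ directly from the gradient flow equation~\eqref{eq:gradient-flow} and then identify the resulting expression with $-\mathbb{K}_{\rho_t} f_t$ via Lemma~\ref{operator}(i). Since $\m_\varrho$ does not depend on $t$, we have $\partial_t f_t = \partial_t \m_{\rho_t}$, so the problem reduces to showing that $\partial_t \m_{\rho_t}(w) = -\mathbb{K}_{\rho_t} f_t(w)$ for every $w \in \mc Z$.

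First, I would fix $w \in \mc Z$ and differentiate the kernel mean embedding definition
\[
\m_{\rho_t}(w) = \int_{\mc Z} k(z,w)\, \rho_t(\dd z)
\]
under the integral sign to obtain $\partial_t \m_{\rho_t}(w) = \int_{\mc Z} k(z,w)\, \partial_t \rho_t(\dd z)$. Substituting the gradient flow equation~\eqref{eq:gradient-flow} then yields
\[
\partial_t \m_{\rho_t}(w) = \int_{\mc Z} k(z,w)\, \div_d\!\big(\rho_t \widetilde\nabla_d f_t\big)(\dd z).
\]
Applying the integration-by-parts formula that defines the divergence operator (from Section~3 in the main text) with test function $z\mapsto k(z,w)$, whose gradient is by definition $\widetilde\nabla_d^1 k(z,w)$, gives
\[
\partial_t \m_{\rho_t}(w) = -\int_{\mc Z} \big\langle \widetilde\nabla_d^1 k(z,w),\, \widetilde\nabla_d f_t(z)\big\rangle_z\, \rho_t(\dd z).
\]
Finally, the right-hand side is exactly $-\mathbb{K}_{\rho_t} f_t(w)$ by Lemma~\ref{operator}(i). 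Combining with $\partial_t f_t = \partial_t \m_{\rho_t}$ yields equation~\eqref{PDE}.

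The only subtlety is the justification of the integration-by-parts step: $z\mapsto k(z,w)$ need not have compact support, so one should argue that the divergence identity still applies, either because $k$ and its gradient have enough integrability against $\rho_t$ (e.g. bounded kernel with bounded gradient, as in the Lipschitz-gradient setting considered in the paper) or by an approximation with compactly supported cutoffs. I expect this is the main technical obstacle; all remaining steps are immediate consequences of definitions and of Lemma~\ref{operator}(i).
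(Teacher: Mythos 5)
Your proposal is correct and follows essentially the same route as the paper's proof: differentiate the mean embedding under the integral, substitute the continuity/gradient-flow equation, apply the integration-by-parts definition of $\div_d$ with test function $z\mapsto k(z,w)$, and identify the result as $-\mathbb{K}_{\rho_t}f_t(w)$ via Lemma~\ref{operator}(i). Your remark about the lack of compact support of $z\mapsto k(z,w)$ is a fair point that the paper's proof passes over silently, so if anything you are slightly more careful than the original.
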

\begin{proof}[Proof of Proposition~\ref{prop:dynamic_eq}]
 From the definition of the mean embedding and by using equation \eqref{eq:gradient-flow}, we have
 \begin{align*}
   \partial_t f_t(w) = \partial_t \m_{\rho_t}(w)
   &= \partial_t \int_{\mc Z} k(z, w) \, \rho_t(\dd z)= \int_{\mc Z} k(z, w) \, \partial_t\rho_t(\dd z)\\
   &= \int_{\mc Z} k(z, w) \, \div_d (\rho_t \widetilde\nabla_d f_t ) (\dd z).
 \end{align*}
 Using the definition of the divergence operator $\div_d$ at the end of Section~3.1, we further obtain 
 \begin{align*}
   \partial_t f_t(w) 
   &= - \int_{\mc Z} \langle \widetilde\nabla_d^1k(z,w), \widetilde\nabla_d f_t(z)\rangle_z \, \rho_t(\dd z).
 \end{align*}
 It then follows from part i) of Lemma~\ref{operator} that $ \partial_t f_t(w) = - \mathbb{K}_{\rho_t} f_t(w)$.
 This completes the proof.
\end{proof}

We are now ready to present the proof of Proposition~\ref{convergence}.

\begin{proof}[Proof of Proposition~\ref{convergence}]
Let $f_t \Let \m_{\rho_t} -\m_{\varrho}$. Then  we have from  Proposition~\ref{decrease-rate} and part ii) of Lemma~\ref{operator} that 
$\partial_t \|f_t\|_{ \mathcal H}^2 =-2 \langle \mathbb{K}_{\rho_t} f_t, f_t  \rangle_{ \mathcal H}$. But as 
 \begin{equation*}
      \langle  \mathbb{K}_{\rho_t}f_t, f_t \rangle_{\mathcal H} \geq \lambda_t \|f_t\|_{\mathcal H}^2
   \end{equation*}
  by Condition~\ref{ass_continuous}, we infer  that
$\partial_t \|f_t\|_{ \mathcal H}^2 \leq -2 \lambda_t \|f_t\|_{ \mathcal H}^2$, and hence $\partial_t \Big(\log{\|f_t\|_{ \mathcal H}^2}\Big) \leq -2 \lambda_t$. By 
integrating from $0$ to $t$, one gets $\log{\|f_t\|_{ \mathcal H}^2} - \log{\|f_0\|_{ \mathcal H}^2}\leq -2\int_0^t \lambda_s \, \dd s$. We next take  
exponential to obtain 
\[
\|f_t\|_{ \mathcal H}^2 \leq \|f_0\|_{ \mathcal H}^2 \, \exp{\Big(-2\int_0^t \lambda_s \, \dd s\Big)}.
\]
This can be rewritten as $\calF[\rho_t]   \leq \calF[\rho_0]   \exp{\big(-2\int_0^t \lambda_s \dd s\big)}$ for  $t\geq 0$.
In particular, $\calF[\rho_t]$ (and hence  $\mathrm{MMD}(\rho_t, \varrho)$) tends to zero if
$\int_0^\infty \lambda_t\, \dd t =+\infty$. This completes the proof for part i).

To prove ii), let $f^\tau \Let \m_{\rho^\tau} -\m_\varrho$. Notice that in contrast to the continuous case, upper indices are used for $f^\tau$ and 
$\rho^\tau$ in the discrete case.
Then by using Proposition~\ref{quantified_decrease} together with part ii) of Lemma~\ref{operator} and the assumption $s_\tau\in (0, \frac{1}{4L}]$ we have
\[
\calF[\rho^{\tau+1}] - \calF[\rho^\tau] \leq -\frac12 s_\tau  \langle  \mathbb{K}_{\rho^\tau}f^\tau, f^\tau \rangle_{ \mathcal H}.
\]
But as  
$
\langle  \mathbb{K}_{\rho^\tau}f^\tau, f^\tau \rangle_{\mathcal H} \geq \lambda_\tau \|f^\tau\|_{\mathcal H}^2$
due to our assumption, we obtain  $ \calF[\rho^{\tau+1}] - \calF[\rho^\tau] \leq - s_\tau \lambda_\tau  \calF[\rho^\tau]$, or 
\[
\calF[\rho^{\tau+1}] \leq (1- s_\tau \lambda_\tau)  \calF[\rho^\tau]
\]
for every $\tau\geq 0$. As $1- s_\tau \lambda_\tau> 0$, it follows by iteration that $ \calF[\rho^{\tau+1}] \leq \calF[\rho^0]
{\displaystyle \prod_{i=0}^\tau (1- s_i \lambda_i)}$. Due to $1- x\leq \exp(-x)$ for every $x\geq 0$, we infer that $\calF[\rho^{\tau+1}] \leq \calF[\rho^0] \exp\big(-{\displaystyle\sum_{i=0}^\tau} s_i \lambda_i\big)  $ for  $\tau\geq 0$. 
In particular, $\calF[\rho^\tau]$ (and hence  $\mathrm{MMD}(\rho^\tau, \varrho)$) tends to zero if
 ${\displaystyle \sum_{\tau=0}^\infty} s_\tau \lambda_\tau = +\infty$.
\end{proof}

\paragraph{For Proposition~\ref{conv-noisy-gradient}}

\begin{proof}[Proof of Proposition~\ref{conv-noisy-gradient}]
Let  $h(z)\Let \exp_z(s_\tau \Phi^\tau(z))$ for $z\in \mc \mc Z$.  Then $\rho^{\tau+1}$ can be expressed as 
\begin{equation*}
\rho^{\tau+1} = h_{\#} \rho^{\tau,\beta_\tau} = (h \circ f^{\beta_\tau})_{\#}(\rho^\tau \otimes g).
\end{equation*}
By  the computation at the beginning of the proof of Lemma~\ref{descent}
using  Lemma~\ref{closed-form},  we obtain 
 \begin{align*}
 \calF[\rho^{\tau+1}] 
 &- \calF[\rho^\tau]
= \frac12 \left[\mathrm{MMD}\Big((h \circ f^{\beta_\tau})_{\#}(\rho^\tau\otimes g), \varrho\Big)^2 -  \mathrm{MMD}(\rho^\tau, \varrho)^2\right]\\
&=  \frac12\iiiint  \Big\{k\Big(h(f^{\beta_\tau}(z,u)),h(f^{\beta_\tau}(w,v))\Big) - k(z,w)\Big\} \rho^\tau(\dd z) g(\dd u) \rho^\tau(\dd w) g(\dd v) \\
&\quad - \iiint \Big\{ k\Big(h(f^{\beta_\tau}(z,u)), w\Big) - k(z,w)\Big\} \rho^\tau(\dd z) g(\dd u) \varrho(\dd w).
\end{align*}
Moreover, we have
\begin{align*}
 I &\Let \int \langle \widetilde\nabla_d [\m_{\rho^\tau} -\m_\varrho] (z), \Phi^\tau(z)\rangle_z \rho^{\tau,\beta_\tau}(\dd z )\\
 &= \iint \langle \widetilde\nabla_d^1 k(z, w), \Phi^\tau(z)\rangle_z \rho^{\tau,\beta_\tau}(\dd z ) \rho^\tau(\dd w )
 - \iint \langle \widetilde\nabla_d^1 k(z, w), \Phi^\tau(z)\rangle_z \rho^{\tau,\beta_\tau}(\dd z ) \varrho(\dd w )\\
 &= \frac12\iint \langle \widetilde\nabla_d^1 k(z, w), \Phi^\tau(z)\rangle_z \rho^{\tau,\beta_\tau}(\dd z ) \rho^\tau(\dd w )
 +\frac12\iint \langle \widetilde\nabla_d^2 k(z, w), \Phi^\tau(w)\rangle_w \rho^{\tau,\beta_\tau}(\dd w ) \rho^\tau(\dd z)\\
 &\quad - \iint \langle \widetilde\nabla_d^1 k(z, w), \Phi^\tau(z)\rangle_z\rho^{\tau,\beta_\tau}(\dd z ) \varrho(\dd w )\\
 &= \frac12\iiint \Big\langle \widetilde\nabla_d^1 k(f^{\beta_\tau}(z,u), w), \Phi^\tau(f^{\beta_\tau}(z,u))\Big\rangle_{f^{\beta_\tau}(z,u)} \rho^\tau(\dd z )  g(\dd u)\rho^\tau(\dd w )\\
 &\quad +\frac12\iiint \Big\langle \widetilde\nabla_d^2 k(z, f^{\beta_\tau}(w,v)), \Phi^\tau(f^{\beta_\tau}(w,v))\Big\rangle_{f^{\beta_\tau}(w,v)}  \rho^\tau(\dd w ) g(\dd v) \rho^\tau(\dd z )\\
 &\quad - \iiint \Big\langle \widetilde\nabla_d^1 k(f^{\beta_\tau}(z,u), w), \Phi^\tau(f^{\beta_\tau}(z,u))\Big\rangle_{f^{\beta_\tau}(z,u)}\rho^\tau(\dd z )  g(\dd u) \varrho(\dd w ),
 \end{align*}
 where the third  equality is due to the symmetry of $k$ and relation \eqref{metric-gradient-dataset}. Therefore, it follows that  
 \begin{align*}
 &\calF[\rho^{\tau+1}] - \calF[\rho^\tau]
   -s_\tau I \\
 &=  \frac12\iiiint  \left\{k\Big(h(f^{\beta_\tau}(z,u)),h(f^{\beta_\tau}(w,v))\Big) - k(z,w)\right.\\
 &\qquad\qquad\qquad  \left. -\Big[\langle \widetilde\nabla_d^1 k(f^{\beta_\tau}(z,u), w), s_\tau\Phi^\tau(f^{\beta_\tau}(z,u))\rangle_{f^{\beta_\tau}(z,u)}\right.\\
 &\left. \qquad\qquad\qquad\quad + \langle \widetilde\nabla_d^2 k(z, f^{\beta_\tau}(w,v)), s_\tau\Phi^\tau(f^{\beta_\tau}(w,v))\rangle_{f^{\beta_\tau}(w,v)}
 \Big] \right\} \rho^\tau(\dd z) g(\dd u) \rho^\tau(\dd w) g(\dd v)\\
&\quad - \iiint \Big\{ k\Big(h(f^{\beta_\tau}(z,u)), w\Big) - k(z,w)- \langle \widetilde\nabla_d^1 k(f^{\beta_\tau}(z,u), w), s_\tau\Phi^\tau(f^{\beta_\tau}(z,u))
\rangle_{f^{\beta_\tau}(z,u)}\Big\}\\
&\qquad\qquad\qquad\qquad \qquad\qquad\qquad\qquad \qquad\qquad\qquad\qquad \qquad\qquad\qquad\quad \rho^\tau(\dd z) g(\dd u) \varrho(\dd w).
 \end{align*}
 As $h(z)=\exp_z(s_\tau \Phi^\tau(z))$ and $s_\tau \in (0,\e_0]$, we can now use the Lipschitz-gradient condition \eqref{Lipschitz-cond} for $k$ to obtain 
 \begin{align*}
 &\calF[\rho^{\tau+1}] - \calF[\rho^\tau]
   -s_\tau I\\
 &\leq   \frac{L}{2} \iiiint \Big[ \|s_\tau \Phi^\tau(f^{\beta_\tau}(z,u) )\|_{f^{\beta_\tau}(z,u)}^2 +\|s_\tau \Phi^\tau(f^{\beta_\tau}(w,v))\|_{f^{\beta_\tau}(w,v)}^2 
 \Big]\rho^\tau(\dd z) g(\dd u)\rho^\tau(\dd w) g(\dd v)\\
 &\quad + 
L \iiint \|s_\tau \Phi^\tau(f^{\beta_\tau}(z,u))\|_{f^{\beta_\tau}(z,u)}^2  \rho^\tau(\dd z) \varrho(\dd w) g(\dd u)\\
&= 2 L s_\tau^2 \iint \|\Phi^\tau(f^{\beta_\tau}(z,u))\|_{f^{\beta_\tau}(z,u)}^2  \rho^\tau(\dd z) g(\dd u). 
 \end{align*}
 Using the definition $\rho^{\tau, \beta_\tau} = f^{\beta_\tau}_{\#}(\rho^\tau \otimes g)$ and the fact 
 $I = -\int_{\mc Z} \|\Phi^\tau(z)\|_z^2 \, \rho^{\tau,\beta_\tau}(\dd z )$, we can rewrite this more compactly as 
 \begin{align*}
 \calF[\rho^{\tau+1}] - \calF[\rho^\tau] 
 &\leq -s_\tau\big(1- 2L s_\tau\big) \int_{\mc Z} \|\Phi^\tau(z)\|_z^2 \, \rho^{\tau,\beta_\tau}(\dd z )\\
 &= -s_\tau\big(1- 2L s_\tau\big)\int_{\mc Z} \|\widetilde\nabla_d [\m_{\rho^\tau} -\m_\varrho](z)\|_z^2 \, \rho^{\tau, \beta_\tau}(\dd z).
 \end{align*}
 This together with condition \eqref{noise-level-cond} gives 
 \[
 \calF[\rho^{\tau+1}]  \leq (1-a_\tau) \calF[\rho^\tau]\quad \mbox{with}\quad 
a_\tau \Let \lambda s_\tau\big(1- 2L s_\tau\big)\beta_\tau^2.
 \]
 In particular, we must have  $a_i\leq 1$. By iterating this estimate, we obtain 
 \begin{equation}\label{prod-est}
 \calF[\rho^{\tau+1}]  \leq \calF[\rho^0] \, \prod_{i=0}^\tau (1-a_i). 
 \end{equation}
Due to $1- x\leq  \exp(-x)$ for every number $x\geq 0$, we get $\prod_{i=0}^\tau (1-a_i)\leq \exp(-\sum_{i=0}^\tau a_i)$. This together with \eqref{prod-est} yields the conclusion of the proposition. 
\end{proof}

\newpage
\section{Implementation Details And Additional Results} \label{appendix:implementation}
We use $\nabla_{\mathbb B}^1 k(x,\mu, \Sigma, w)$ to denote the last component in \eqref{metric-gradient-dataset} for the gradient $\widetilde\nabla_d$
of the function $ (x,\mu, \Sigma) \mapsto k(x,\mu, \Sigma, w)$. Precisely,
\[
\nabla_{\mathbb B}^1 k(x,\mu, \Sigma, w) \Let 2 [\nabla_\Sigma k(x,\mu, \Sigma, w)]\Sigma + 2 \Sigma [\nabla_\Sigma k(x,\mu, \Sigma, w)].
\]

\subsection{Algorithms}

\begin{algorithm}[ht]
	\caption{Discretized Gradient Flow Algorithm for Scheme~\eqref{k_iteration} -- Detailed Version of Algorithm~\ref{alg:nonoise}}
	\label{alg:nonoise-extended}
	\begin{algorithmic}[1]
		\STATE {\bfseries Input:} a source distribution $\rho^0 = \frac1N \sum_{i=1}^N \delta_{(x^0_i, \mu^0_i, \Sigma^0_i)}$, a sample $\frac1M \sum_{j=1}^M \delta_{(\bar x_j, \bar\mu_j, \bar \Sigma_j)}$ for the target distribution $\varrho$,  a number $T$ of iterations for training,  a sequence of step sizes $s_\tau>0$ with $\tau=0,1,...,T$, and a kernel $k$.
		\STATE {\bfseries Initialization:} 
		\STATE Compute $\displaystyle (\bar\Psi_1, \bar\Psi_2, \bar\Psi_3) (x, \mu, \Sigma) = \frac1M \sum_{j=1}^M (\nabla_x, \nabla_\mu, \nabla_{\mathbb B}^1) k(x,\mu,\Sigma,\bar x_j, \bar\mu_j, \bar \Sigma_j)$\;
        \STATE $\tau \leftarrow 0$\;
 
        \WHILE{$\tau < T$}
        \STATE Compute $\displaystyle (\Psi_1^\tau, \Psi_2^\tau, \Psi_3^\tau)(x,\mu, \Sigma)  =  \frac1N \sum_{i=1}^N  (\nabla_x, \nabla_\mu, \nabla_{\mathbb B}^1) k(x,\mu,\Sigma,x^{\tau}_i,  \mu^{\tau}_i, \Sigma^{\tau}_i)$\;
            \FOR{$i = 1, \ldots, N$}
                \STATE $x^{\tau+1}_i \leftarrow  x^{\tau}_i +s_\tau (\bar\Psi_1 -\Psi_1^\tau)(x^\tau_i, \mu^\tau_i, \Sigma^\tau_i)$\;
                \STATE $\mu^{\tau+1}_i \leftarrow  \mu^{\tau}_i + s_\tau (\bar\Psi_2 - \Psi_2^\tau)(x^\tau_i, \mu^\tau_i, \Sigma^\tau_i)$\;
                \STATE $\Sigma^{\tau+1}_i \leftarrow  \Big( I +  s_\tau\rmL_{\Sigma^\tau_i}\big[(\bar\Psi_3-\Psi_3^\tau)(x^\tau_i, \mu^\tau_i, \Sigma^\tau_i)\big] \Big) \Sigma^\tau_i \Big( I + s_\tau \rmL_{\Sigma^\tau_i}[(\bar\Psi_3-\Psi_3^\tau)(x^\tau_i, \mu^\tau_i, \Sigma^\tau_i)] \Big)$\;

            \ENDFOR
            
        \STATE Set $\tau \leftarrow \tau+ 1$ 
        \ENDWHILE

		\STATE{\bfseries Output:} $\displaystyle \rho^T = \frac1N \sum_{i=1}^N \delta_{(x^T_i, \mu^T_i, \Sigma^T_i)}$
	\end{algorithmic}
\end{algorithm}

\begin{algorithm}[ht]
	\caption{Discretized Gradient Flow Algorithm for Scheme~\eqref{noisy-gradient-algo}}
	\label{alg:noisy}
	\begin{algorithmic}[1]
		\STATE {\bfseries Input:} a source distribution $\rho^0 = \frac1N \sum_{i=1}^N \delta_{(x_i, \mu_i, \Sigma_i)}$, a target distribution $\varrho = \frac1M \sum_{j=1}^M \delta_{(\bar x_j, \bar\mu_j, \bar \Sigma_j)}$,  number of iterations $T$, step sizes $s_\tau>0$, noise levels $\beta_\tau$, and a kernel $k$.
		\STATE {\bfseries Initialization:} 
		\STATE Compute $\displaystyle (\bar\Psi_1, \bar\Psi_2, \bar\Psi_3) (x, \mu, \Sigma) = \frac1M \sum_{j=1}^M (\nabla_x, \nabla_\mu, \nabla_{\mathbb B}^1) k(x,\mu,\Sigma,\bar x_j, \bar\mu_j, \bar \Sigma_j)$\;
        \STATE $\tau \leftarrow 0$\;
        \WHILE{$\tau < T$}
        \STATE Compute $\displaystyle (\Psi_1^\tau, \Psi_2^\tau, \Psi_3^\tau)(x,\mu, \Sigma)  =  \frac1N \sum_{j=1}^N  (\nabla_x, \nabla_\mu, \nabla_{\mathbb B}^1) k(x,\mu,\Sigma,x^{\tau}_j, 
 \mu^{\tau}_j, \Sigma^{\tau}_j)$\; 
            \FOR{$i = 1, \ldots, N$}
            \STATE Perturb $x_i^{\tau, p} \leftarrow x_i^\tau + \beta_\tau\mc N_{\R^m}(0, 1)$ and $\mu_i^{\tau, p}
            \leftarrow \mu_i^\tau + \beta_\tau \mc N_{\R^n}(0,1)$ 
            \STATE Set $S \leftarrow \beta_\tau \mc N_{\mS^n}(0, 1)$ and perturb $\Sigma_i^{\tau, p} \leftarrow ( I +  \rmL_{\Sigma_i^\tau}[S] ) \Sigma_i^\tau  ( I +  \rmL_{\Sigma_i^\tau}[S] )$ 
                \STATE $x^{\tau+1}_i \leftarrow  x_i^{\tau, p} +s_\tau (\bar\Psi_1 -\Psi_1^\tau)(x^{\tau, p}_i, \mu^{\tau, p}_i, \Sigma^{\tau, p}_i)$\;
                \STATE $\mu^{\tau+1}_i \leftarrow  \mu_i^{\tau, p} + s_\tau (\bar\Psi_2 - \Psi_2^\tau)(x^{\tau, p}_i, \mu^{\tau, p}_i, \Sigma^{\tau, p}_i)$\;
                \STATE $\Sigma^{\tau+1}_i \leftarrow  \Big( I +  s_\tau\rmL_{\Sigma_i^{\tau, p} }\big[(\bar\Psi_3-\Psi_3^\tau)(x^{\tau, p}_i, \mu^{\tau, p}_i, \Sigma^{\tau, p}_i)\big] \Big) \Sigma_i^{\tau, p}  \Big( I + s_\tau \rmL_{\Sigma_i^{\tau, p} }[(\bar\Psi_3-\Psi_3^\tau)(x^{\tau, p}_i, \mu^{\tau, p}_i, \Sigma^{\tau, p}_i)] \Big)$\;

            \ENDFOR
            
        \STATE Set $\tau \leftarrow \tau+ 1$ 
        \ENDWHILE
 \STATE{\bfseries Output:} $\displaystyle \rho^T = \frac1N \sum_{i=1}^N \delta_{(x^T_i, \mu^T_i, \Sigma^T_i)}$
		
	\end{algorithmic}
\end{algorithm}

\subsection{Kernel and Its Gradient for Implementation} \label{sec:kernel}
We use the kernel $k$ given by:
\[
k\left((x,\mu,\Sigma), (\bar x,\bar\mu,\bar\Sigma)\right) \Let \exp \left(-\alpha \| x-\bar x\|_2^2\right)  \exp \left(-\beta \| \mu-\bar\mu\|_2^2 \right)
\exp \left( - \gamma \|\Sigma - \bar\Sigma\|_2^2 \right),
\]
where $\alpha$, $\beta$ and $\gamma$ are parameters (bandwidth) of the kernel. We note that this kernel is characteristic by \cite[Theorem 4]{JMLR:v18:17-492}.
Then its standard Euclidean gradient is given by
\begin{align*}
\nabla_{(x,\mu,\Sigma)} k\left((x,\mu,\Sigma), (\bar x,\bar\mu,\bar\Sigma)\right) 
=-2 \exp \left(-\alpha \| x-\bar x\|_2^2-\beta \| \mu-\bar\mu\|_2^2 - \gamma \|\Sigma- \bar\Sigma\|_2^2 \right) 
\begin{bmatrix} \alpha (x-\bar x)\\ \beta (\mu-\bar\mu) \\ \gamma (\Sigma-\bar\Sigma)\end{bmatrix}. 
\end{align*}
Thus by plugging into formula \eqref{metric-gradient-dataset}, we obtain 
\begin{align*}\label{Gaussian-kernel-case} 
&\widetilde\nabla_d^1 k\left((x,\mu,\Sigma), (\bar x,\bar\mu,\bar\Sigma)\right)\\
&\quad =-2 \exp \left(-\alpha \| x-\bar x\|_2^2  - \beta \|\mu - \bar\mu\|_2^2 - \gamma \|\Sigma - \bar\Sigma\|_2^2 \right) \begin{bmatrix}  
\alpha (x-\bar x) \\ \beta (\mu - \bar\mu) \\
2\gamma (2\Sigma^2-\Sigma\bar\Sigma -\bar\Sigma\Sigma)\end{bmatrix}. 
\end{align*}
That is, 
\begin{align*}
 &\nabla_x k\left((x,\mu,\Sigma), (\bar x,\bar\mu,\bar\Sigma)\right) 
=-2 \exp \left(-\alpha \| x-\bar x\|_2^2  - \beta \|\mu -\bar\mu\|_2^2 - \gamma \|\Sigma - \bar\Sigma\|_2^2 \right) \alpha (x-\bar x), \\
&\nabla_\mu k\left((x,\mu,\Sigma), (\bar x,\bar\mu,\bar\Sigma)\right) 
=-2 \exp \left(-\alpha \| x-\bar x\|_2^2  - \beta \|\mu - \bar\mu\|_2^2 - \gamma \|\Sigma - \bar\Sigma\|_2^2 \right) \beta (\mu - \bar\mu), \\
& \nabla_{\mbb B}^1 k\left((x,\mu,\Sigma), (\bar x,\bar\mu,\bar\Sigma)\right)\\
&\quad = -2 \exp \left(-\alpha \| x-\bar x\|_2^2  - \beta \|\mu - \bar\mu\|_2^2 - \gamma \|\Sigma -\bar\Sigma\|_2^2 \right) 
2\gamma (2\Sigma^2-\Sigma\bar\Sigma -\bar\Sigma\Sigma).
\end{align*}

\subsection{Label Projection} \label{sec:label-projection}

We here propose an approach to recover new samples in the feature-label space from an empirical distribution in the feature-Gaussian space. Consider that after $T$ iterations of the gradient algorithms, we arrive at a distribution $\rho^T = \frac1N \sum_{i=1}^N \delta_{(x^T_i, \mu^T_i, \Sigma^T_i)}$. We would like to recover a distribution $\nu^T \in \mc P(\mc X \times \mc Y)$ which is induced by $\rho^\tau$. As such, we would like to find a distribution $\nu^T$ of the form
\[
    \nu^T = \frac1N \sum_{i=1}^N \delta_{(x_i^T, y_i^T)}
\]
which corresponds to new target samples $(x_i^T, y_i^T)_{i=1}^N$. Moreover, we are interested in recovering labels within the target domain. To this end, let $\mc Y_{\rm target} = \{ y \in \mc Y: \exists j \in [M] \text{ such that } \bar y_j = y\}$ be the set of labels in the target dataset, and remind that for any $y \in \mc Y_{\rm target}$, $(\bar \mu_{y}, \bar \Sigma_y) \in \R^n \times \mS_+^n$ is the mean vector and the covariance matrix of the distribution of $\phi(X)$ given $Y = y$. Notice that the mean-covariance embeddings $(\bar \mu_{y}, \bar \Sigma_{y})$ for $y \in \mc Y_{\rm target}$ depend only on the target domain data, and it does not depend on the incumbent distribution $\rho^T$, nor does it depend on the source dataset. Moreover, we can also compute $\bar N_y$ as the number of samples from the target dataset with label $y$.

Because $(\bar \mu_{y}, \bar \Sigma_{y})$ is readily computed, we can consider $(\bar \mu_{y}, \bar \Sigma_{y})$ as the centroids and simply find an assignment that minimizes the sum of distances from $(\mu^T_i, \Sigma^T_i)$ to these centroids. We thus can assign each sample from $\rho^T$ to the the target labels by solving the linear program
    \be \label{eq:assignment}
        \begin{array}{cl}
            \min & \displaystyle \sum_{i = 1}^N \sum_{y \in \mc Y_{\rm target}} \theta_{iy} \sqrt{\| \mu^T_i - \bar \mu_y \|_2^2 + \mbb B(\Sigma^T_i, \bar \Sigma_y)^2}  \\
            \text{s.t.} & \displaystyle \sum_{y \in \mc Y_{\rm target}} \theta_{iy} = \frac{1}{N} \quad \forall i = 1, \ldots, N, \qquad \sum_{i=1}^N \theta_{iy} = \frac{\bar N_y}{N} \quad \forall y \in \mc Y_{\rm target}, \quad \theta \in [0, 1]^{N \times |\mc Y_{\rm target}|},
        \end{array}
    \ee
Notice that the assignment problem above does not utilize the information from the covariate $x_i^T$. Let $\theta\opt$ be the optimal solution of the above optimization problem. Then the dataset $(x_i^T, z_i^T)_{i=1}^N$ recovered from $\rho^\tau$ is
\[
        \nu^T = \frac{1}{N} \sum_{i=1}^N \delta_{(x_i^T, y_i^T)}, \qquad y_i^T = \sum_{y \in \mc Y_{\rm target}} y \mathbbm{1}(\theta_{iy}\opt = \max\{\theta_i\opt\} ) \quad \forall i = 1, \ldots, N.\]
We used the POT library to solve the label recovery problem~\eqref{eq:assignment}.

\subsection{Results on Mixture of Gaussians}\label{sec:gaussian_results}
We test our algorithm on a toy example: a mixture of Gaussian distributions to another mixture of Gaussian distributions. 

The source distribution and target distribution are:
\begin{align*}
    p_s(x)&=\frac{1}{4}\mathcal{N}\left(\begin{pmatrix}
2.0\\
-0.3
\end{pmatrix},\begin{pmatrix}
0.14&-0.00\\
-0.00& 0.22
\end{pmatrix}\right)+\frac{1}{4}\mathcal{N}\left(\begin{pmatrix}
2.0\\
0.3
\end{pmatrix},\begin{pmatrix}
0.43&0.18\\
0.18& 0.26
\end{pmatrix}\right)\\&+\frac{1}{4}\mathcal{N}\left(\begin{pmatrix}
-0.3\\
2.0
\end{pmatrix},\begin{pmatrix}
0.66&0.02\\
0.02& 0.63
\end{pmatrix}\right)+\frac{1}{4}\mathcal{N}\left(\begin{pmatrix}
0.3\\
-2.0
\end{pmatrix},\begin{pmatrix}
0.39&-0.02\\
-0.02& 0.13
\end{pmatrix}\right)\\
 p_t(x)&=\frac{1}{4}\mathcal{N}\left(\begin{pmatrix}
2.9\\
0.1
\end{pmatrix},\begin{pmatrix}
0.16&0.03\\
0.03&  0.20
\end{pmatrix}\right)+\frac{1}{4}\mathcal{N}\left(\begin{pmatrix}
0.9\\
0.5
\end{pmatrix},\begin{pmatrix}
0.22&0.16\\
0.16&  0.46
\end{pmatrix}\right)\\&+\frac{1}{4}\mathcal{N}\left(\begin{pmatrix}
0.8\\
2.2
\end{pmatrix},\begin{pmatrix}
0.63&0.02\\
0.02&  0.66
\end{pmatrix}\right)+\frac{1}{4}\mathcal{N}\left(\begin{pmatrix}
1.4\\
-1.8
\end{pmatrix},\begin{pmatrix}
0.18&0.10\\
0.10&  0.36
\end{pmatrix}\right)
\end{align*}
From each distribution, we sample 25 particles and flow the particles' positions, means, and covariance simultaneously using Alg.~\ref{alg:nonoise}. After the algorithm converges, we recover the particles' label in the feature-label space by solving problem \eqref{eq:assignment}.

\begin{figure}[ht]
\centering
\begin{minipage}{0.32\textwidth}
\centering
\includegraphics[width=\textwidth]{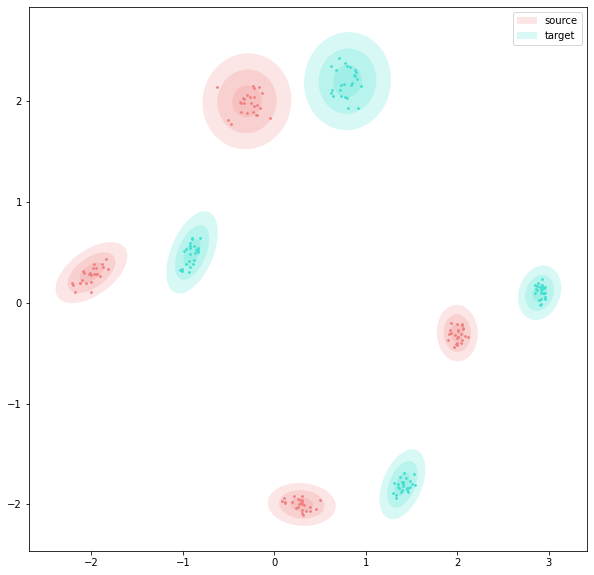}
\end{minipage}
\begin{minipage}{0.32\textwidth}
\centering
\includegraphics[width=\textwidth]{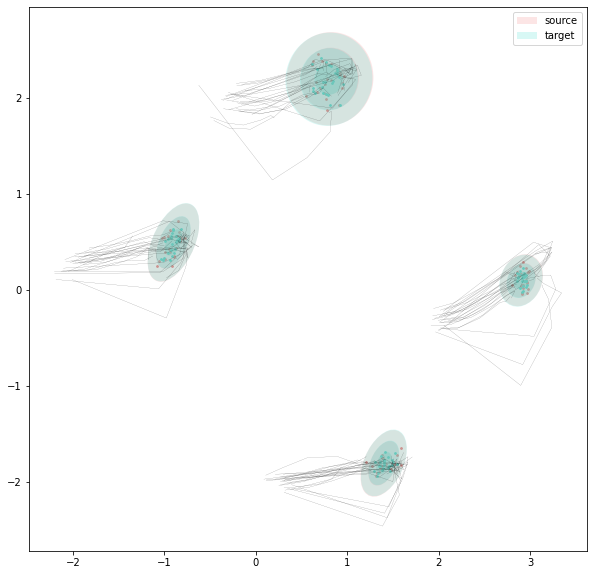}
\end{minipage}
\begin{minipage}{0.32\textwidth}
\centering
\includegraphics[width=\textwidth]{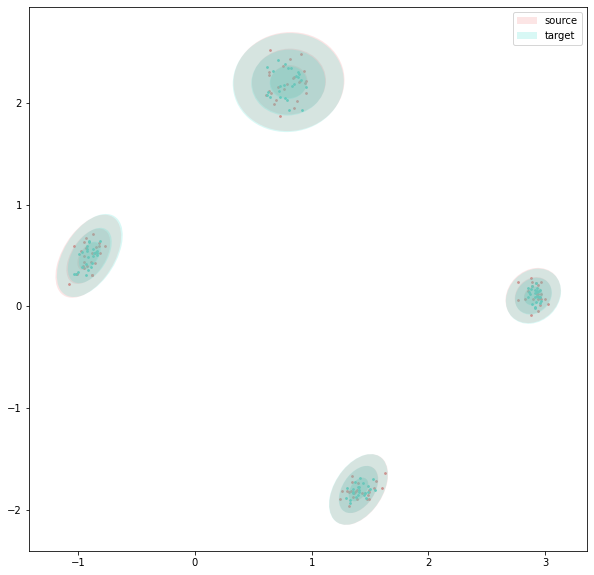}
\end{minipage}
\caption{The results of flowing a mixture of 4 Gaussian distributions to a mixture of 4 Gaussian distributions. We demonstrate the initialization (left), the trace of particles in first 200 steps (middle), and the results at step 1000 (right).}
\label{fig:4gauss}
\end{figure}
We test how our algorithm deals with flowing a mixture of 2 Gaussian distributions to a mixture of 4 Gaussian distributions. From the trace of first 200 steps, we demonstrate that each source Gaussian distribution splits into 2 Gaussian distributions.
The source distribution and target distribution are:
\begin{align*}
    p_s(x)&=\frac{1}{2}\mathcal{N}\left(\begin{pmatrix}
0.0\\
0.0
\end{pmatrix},\begin{pmatrix}
0.18&-0.24\\
-0.24& 0.70
\end{pmatrix}\right)+\frac{1}{2}\mathcal{N}\left(\begin{pmatrix}
5.8\\
0.0
\end{pmatrix},\begin{pmatrix}
0.44&0.00\\
0.00& 0.87
\end{pmatrix}\right)\\
 p_t(x)&=\frac{1}{4}\mathcal{N}\left(\begin{pmatrix}
2.0\\
0.7
\end{pmatrix},\begin{pmatrix}
0.63&-0.30\\
-0.30&  0.26
\end{pmatrix}\right)+\frac{1}{4}\mathcal{N}\left(\begin{pmatrix}
2.2\\
-0.8
\end{pmatrix},\begin{pmatrix}
0.77&-0.18\\
-0.18&  0.55
\end{pmatrix}\right)\\&+\frac{1}{4}\mathcal{N}\left(\begin{pmatrix}
7.0\\
0.8
\end{pmatrix},\begin{pmatrix}
0.63&-0.30\\
-0.30&  0.26
\end{pmatrix}\right)+\frac{1}{4}\mathcal{N}\left(\begin{pmatrix}
7.7\\
-0.8
\end{pmatrix},\begin{pmatrix}
0.77&-0.18\\
-0.18&  0.55
\end{pmatrix}\right)
\end{align*}

\begin{figure}[!ht]
\centering
\begin{minipage}{0.32\textwidth}
\centering
\includegraphics[width=\textwidth]{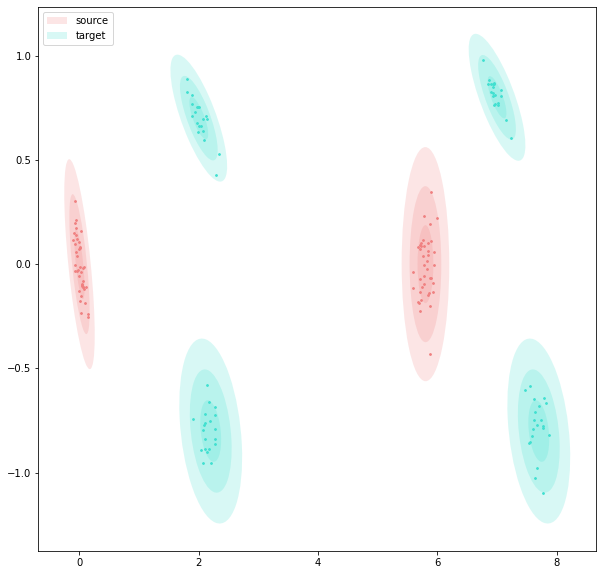}
\end{minipage}
\begin{minipage}{0.32\textwidth}
\centering
\includegraphics[width=\textwidth]{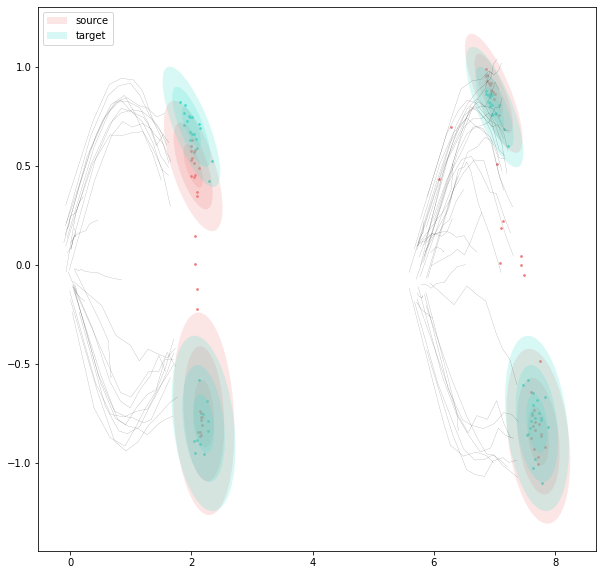}
\end{minipage}
\begin{minipage}{0.32\textwidth}
\centering
\includegraphics[width=\textwidth]{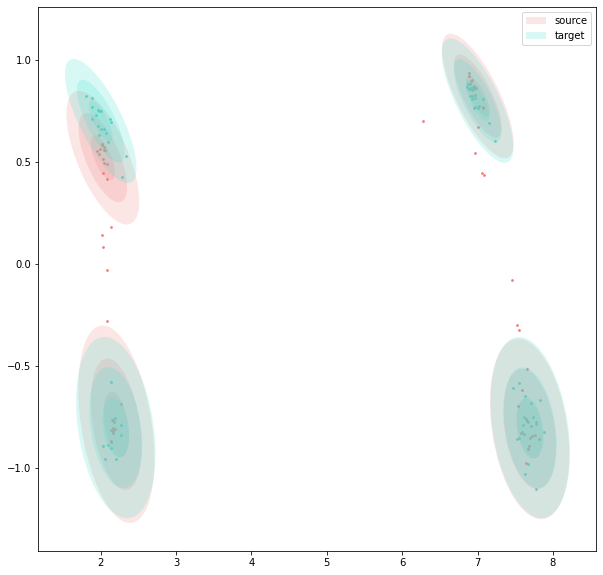}
\end{minipage}
\caption{The results of flowing a mixture of 2 Gaussian distributions to a mixture of 4 Gaussian distributions. We demonstrate the initialization (left), the trace of particles in first 200 steps (middle), and the results at step 2400 (right). We use method in Section~\ref{sec:label-projection} to relabel the flowed samples and the labels correspond to their positional belongings correctly.}
\label{fig:4gauss-part2}
\end{figure}

\newpage
\subsection{Implementation Details}\label{sec:implementation_details}
Our algorithms and experiments are implemented in PyTorch. The *NIST images are resized to $20\times20$, thus the feature space is of dimension $m = 400$. The SVHN and Tiny ImageNet images are of size $3\times 64\times 64$. The classifiers for *NIST datasets are of LeNet-5 architecture. The classifiers for SVHN and Tiny ImageNet are of a ResNet-18 network. For the experiments in Figure 3, we train the classifiers with the set of data 10 epochs with Adam optimizer and learning rate~$2\times 10^{-3}$.

When flowing images in *NIST datasets and flowing a mixture of Gaussians, we use the parameters and methods described in Table~\ref{tab:hyper}. As we tune the parameters, we notice the algorithm converges with a range of parameters and we report one setting in Table~\ref{tab:hyper}.
\begin{table*}[!ht]
    \newcommand{\mround}[1]{\round{#1}{2}}
    \newcommand{\nastar}{\multicolumn{1}{c}{\hspace{15pt}--~*}}
    \newcommand{\na}{\multicolumn{1}{c}{--}}
    \centering \small
    {
  \centering \small
\resizebox{
  \ifdim\width>\textwidth
    \textwidth
  \else
    \width
  \fi
}{!}{%
\begin{tabular}{r @{\extracolsep{5pt}} r r r r @{}}
\toprule
& \shortstack{\bf SVHN\&TIN}
& \shortstack{\bf *NIST}
& \shortstack{\bf Gaussian (Figure~\ref{fig:4gauss})}
& \shortstack{\bf Gaussian (Figure~\ref{fig:4gauss-part2})}
\\ 
\midrule
$\alpha$
& 0.002
& 0.001
& 0.3
& 0.3
\\
$\beta$
& 0.01
& 0.002
& 0.15
& 0.1
\\
$\gamma$
& 1.0
& 100
& 1.0
& 0.5
\\
initial $s_{\tau}$
& 1.0
& 0.3
& 0.05
& 0.03
\\
noise level
& 0.1
& 0.01
& 0 
& 0.1
\\
$T$
& 6000
& 150
& 2000
& 2500
\\
Optimizer
&  RMSprop~\cite{hinton2012neural}
& RMSprop
& RMSprop
& RMSprop
\\
\bottomrule
\end{tabular}
}
 \caption{Parameters and Optimizer}
  \label{tab:hyper}
    }
\end{table*}

Our method assumes images of each class form one Gaussian distribution. In reality, the data can be a mixture of Gaussian. To satisfy the Gaussianity assumption, in the preprocessing step, we use a clustering method ($k$-nearest neighbors) and pick only data from one mode for each class. As a consequence, the data used in the experiment satisfies the conditional Gaussian assumption. For example, the images of the digit 1 can have two modes: slanted left or slanted right. In this case, we can generate two labels (1L, 1R), and the methodology developed in this paper can be applied in a straightforward manner. When testing our transfer learning scheme, we apply the same clustering method on the test dataset, so our test set is within the same mode as our training set. 

We store the preprocessed data and apply dimension reduction method on the data's means and covariance matrices, so the Lyapunov equation is much faster to solve. We use the cluster's mean and covariance matrix to approximate the 1-shot and 5-shor data's mean and covariance matrix. In 1-shot learning, the covariance matrix is an identity matrix. All the code and data are available in the supplementary file. We use $k$-nearest neighbors algorithm to solve the labels of the flowed data, as it performs better with the noisy scheme.

\newpage
\subsection{Additional Results on *NIST Datasets}
We conduct additional experiments of flowing between KMNIST and FashionMNIST datasets. The results of our flows are illustrated in the same fashion as Figure 2 and are in the supplementary folder. In each subfigure, each column represents a snapshot of a certain time-step and the samples flow from the source (left) to the target (right). To check if the algorithm is converging, we compute the MMD between the source dataset and the target dataset. Figure~\ref{fig:MMD} is an example of MMD decreasing in transferring the *NIST datasets:
\begin{figure}[!ht]
\centering
\includegraphics[width=0.4\textwidth]{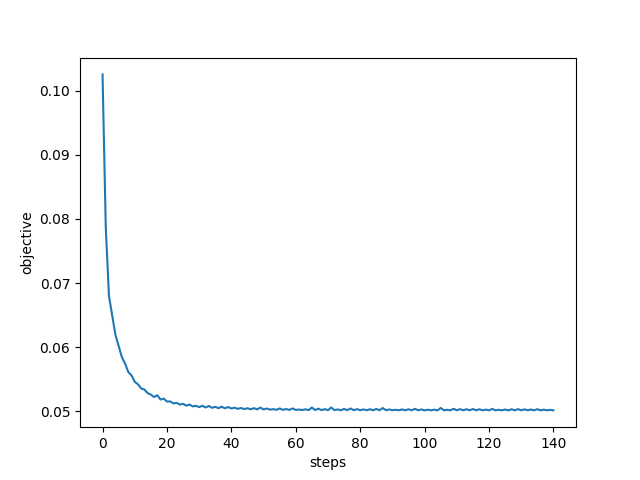}
\caption{MMD keeps decreasing as we flow the samples from the source domain to the target domain}
\label{fig:MMD}
\end{figure}

From FashionMNIST dataset to KMNIST dataset and from KMNIST dataset to FashionMNIST dataset, we also conduct transfer learning experiments. We use the same model architecture and training settings as in Fig.~\ref{fig:few_shot}. We illustrate the accuracy and error bars of the 1-shot learning and 5-shot learning in Fig.~\ref{fig:few_shot_K_F}. Our flowed samples ($S_T$) increase the accuracy of the transferred classifiers in both 1-shot and 5-shot learning.
\begin{figure}[!ht]
\centering
\begin{minipage}{\textwidth}
\centering
\includegraphics[width=0.9\textwidth]{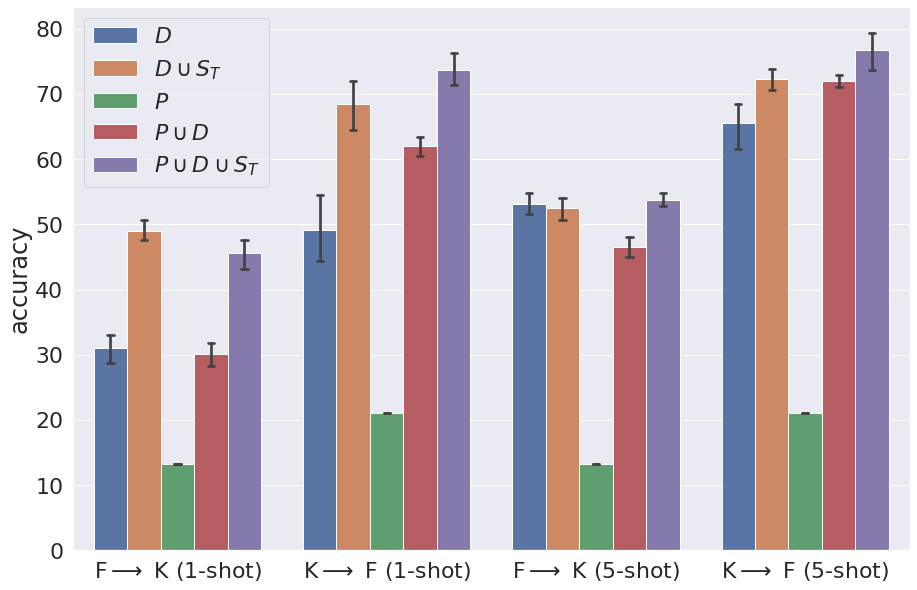}
\end{minipage}
\caption{Average target domain accuracy on the test split for transfer learning with one-shot (left) and ﬁve-shot (right). Results are taken over 10 independent replications, and the range of accuracy is displayed by the error bars.}
\label{fig:few_shot_K_F}
\end{figure}

\newpage
\subsection{Transfer Learning Results on SVHN and Tiny ImageNet}\label{sec:SVHN_TIN_supp}

We run the same transfer learning experiments as section 5.1 on SVHN and Tiny ImageNet datasets and we report the accuracy on the test split\footnote{The Tiny ImageNet dataset does not have labels for the test split, so we use the validation split.}. From the results, we can see $S_T$ always improve the accuracy.

On the higher-dimensional data, it is a lot more difficult to train a strong classifier from only 1 or 5 samples each class, so the accuracy is not so high compared to the *NIST datasets. Another reason is there are only 50 samples in each class of the validation dataset of Tiny ImageNet, so the number of test samples might be too small.

\begin{figure}[!ht]
\centering
\begin{minipage}{\textwidth}
\centering
\includegraphics[width=0.9\textwidth]{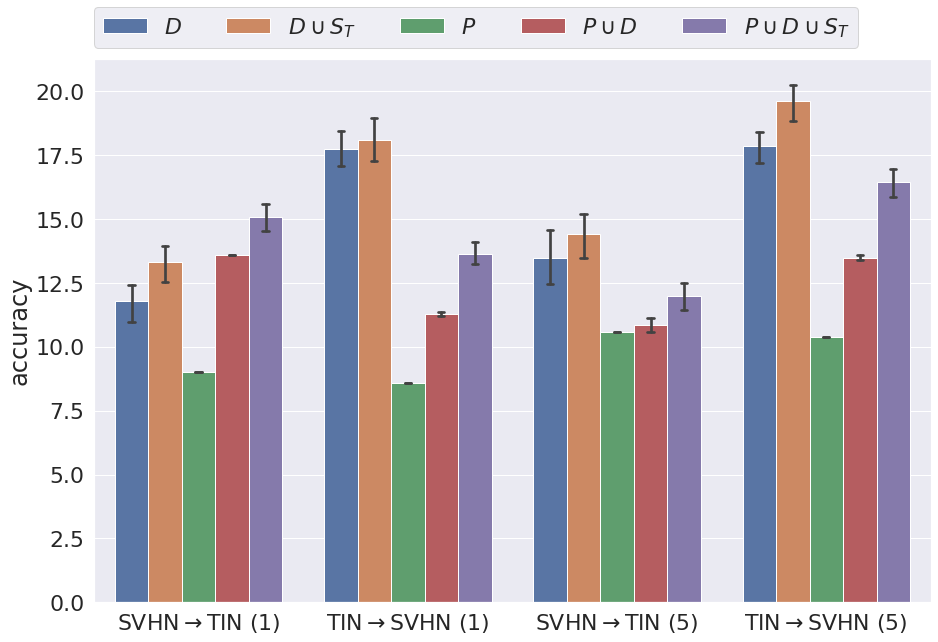}
\end{minipage}
\caption{Average target domain accuracy on the test split for transfer learning with one-shot (left) and ﬁve-shot (right). Results are taken over 10 independent replications, and the range of accuracy is displayed by the error bars. The $\bm{1}$ and $\bm{5}$ in the parenthesis mean 1-shot and 5-shot transfer learning respectively.}
\label{fig:few_shot_K_F}
\end{figure}

\newpage
\subsection{Comparison with baseline} \label{sec:comparison}
We compare with~\cite{AlvarezGF20}'s approach as the baseline. We adopted the same values of parameters (number of steps, step size) from the paper of \cite{AlvarezGF20} and experimented with different values of entropy regularization $\lambda$. Entropy regularization $\lambda$ is a hidden parameter in their code and they reported using the value of $\lambda=100$ in transfer learning experiments. Also, we experimented with different methods in their code and found that “xyaugm” gives the best qualitative gradient flow results. 

We test their algorithm using the same transfer learning setting as ours (using the same clustered data as our experiments), see  Table~\ref{tab:K2M} and  Table~\ref{tab:F2M} for results. The results show that their method doesn't improve the classification accuracy after adding the transferred data $S_T$, if we compare the accuracy $D\cup S_T$ and $P\cup D\cup S_T$ with $D$ and $P\cup D$ correspondingly.

\begin{table}[!ht]
    \centering
    \begin{tabular}{lrr}
        \toprule
        Accuracy  & $D\cup S_T$ & $P\cup D\cup S_T$ \\
        \midrule
$\lambda=0.001$   & 0.1941 & 0.2478\\
$\lambda=0.01$   & 0.1175   & 0.2998 \\
$\lambda=1.0$  & 0.1739 & 0.2951\\
$\lambda=100$   & 0.2093 & 0.3025 \\
        \bottomrule
    \end{tabular}
    \caption{KMNIST $\shortrightarrow$ MNIST}
    \label{tab:K2M}
\end{table}
\begin{table}[!ht]
    \centering
    \begin{tabular}{lrr}
        \toprule
        Accuracy  & $D\cup S_T$ & $P\cup D\cup S_T$ \\
        \midrule
$\lambda=0.001$   & 0.4488  &  0.3083 \\
$\lambda=0.01 $   & 0.3915  & 0.2897 \\
$\lambda=1.0$  & 0.5268 & 0.4627 \\
$\lambda=100  $   & 0.2917 & 0.3307 \\
        \bottomrule
    \end{tabular}
    \caption{FMNIST $\shortrightarrow$ MNIST}
    \label{tab:F2M}
\end{table}


For runtime comparison, the default device for \cite{AlvarezGF20}’s code is on the CPU. As we run their code on the GPU, the kernel crashed without giving any informative errors. Thus, we will compare our codes’ runtime per step on the CPU. While our approach takes about $0.512$ second, the approach in \cite{AlvarezGF20} requires $74.78$ seconds.

\newpage
\subsection{Comparison with Mixup Method}
We run mixup augmentation on F$\shortrightarrow$M, K$\shortrightarrow$M, M$\shortrightarrow$F, and M$\shortrightarrow$K on the same one-shot learning setting and same data as ours using the method in~\cite{zhang2017mixup} with $\alpha=0.2$ (suggested by~\cite{zhang2017mixup}). The results are in Table~\ref{tab:F2M}. When comparing $D$ with $D\cup S_T$ and $P\cup D$ with $P\cup D\cup S_T$, the added mixup samples ($S_T$) decrease the accuracy. A reason could be that there are too few examples in the target domain, thus mixup does not add much complexity to the target domain. The mixup experiments in [A] show improvements $<1\%$, which is much smaller than ours in Fig.3. 

Last, we want to emphasize that our gradient flow method aims to increase the number of samples of one distribution, so our method is a complement to augmentation methods.Both gradient flow and augmentation methods are applied in practice at the same time.

\begin{table}[!ht]
    \centering
    \begin{tabular}{lrrrr}
        \toprule
        Training Data & F$\shortrightarrow$M & K$\shortrightarrow$M & M$\shortrightarrow$F &  M$\shortrightarrow$K  \\
        \midrule
$D$ &  0.524 & 0.529 & 0.506 & 0.242\\
$D\cup S_T$   & 0.488 & 0.458 & 0.472 & 0.141\\
$P$     &    0.097 & 0.033 & 0.037 & 0.089 \\
$P\cup D$   & 0.510 & 0.543 & 0.541 &0.140 \\
$P\cup D\cup S_T$  & 0.467 & 0.363 & 0.494 & 0.120 \\
        \bottomrule
    \end{tabular}
    \caption{FMNIST $\shortrightarrow$ MNIST}
    \label{tab:F2M}
\end{table}

\newpage
\subsection{Comparison with Traditional Augmentation Methods}\label{sec:comparison_transformation}
We conduct the traditional data augmentation methods, including random rotation and Gaussian noise, on the datasets and compare the classification accuracy with our method. The random rotation is from 0 to 90 degrees. The Gaussian noise is with kernel size (5,9) and standard deviation 0.1 and 5. All the comparison results are in Table~\ref{tab:1_shot_rotation}--\ref{tab:5_shot_gaussian}.

Through this experiment, we show that using random rotation to augment the dataset does not improve accuracy much and sometimes even decreases the accuracy in all of the eight transfer learning settings. We note that ($D$, $P$, $P\cup D$) are settings without augmentation while ($D\cup S_T$, $P\cup D\cup S_T$) are settings with augmentation. Comparing with the results of our proposed gradient flow approach in Figure~\ref{fig:few_shot}, augmentation (by random rotation) has much lower performances than ours.

\begin{table}[!ht]
 \centering
\begin{tabular}{rrrrr}
\toprule
Accuracy & $F\rightarrow M$ &$K\rightarrow M$ &$M\rightarrow F$ &$M\rightarrow K$\\
\midrule
$D$   &0.535 &0.512 &0.483 &0.240 \\
$D\cup S_T$   &0.542 &0.498 & 0.489 & 0.164 \\
$P$  &0.097 &0.033 &0.037 &0.089 \\
$P\cup D$   &0.510 &0.543 & 0.541 &0.140 \\
$P\cup D \cup S_T$ &0.551 &0.581 &0.454 & 0.124\\
\bottomrule
\end{tabular}
\caption{1-shot learning augmented with random rotation}
\label{tab:1_shot_rotation}
\end{table}

\begin{table}[!ht]
\centering
\begin{tabular}{rrrrr}
\toprule
Accuracy & $F\rightarrow M$ &$K\rightarrow M$ &$M\rightarrow F$ &$M\rightarrow K$\\
\midrule
$D$  &0.788 &0.837 &0.519 &0.234\\
$D\cup S_T$   &0.569 &0.545 &0.490 &0.174 \\
$P$  &0.097 &0.033 &0.037 &0.089 \\
$P\cup D$  &0.772 &0.763 &0.581 &0.178\\
$P\cup D \cup S_T$ &0.738 &0.768 &0.593 &0.199\\
\bottomrule
\end{tabular}
\caption{5-shot learning augmented with random rotation}
\label{tab:5_shot_rotation}
\end{table}
 
\begin{table}[!ht]
\centering
\begin{tabular}{rrrrr}
\toprule
Accuracy & $F\rightarrow M$ &$K\rightarrow M$ &$M\rightarrow F$ &$M\rightarrow K$\\
\midrule
$D$  &0.524 &0.529 &0.506 &0.242\\
$D\cup S_T$  &0.512 &0.494 &0.452 &0.167 \\
$P$  &0.097 &0.033 &0.037 &0.089 \\
$P\cup D$  &0.510 &0.543 &0.541 &0.140\\
$P\cup D \cup S_T$ &0.557 &0.590 &0.454 &0.127\\
\bottomrule
\end{tabular}
\caption{1-shot learning augmented with random Gaussian noise}
\label{tab:1_shot_gaussian}
\end{table}

\begin{table}[!ht]
\centering
\begin{tabular}{rrrrr}
\toprule
Accuracy & $F\rightarrow M$ &$K\rightarrow M$ &$M\rightarrow F$ &$M\rightarrow K$\\
\midrule
$D$  &0.827 &0.824 &0.549 &0.265\\
$D\cup S_T$   &0.792 &0.803 &0.734 &0.320\\
$P$  &0.097 &0.032 &0.037 &0.089\\
$P\cup D$  &0.779 &0.764 &0.581 &0.178\\
$P\cup D \cup S_T$ &0.832 &0.797 &0.729 &0.258\\
\bottomrule
\end{tabular}
\caption{5-shot learning augmented with random Gaussian noise}
\label{tab:5_shot_gaussian}
\end{table}

We show that adding Gaussian noise improves the accuracy in some settings, for example, M$\rightarrow$F and M$\rightarrow$K in 5-shot, but decreases accuracy in many settings. Our gradient flow method improves the accuracy in all the settings when we compare the accuracy without $S_T$ and with $S_T$. 
 
Finally, we again note that those simple augmentations, for example (adding Gaussian noise or using random rotation which are designed onto the target dataset) can be used as a \textbf{complement} to the gradient flow method, which transfers data from the source dataset to the target dataset.

\newpage
\subsection{Random Gaussian Noise to MNIST}\label{appendix:random_gauss_mnist}

In this section, we conduct an experiment that transfers from random noise vectors to MNIST images. The noise vectors are independently drawn from the normal distribution. Thus, the mean of the noise vectors is a zero vector and the covariance matrix is an identity matrix. From this experiment, we want to show that our method can transfer from any distribution to the target distribution.

\begin{figure}[!ht]
\centering
\begin{minipage}{0.5\textwidth}
\centering
\includegraphics[width=\textwidth]{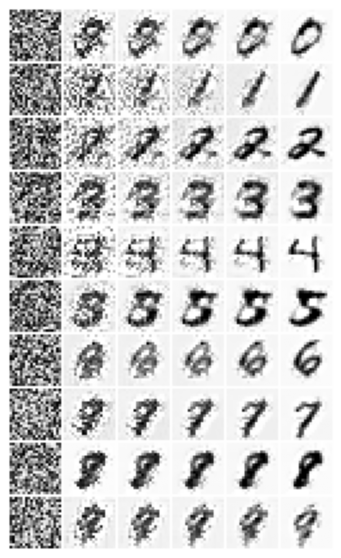}
\end{minipage}
\caption{Our method successfully transfers from random noise vectors to MNIST images.}
\label{fig:randomGauss_MNIST}
\end{figure}

\newpage
\subsection{Evaluation Metric by SSIM and Optimal Transport}
We construct a new measure that connects a human perception distance and optimal transport to measure the distance between the distribution of transferred samples and the distribution of target samples.

The target data consists of $M$ samples $(\bar x_j, \bar y_j)_{j=1}^M$, and the generated data consists of $N$ samples $(x_i^T, y_i^T)_{i=1}^N$. We use the structural similarity index measure (SSIM)~\cite{wang2004image} to measure image similarity between the generated images and the target images. SSIM measures the similarity between two grayscale images. SSIM is between 0 and 1, and $\mathrm{SSIM}=1$ means that the two images are identical. Thus, we define a SSIM-based cost to construct a matrix $C\in\mathbb{R}^{N\times M}$:
$
    C_{ij} = 1-\mathrm{SSIM}(x_i^T,\bar x_j), \text{for } i=1, \ldots, N;~j=1, \ldots, M.
$ 

We use the matrix $C$ as a ground transportation cost and solve the Earth Mover Distance problem~\cite{rubner1998metric,rubner2000earth} to obtain the Wasserstein distance between target image and generated image distributions, which are 
$
    \frac{1}{N} \sum_{i=1}^N \delta_{(x_i^T, y_i^T)}$and $\frac{1}{M} \sum_{j=1}^M \delta_{(\bar x_j, \bar y_j)}.
$
We compute the distance in 1-shot FashionMNIST to MNIST experiment and Figure~\ref{fig:OT} plots the distance of each class over 10 independent runs. The plot shows our algorithm successfully decreases the Wasserstein distance between the distribution of generated images and the target distribution during each step.
\begin{figure}[!ht]
\centering
\includegraphics[width=0.7\columnwidth]{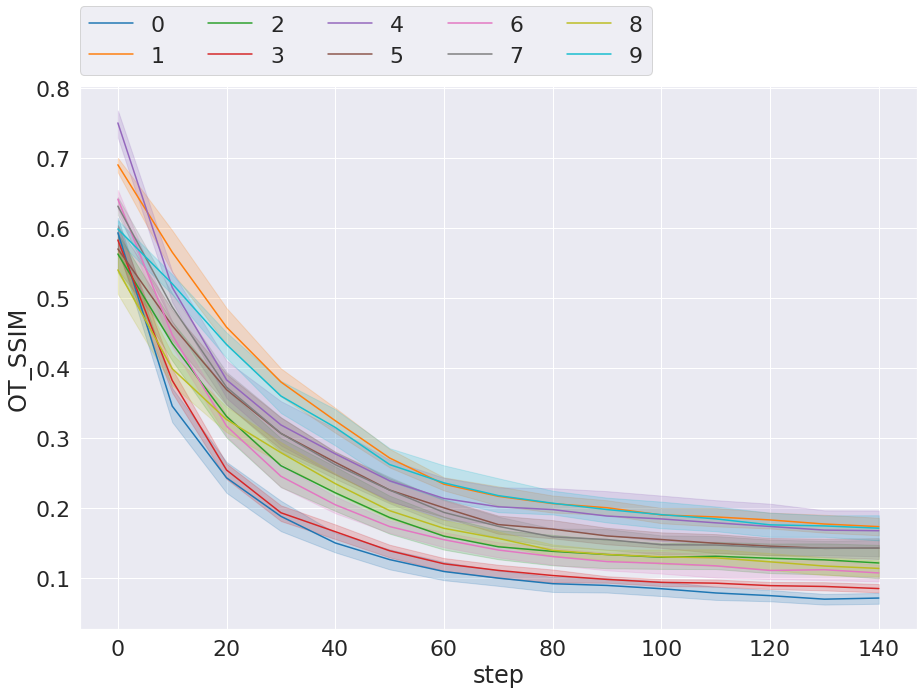}
\caption{This plots the OT\_SSIM distance between the distribution of transferred images and of the target images over 140 steps. The distance decreases as we flow the samples from the source domain to the target domain for every class.}
\label{fig:OT}
\end{figure}

\end{document}